\documentclass{article} 
\usepackage[accepted]{icml2024}

\usepackage[T1]{fontenc}
\usepackage{hyperref}
\usepackage{amsmath}
\usepackage{amssymb}
\usepackage{enumerate}
\usepackage{dsfont}
\usepackage{amsthm}
\usepackage{graphics,epsfig,psfrag}
\usepackage{epsf,pstricks,subfloat}
\usepackage{wrapfig}
\usepackage{caption}
\usepackage{subcaption}
\usepackage{multirow}
\setlength{\arrayrulewidth}{0.3mm}
\renewcommand{\arraystretch}{1.2}
\setlength{\tabcolsep}{3pt}
\usepackage{xcolor}
\definecolor{mygray}{gray}{0.95}
\usepackage{bm}
\usepackage{url}
\usepackage{psfrag}
\usepackage{amsmath}
\usepackage{amsfonts}
\usepackage{amsthm}

\usepackage{graphicx}
\usepackage{subcaption}

\def\bfzero{{\boldsymbol{0}}}
\def\bfone{{{\bf1}}}
\def\bfa{{\boldsymbol a}}
\def\bfb{{\boldsymbol b}}
\def\bfc{{\boldsymbol c}}

\def\bff{{\boldsymbol f}}

\def\bfh{{\boldsymbol h}}

\def\bfs{{\boldsymbol s}}

\def\bfw{{\boldsymbol w}}
\def\bfx{{\boldsymbol x}}

\def\bfz{{\boldsymbol z}}

\def\bfA{{\boldsymbol A}}
\def\bfB{{\boldsymbol B}}

\def\bfE{{\boldsymbol E}}

\def\bfI{{\boldsymbol I}}
\def\bfJ{{\boldsymbol J}}

\def\bfP{{\boldsymbol P}}

\def\bfR{{\boldsymbol R}}

\def\bfU{{\boldsymbol U}}

\def\bfW{{\boldsymbol W}}
\def\bfX{{\boldsymbol X}}

\def\bfZ{{\boldsymbol Z}}

\def\hkt{b^{(t)}_{\ell,k}}
\def\cJ{\mathcal{J}}

\newcommand{\bfal}{\boldsymbol{\alpha}}

\newtheorem{theorem}{Theorem}

\newtheorem{lemma}{Lemma}
\newtheorem{definition}{Definition}
\newtheorem{ass}{Assumption}

\usepackage{wrapfig}

\newcommand{\Revise}[1]{{\color{black}#1}}

%

\begin{document}
\twocolumn[
\icmltitle{SF-DQN: Provable Knowledge Transfer using Successor Feature for Deep Reinforcement Learning}



\icmlsetsymbol{equal}{*}

\begin{icmlauthorlist}
\icmlauthor{Shuai Zhang}{yyy}
\icmlauthor{Heshan Devaka Fernando}{comp}
\icmlauthor{Miao Liu}{sch}
\icmlauthor{Keerthiram Murugesan}{sch}
\icmlauthor{Songtao Lu}{sch}
\icmlauthor{Pin-Yu Chen}{sch}
\icmlauthor{Tianyi Chen}{comp}
\icmlauthor{Meng Wang}{comp}
\end{icmlauthorlist}

\icmlaffiliation{yyy}{Department of Data Science, New Jersey Institute of Technology, Newark, NJ}
\icmlaffiliation{comp}{Rensselaer Polytechnic Institute, Troy, NY.}
\icmlaffiliation{sch}{IBM Research}

\icmlcorrespondingauthor{Shuai Zhang}{sz457@njit.edu}
\icmlcorrespondingauthor{Meng Wang}{wangm7@rpi.edu}

\icmlkeywords{Machine Learning, ICML}

\vskip 0.3in
]



\printAffiliationsAndNotice{\icmlEqualContribution} 

\begin{abstract}
This paper studies the transfer reinforcement learning (RL) problem where multiple RL problems have different reward functions but share the same underlying transition dynamics. In this setting, the Q-function of each RL problem (task) can be decomposed into a successor feature (SF) and a reward mapping: the former characterizes the transition dynamics, and the latter characterizes the task-specific reward function.
This Q-function decomposition, coupled with a policy improvement operator known as generalized policy improvement (GPI), reduces the sample complexity of finding the optimal Q-function, and thus the SF \& GPI framework exhibits promising empirical performance compared to traditional RL methods like Q-learning.
However, its theoretical foundations remain largely unestablished, especially when learning the successor features using deep neural networks (SF-DQN). This paper studies the provable knowledge transfer using SFs-DQN in transfer RL problems.  
We establish the first convergence analysis with provable generalization guarantees for SF-DQN with GPI. The theory reveals that SF-DQN with GPI outperforms conventional RL approaches, such as deep Q-network, in terms of both faster convergence rate and better generalization. Numerical experiments on real and synthetic RL tasks support the superior performance of SF-DQN \& GPI, aligning with our theoretical findings.

\end{abstract}

\section{Introduction}
In reinforcement learning (RL), the goal is to train an agent to perform a task within an environment in a desirable manner by allowing the agent to interact with the environment. Here, the agent is guided towards the desirable behavior by the rewards, and the optimal policy is derived from a learned value function (Q-function) in selecting the best actions to maximize the immediate and future rewards. This framework can effectively capture a wide array of real-world applications, such as gaming \cite{mnih2013playing,silver2017mastering}, robotics \cite{KIPI18}, autonomous vehicles \cite{SSS16,SAR18}, healthcare \cite{CNDP20}, and natural language processing \cite{TXCW18}.
However, RL agents often need numerous interactions with the environment to manage complex tasks, especially when RL is equipped with deep neural networks (DNNs). For example, AlphaGo \cite{silver2017mastering} required 29 million matches and 5000 TPUs at a cost exceeding \$35 million, which is time-consuming and memory-intensive. 
Nevertheless, many complex real-world problems can naturally {decompose into} multiple interrelated sub-problems, {all} sharing the same environmental dynamics \cite{sutton1999between, bacon2017option, kulkarni2016hierarchical}. In such scenarios, it becomes highly advantageous for an agent to harness knowledge acquired from previous tasks to enhance its performance in tackling new but related challenges. This practice of leveraging knowledge from one task to improve performance in others is known as transfer learning \cite{lazaric2012transfer,taylor2009transfer,barreto2017successor}.


This paper focuses on {an RL setting with learning multiple tasks}, where each task is associated with a different reward function but shares the same environment. This setting naturally arises in many real-world applications such as robotics \cite{yu2020meta}. 
We consider exploring the knowledge transfer among multiple tasks via the successor feature (SF) framework \cite{barreto2017successor} which disentangles the environment dynamic from the reward function at an incremental computational cost.
The SF framework is derived from successor representation (SR) \cite{dayan1993improving} by introducing the value function approximation. Specifically, SR \cite{dayan1993improving} decouples the value function into a future state occupancy measure and a reward mapping. Here, the future state occupancy characterizes the transition dynamics of the environment, and the reward mapping characterizes the reward function of the task. SF is a natural application of SR in solving value function approximation. 
Furthermore, \cite{barreto2017successor} propose a generalization of the classic policy improvement, termed generalized policy improvement (GPI), enabling smooth knowledge transfer across learned policies. 
In contrast to traditional policy improvement, which typically considers only a single policy, Generalized Policy Improvement (GPI) operates by maintaining a set of policies, each associated with a distinct skill the agent has acquired. This approach enables the agent to switch among these policies based on the current state or task requirements, providing a flexible and adaptive framework for decision-making.
Empirical findings presented in \cite{barreto2017successor} highlight the superior transfer performance of SF \& GPI in deep RL when compared to conventional methods like Deep Q-Networks (DQNs). Subsequent works further justified the improved performance of SF in subgoal identification \cite{kulkarni2016deep} and real-world robot navigation \cite{zhang2017deep}.



\textbf{Focus of this paper.} While performance guarantees of SF-based learning are provided in the simple tabular setting \cite{barreto2017successor, barreto2018transfer}, less is known for such approaches in the widely used function approximation setting, especially for non-linear models including DNNs. In this context, 
this paper aims to close this gap by providing theoretical guarantees for SF learning in the context of DNNs. Our objective is to explore the convergence and generalization analysis of SF when paired with DNN approximation. We also seek to delineate the conditions under which SF learning can offer more effective knowledge transfer among tasks when contrasted with classical deep reinforcement learning (DRL) approaches.


\textbf{Contributions.} 
This paper presents the first convergence analysis with generalization guarantees for successor feature learning with deep neural network approximation (SF-DQN).
This paper focuses on estimating the optimal Q-value function through the successor feature decomposition, where the successor feature decomposition component is approximated through a deep neural network. The paper offers a comprehensive analysis of the convergence of deep Q-networks with successor feature decomposition and provides insights into the improved performance of the learned Q-value function derived from successor feature decomposition. The key contributions are as follows:

\textbf{(C1) The convergence analysis of the proposed SF-DQN  to the optimal Q-function with generalization guarantees.} 
By decomposing the reward into a linear combination of the transition feature and reward mapping, we demonstrate that the optimal Q-function can be learned by alternately updating the reward mapping and the successor feature using the collected data in online RL, where the corresponding successor feature is parameterized by a deep neural network. The learned Q-function converges to the optimal Q-function with generalization guarantees at a rate of $1/T$, where $T$ is the number of iterations in updating transition features and reward mappings.

\textbf{(C2) The theoretical characterization of enhanced performance by leveraging knowledge from previous tasks through GPI.}
This paper characterizes the convergence rate with generalization guarantees in transfer RL utilizing GPI. The convergence rate accelerates following the degree of correlation between the source and target tasks.

\textbf{(C3) The theoretical characterization of the superior transfer learning performance with SF-DQN over non-representation learning approach DQNs.} 
This paper quantifies the transfer learning ability of SF-DQN and DQN algorithms by evaluating their generalization error when transferring knowledge from one task to another. Our results indicate that SF-DQN achieves improved generalization compared to DQN, demonstrating the superiority of SF-DQN in transfer RL. 

\subsection{Related Works}
\textbf{Successor features in RL.} In pioneering works, \cite{dayan1993improving} introduced the concept of SR, demonstrating that the value function can be decomposed into a reward mapping and a state representation that measures the future state occupancy from a given state, with learning feasibility proof in tabular settings.
Subsequently,
\cite{barreto2017successor} extended SR from three perspectives: (1) the feature domain of SR is extended from states to state-action pairs, known as SF; (2) DNNs are deployed as function approximators to represent the SF and reward mappings; (3) GPI algorithm is introduced to accelerate policy transfer for multi-tasks.  Furthermore, 
\cite{kulkarni2016deep, zhang2017deep} apply SF learning with DNN-based schemes to subgoal identification \cite{kulkarni2016deep} and robot navigation \cite{zhang2017deep}. However, only \cite{barreto2017successor, barreto2018transfer} provided transfer guarantees for Q-learning with SF and GPI for the tabular case under the assumption that the Q-function from the source task is well-estimated. However, to the best of our knowledge, none of works have provided any theoretical guarantees of SF in the function approximation with neural networks. In addition, instead of assuming that the Q-function of the source task is well estimated, our paper offers both convergence analysis and sample complexity for successor feature learning in both the source task training stage and transfer learning stages.
We refer readers to a comprehensive comparison of reinforcement learning transfer using Successor Features, as detailed in \cite{zhu2023transfer}. 

\textbf{RL with neural networks.}
Recent advancements of theoretical analysis in RL with neural network approximation mainly include the Bellman Eluder dimension \cite{JKALS17, RV13}, Neural Tangent Kernel (NTK) \cite{YJWWM20, CYLW19, XG20, DLMW20}, and Besov regularity \cite{Su19, JCWZ22, NGV22}. However, each of these frameworks has its own limitations.
The Eluder dimension exhibits exponential growth even for shallow neural networks \cite{DYM21}, making it challenging to characterize sample complexity in real-world applications of DRL.
The NTK framework linearizes DNNs to bypass the non-convexity derived from the non-linear activation function in neural networks. Nevertheless, it requires using computationally inefficient, extremely wide neural networks \cite{YJWWM20}. Moreover, the NTK approach falls short in explaining the advantages of utilizing non-linear neural networks over linear function approximation \cite{LVC22, FWXY20}.
The Besov space framework \cite{JCWZ22, NGV22, LVC22, FWXY20} requires sparsity on neural networks and makes the impractical assumption that the algorithm can effectively identify the global optimum, which is unfeasible for non-convex optimization involving NNs.

\textbf{Theory of generalization in deep learning.}
The theory of generalization in deep learning has been extensively developed in supervised learning, where labeled data is available throughout training. 
Generalization in learned models necessitates low training error and small generalization gap. However, in DNNs, training errors and generalization gaps are analyzed separately due to their non-convex nature. 
To ensure bounded generalization, it is common to focus on \textit{one-hidden-layer} neural networks \cite{SS18} in convergence analysis.
Existing theoretical analysis tools in supervised learning with generalization guarantees draw heavily from various frameworks, including the Neural Tangent Kernel (NTK) framework \cite{JGH18, DZPS18, LBNSSPS18}, model recovery techniques \cite{ZSJB17, GLM17, BJW19, SJL18, ZWXL20}, and the analysis of structured data \cite{LL18, SWL22, BG21, AL22, KWLS21, WL21, ZWCLL23,LWLC23,chowdhury2023patch}.


\section{Preliminaries}
{In this paper, we address the learning problem involving multiple tasks $\{\mathcal{T}_i\}_{i=1}^n$ and aim to find the optimal policy $\pi_i^\star$ for each task $\mathcal{T}_i$. We begin by presenting the preliminaries for a single task and then elaborate on our algorithm for learning with multiple tasks in the following section.} 

\textbf{Markov decision process and Q-learning.} The Markov decision process (MDP) is defined as a tuple $(\mathcal{S}, \mathcal{A}, \mathcal{P}, r, \gamma)$, where $\mathcal{S}$ is the state space and $\mathcal{A}$ is the set of possible actions. The transition operator $\mathcal{P}: \mathcal{S} \times \mathcal{A} \rightarrow \Delta(\mathcal{S})$ gives the probability of transitioning from the current state $\bfs$ and action $a$ to the next state $\bfs'$. The function $r: \mathcal{S} \times \mathcal{A}\times \mathcal{S}   \rightarrow [-R_{\max}, R_{\max}]$ measures the reward for a given state-action pair. The discount factor $\gamma \in [0, 1)$ determines the significance of future rewards.

{For the $i$-th task,}
the goal of the agent is to find the optimal policy $\pi_i^\star$ with $a_t=\pi_i^\star(\bfs_t)$ at each time step $t$. The aim is to {maximize} the expected discounted sum of reward as $\Revise{\sum_{t=0}^\infty \gamma^t\cdot r_i(\bfs_t,a_t,\bfs_{t+1})}$, {where $r_i$ denotes the reward function for the $i$-th task.} 
For any state-action pair $(\bfs,a)$, we define the action-value  function $Q_i^\pi$  given a policy $\pi$ as 
\begin{equation*}\label{eqn: Q}
\footnotesize
    \begin{split}
        Q_i^\pi(\bfs,a) =&  \mathbb{E}_{\pi,\mathcal{P}}\big[\textstyle\sum_{t=0}^\infty \gamma^t r_i(\bfs_t,a_t,\bfs_{t+1})\mid \bfs_0=\bfs,a_0=a\big].
    \end{split}
\end{equation*}
The optimal $Q$-function, denoted as $Q_i^{\pi^\star}$ or $Q_i^\star$, satisfies
\begin{equation}\label{eqn:b2}
\footnotesize
\begin{split}
Q_i^{\star}(\bfs,a)  :=& \max_{\pi} Q_i^{\pi}(\bfs,a)\\
    =& \mathbb{E}_{\bfs'|\bfs,a}~ r_i(\bfs,a,\bfs^\prime) + \gamma \max_{a'}Q_i^{\pi^\star}(\bfs',a'),
\end{split}
\end{equation}
where \eqref{eqn:b2} is also known as the Bellman equation.
Through the optimal action-value function $Q_i^{\star}$, the agent can derive the optimal policy \cite{Qlearning,su18} following  
\begin{equation}\label{eqn: policy}
{\pi_i^\star(\bfs) =\arg\max_a Q_i^\star(\bfs,a).}    
\end{equation} 
\textbf{Deep Q-networks (DQNs).}
The DQN utilizes a DNN parameterized with weights $\omega$, i.e.,
$Q_i(\bfs,a;\omega): \mathbb{R}^d\rightarrow \mathbb{R}$ for the $i$-th task, to approximate the optimal Q-value function $Q_i^\star$ in \eqref{eqn:b2}. Specifically, given  input feature $\bfx:= \bfx(\bfs,a)$, the output of the $L$-hidden-layer DNN is defined as 
\begin{equation}\label{eqn: DQN}
    Q_i(\bfs,a;\omega) := \omega_{L+1}^\top/K\cdot  \sigma\big(\omega_L^\top\cdots \sigma(\omega_1^\top\bfx )\big),
\end{equation} 
where $\sigma(\cdot)$ is the ReLU activation function, i.e., $\sigma(z) = \max\{0, z\}$.

\textbf{Successor feature.}
    {For $i$-the task,} suppose the expected one-step reward associated with the transition $(\bfs,a,\bfs^\prime)$ can be computed as 
    \begin{equation}\label{eqn:ass_SF}
        {r_i(\bfs,a,\bfs^\prime) = \boldsymbol{\phi}(\bfs,a,\bfs^\prime)^\top \bfw_i^\star, \textit{  with  } \boldsymbol{\phi}, \bfw_i^\star  \in \mathbb{R}^d,}
    \end{equation}
    {where $\phi$ remains the same for all the tasks.}
With the reward function in \eqref{eqn:ass_SF}, the Q-value function in \eqref{eqn: Q} is reformulated
\begin{equation}\label{eqn: SF}
\begin{split}
    &Q_i^\pi(\bfs,a)\\
    =&~\mathbb{E}_{\pi,\mathcal{P}}\big[\textstyle\sum_{t=0}^\infty \gamma^t \boldsymbol{\phi}(\bfs_t,a_t,\bfs_{t+1})\mid\bfs_0,a_0\big]^\top \bfw_i^\star\\
    :=&~\psi_i^\pi(\bfs,a)^\top\bfw_i^\star.
\end{split}
\end{equation}
\vspace{-1mm}
Then, the optimal Q function satisfies
\begin{equation}\label{eqn: SF_Q}
\begin{split}
        &Q_i^{\star}(\bfs,a) \\
    =& \mathbb{E}_{\pi_i^\star,\mathcal{P}}\big[\textstyle\sum_{i=0}^\infty \gamma^i \boldsymbol{\phi}(\bfs_i,a_i,\bfs_{i+1})\mid\bfs_0,a_0\big]^\top \bfw_i^\star\\
    :=&~\psi_i^\star(\bfs,a)^\top\bfw_i^\star.
\end{split}
\end{equation}

\section{Problem Formulation and Algorithm} 

\textbf{Problem formulation.}
 Without loss of generality, the data is assumed to be collected from the tasks in the order of $\mathcal{T}_1$ to $\mathcal{T}_n$  during the learning process.  The goal is to utilize collected data for the current task, e.g., $\mathcal{T}_j$, and the learned knowledge from previous tasks $\{\mathcal{T}_i\}_{i=1}^{j-1}$ to derive the optimal policy $\pi_j^\star$ for the current $\mathcal{T}_j$. These tasks share the same environment dynamic but the reward function changes across the task as shown in \eqref{eqn:ass_SF}. For each task $\mathcal{T}_i$, we denote its reward as
\begin{equation}
    r_i = \boldsymbol{\phi} \cdot \bfw^\star_{i}, \quad \textit{with} \quad\|\boldsymbol{\phi}\|_2\le \phi_{\max},
\end{equation}
where $\boldsymbol{\phi}$ is the transition feature across all the tasks and $\bfw^\star_i$ is the reward mapping.

{From \eqref{eqn: SF_Q}, the learning of optimal Q-function for the $i$-th task is decomposed as two sub-tasks: learning SF $\psi_i^\star(\bfs,a)$ and learning reward  $\bfw_i^\star$.}

\textbf{Reward mapping.} To find the optimal $\bfw_i^\star$, we utilize the information from $\boldsymbol{\phi}(\bfs,a,\bfs^\prime)$ and $r_i(\bfs,a,\bfs^\prime)$.
The value of $\bfw^\star_i$ can be obtained by solving the optimization problem 
\begin{equation}\label{eqn: optimization_w}
    \textstyle \min_{\bfw_i}:~~\|r_i - \boldsymbol{\phi}\cdot \bfw_i\|_2.
\end{equation}
\textbf{Successor features.}  {We use $\psi_i^{\pi}$ to denote the successor feature for the $i$-th task, and $\psi_i^{\pi}$ satisfies}
\begin{equation}\label{eqn: sfdqn}
    \psi_i^{\pi}(\bfs,a) =  \mathbb{E}_{\bfs'|\bfs,a} ~ \boldsymbol{\phi}(\bfs,a,\bfs^\prime) +\gamma\cdot \psi^{\pi}_i\big(\bfs',\pi(\bfs^\prime)\big).
\end{equation}
The expression given by \eqref{eqn: sfdqn} aligns perfectly with the Bellman equation in \eqref{eqn:b2}, where $\boldsymbol{\phi}$ acts as the reward. Therefore, following DQNs, we utilize
 a function $\psi(\bfs,a)$ parameterized using the DNN as
\begin{equation}
    \psi_i(\Theta_i;\bfs,a) = H\big(\Theta_i;\bfx(\bfs,a)\big),
\end{equation}
where $\bfx:\mathcal{S}\times\mathcal{A}\longrightarrow \mathbb{R}^d$ is the feature mapping of the state-action pair. Without loss of generality, we assume $|\bfx(\bfs,a)|\le 1$. Then, finding $\psi^{\star}$ is to minimize the mean squared Bellman error (MSBE)
\begin{equation}\label{eqn:MSBE}
\begin{split}
    \min_{\Theta_i}: &f(\Theta_i) := \mathbb{E}_{(\bfs,a)\sim\pi^\star} \Big[ \mathbb{E}_{\bfs'\mid \bfs,a}~ \psi_i(\Theta_i;\bfs,a) \\
    &-\boldsymbol{\phi}(\bfs,a,\bfs^\prime) - \gamma \cdot \psi_i\big(\Theta_i;\bfs',\pi^{\star}(s')\big) \Big]^2.
\end{split}
\end{equation}
It is worth mentioning that although \eqref{eqn:MSBE} and \eqref{eqn: optimization_w} appear to be independent of each other, the update of $\bfw_i$ does affect the update of $\psi_i$ through the shift in data distribution. The collected data is estimated based on the policy depending on the current estimated values of $\psi_i$ and $\bfw_i$, which shifts the distribution of the collected data away from $\pi^\star_i$. This, in turn, leads to a bias depending on the value of $\bfw_i$ in the calculation of the gradient of $\Theta_i$ in minimizing \eqref{eqn:MSBE}.

\textbf{Generalized policy improvement (GPI).} 
Suppose we have acquired knowledge about the optimal successor features for the previous $n$ tasks, and we use $\hat{\psi}_i$ to denote the estimated successor feature function for the $i$-th task with $i\in[n]$. Now, let's consider a new task $\mathcal{T}_{n+1}$ with the reward function defined as $r_{n+1} = \boldsymbol{\phi} \bfw^\star_{n+1}$. Instead of training from scratch, we can leverage the knowledge acquired from previous tasks to improve our approach. We achieve this by deriving the policy as follows
\begin{equation}
\pi(a|\bfs) = \arg\max_{a}\max_{1\le i\le n+1} \hat{\psi}_i(\bfs,a)^\top\bfw^\star_{n+1}.
\end{equation}
This strategy tends to yield better performance than relying solely on $\hat{\psi}_{n+1}(\bfs,a)^\top\bfw^\star_{n+1}$, especially when $\hat{\psi}_{n+1}$ has not yet converged to the optimal successor feature $\psi_{n+1}^\star$ during the early learning stage, while some task is closely related to the new tasks, i.e., some $\bfw_{i}^\star$ is close to $\bfw_{n+1}^\star$.
This policy improvement operator is derived from Bellman's policy improvement theorem \cite{bertsekas1996neuro} and \eqref{eqn:b2}. 
{When the reward is fixed across different policies, e.g., $\{\pi_i\}_{i=1}^n$, and given that the optimal Q-function represents the maximum across the entire policy space, the maximum of multiple Q-functions corresponding to different policies, $\max_{1\le i \le n} Q^{\pi_n}$, is expected to be closer to $Q^\star$ than any individual Q-function, $Q^{\pi_i}$. In this paper, the parameter $\phi$ in learning the successor feature is analogous to the reward in learning the Q-function. As $\phi$ remains the same for different tasks, this analogy has inspired the utilization of GPI in our setting, even where the rewards change.}

\subsection{Successor feature Deep Q-Network}
The goal is to find $\bfw_i$ and $\Theta_i$ by solving the optimization problems in \eqref{eqn: optimization_w} and \eqref{eqn:MSBE} for each task sequentially, and the optimization problems are solved by mini-batch stochastic gradient descent (mini-batch SGD). 
Algorithm \ref{Alg} contains two loops, and the outer loop number $n$ is the number of tasks and inner loop number $T$ is the maximum number {of} iterations in solving \eqref{eqn: optimization_w} and \eqref{eqn:MSBE} for each task. At the beginning, we initialize the parameters as $\Theta^{(0)}$ and $\bfw_i^{(0)}$ for task $i$ with $1\le i \le n$.
In  $t$-th inner loop for the $i$-th task, let $\bfs_t$ be the current state, and $\theta_c$ be the learned weights for task $c$. The agent selects and executes actions according to
\begin{equation}\label{eqn: policy_update}
    a = \pi_\beta(\textstyle\max_{c\in[i]}\psi(\Theta_c;\bfs_t,a)^\top \bfw_i^{(t)}),
\end{equation}
 where $\pi_\beta(Q(\bfs_t,a))$ is the policy operator based on the function $Q(\bfs_t,a)$, e.g., greedy, $\varepsilon$-greedy, and softmax. For example, if $\pi_\beta(\cdot)$ stands for greedy policy, then $ a  = \arg\max_a \textstyle\max_{c\in[i]}\psi(\Theta_c;\bfs_t,a)^\top \bfw_i^{(t)}$. 
 The collected data are stored in a replay buffer with size $N$. 
 Then, we sample a mini-batch of samples from the replay buffer and denote the samples as $\mathcal{D}_t$. 

\begin{algorithm}[h]
\caption{Successor Feature Deep Q-Network (SF-DQN)}\label{Alg}
    \begin{algorithmic}
        \STATE \textbf{Input}: Number of iterations $T$, and experience replay buffer size $N$, step size $\{\eta_t,\kappa_t\}_{t=1}^T$. 
        \STATE  Initialize $\{\Theta_i^{(0)}\}_{i=1}^n$ and $\{\bfw_i^{(0)}\}_{i=1}^{n}$.
        \FOR{Task $i = 1 , 2, \cdots ,n$}
        \FOR{$t = 0, 1, 2, \cdots, T-1$}
        
        \STATE Collect data and store in the experience replay buffer $\mathcal{D}_t$ following a behavior policy $\pi_t$ in \eqref{eqn: policy_update}.
        \STATE Perform gradient descent steps on $\Theta_i^{(t)}$ and $\bfw^{(t)}$ following \eqref{eqn: gradient_descent_both}.
        \ENDFOR
        \STATE Return $Q_i = \psi_i(\Theta_i^{(T)})^\top\bfw_i^{(T)}$ for $i=1,2,\cdots, n$.
        \ENDFOR
    \end{algorithmic}
\end{algorithm}

Next, denote the gradient as 
        $g_\bfw(\bfs_m,\bfa_m,
            \bfs_m';\bfw^{(t)})= \big( \boldsymbol{\phi}(\bfs_m,a_m,\bfs^\prime_m)^\top\bfw^{(t)} 
            -r(\bfs_m,a_m,\bfs_m^\prime) \big)\cdot \boldsymbol{\phi}(\bfs_m,a_m,\bfs_m^\prime)$
and $g_\Theta(\bfs_m,\bfa_m,
            \bfs_m';\Theta^{(t)})=
             \big(\psi(\Theta_i^{(t)};\bfs_m,a_m) - 
             \boldsymbol{\phi}(\bfs_m,a_m,\bfs_m^\prime) 
             -\gamma \cdot \psi(\Theta_i^{(t)};\bfs_m',a') \big)
             \cdot \nabla_{\Theta_i}\psi(\Theta_i^{(t)};\bfs_m,a_m)$, 
we update the current weights using a mini-batch gradient descent algorithm following
 \begin{equation}\label{eqn: gradient_descent_both}
        \begin{split}
        \bfw^{(t+1)}  
            &= \bfw^{(t)} -\kappa_t\cdot\sum_{m\in\mathcal{D}_t} g_\bfw(\bfs_m,\bfa_m,
            \bfs_m';\bfw^{(t)})
            \\
             \Theta_i^{(t+1)}&= \Theta_i^{(t)} - \eta_t\cdot\sum_{m\in\mathcal{D}_t} 
             g_\Theta(\bfs_m,\bfa_m,
            \bfs_m';\Theta^{(t)}),
        \end{split}
        \end{equation}
        where $\eta_t$ and $\kappa_t$ are the step sizes, and $a^\prime=\arg\max_a\max_{c\in[i]}\psi(\Theta_c;\bfs_m^\prime,a)^\top \bfw_i^{(t)}$. The gradient for $\Theta_i^{(t)}$, as $g_\Theta(\bfs_m,\bfa_m,
            \bfs_m';\Theta^{(t)})$ in \eqref{eqn: gradient_descent_both},  can be viewed as  the gradient of 
    \begin{equation}\label{eqn:MSBE2}
    \begin{split}
               \sum_{m\in \mathcal{D}_t} \big[\psi_i(\Theta_i;&\bfs_m,a_m) -\boldsymbol{\phi}(\bfs_m,a_m,\bfs_m') -\\ &\mathbb{E}_{\bfs_m'\mid\bfs_m,a_m}\max_{a_m'}\psi_i(\Theta_i^{(t)};\bfs_m',a_m') \big]^2,
    \end{split}
    \end{equation}
    which is the approximation to \eqref{eqn:MSBE} via replacing  $\max_{a'} \psi_i^\star$ with $\max_{a'} \psi_i(\Theta^{(t)}_i)$.

\section{Theoretical Results}\label{sec: theorem}
\subsection{Summary of Major Theoretical Findings}
To the best of our knowledge, our results (formally presented in Section \ref{subsec:main_theorem}) provide the first theoretical characterization for SF-DQN with GPI, including a comparison with the conventional Q-learning under commonly used assumptions.
Before formally presenting them, we summarize the highlights as follows.

\begin{table}[h]

\centering
    \caption{Important Notations}
    \vspace{-2mm}    
    \begin{tabular}{c|p{6.6cm}}  
    \hline
    \hline
    $~K~$ & Number of neurons in the hidden layer.\\
    \hline
    $~L~$ & Number of the hidden layers.\\
    \hline
    $~d~$ & Dimension of the feature mapping of $(\bfs,a)$.\\
    \hline
    $~T~$ & Number of iterations.\\
    \hline
    $\Theta_i^\star$, $\bfw_i^\star$ & The global optimal to \eqref{eqn:MSBE} and \eqref{eqn: optimization_w} for $i$-th task.\\
    \hline
    $N$ & Replay buffer size.\\
    \hline
    $\rho_1$ & The smallest eigenvalue of ${\mathbb{E} \nabla \boldsymbol{\psi}_i(\Theta_i^\star) \nabla \boldsymbol{\psi}_i(\Theta_i^\star)^\top}$.\\
    \hline
    $\rho_2$ & The smallest eigenvalue of $\mathbb{E}\boldsymbol{\phi}(\bfs,a)\boldsymbol{\phi}(\bfs,a)^\top$.\\
    \hline
    $q^\star$& A variable indicates the relevance between current and previous tasks.\\
    \hline
    $C^\star$ & A constant related to the distribution shift between the behavior and optimal policies.\\
    \hline
    \end{tabular}
    \label{table:problem_formulation}
\end{table}

\textbf{(T1) Learned Q-function converges to the optimal Q-function at a rate of $1/T$ with generalization guarantees.}
We demonstrate that the learned parameters $\Theta_i^{(T)}$ and $\bfw_i^{(T)}$ converge towards their respective ground truths, $\Theta_i^\star$ and $\bfw_i^\star$, indicating that SF-DQN converges to optimal Q-function at a rate of $1/T$ as depicted in \eqref{eqn: thm1_Q} (Theorem \ref{Thm1}).
Moreover, the generalization error of the learned Q-function scales on the order of $\frac{\|\bfw^{(0)}-\bfw^\star\|_2}{1-\gamma - \Omega(N^{-1/2}) - \Omega(C^\star)}\cdot \frac{1}{T}$. By employing a large replay buffer $N$, minimizing the data distribution shift factor $C^\star$, and improving the estimation of task-specific reward weights $\bfw^{(0)}$, we can achieve a lower generalization error. 

\textbf{(T2)  GPI enhances the generalization of the learned model with respect to the task relevance factor $q^\star$.} 
We demonstrate that, when GPI is employed, the learned parameters exhibit improved estimation error with a reduction rate at $\frac{1-c}{1-c\cdot q^\star}$ for some constant $c<1$ (Theorem \ref{Thm2}), where $q^\star$ is defined in \eqref{eqn: q_star}. From \eqref{eqn: q_star}, it is clear that $q^\star$ decreases as the distances between task-specific reward weights, denoted as $\|\bfw_j^\star -\bfw_i^\star\|_2$, become smaller. This indicates a close relationship between the previous tasks and the current task, resulting in a smaller $q^\star$ and, consequently, a larger improvement through the usage of GPI.


\textbf{(T3) SF-DQN achieves a superior performance over conventional DQN by a factor of $\gamma$ for the estimation error of the optimal Q-function.} 
When we directly transfer the learned knowledge of the Q-function to a new task without any additional training, our results demonstrate that SF-DQN always outperforms its conventional counterpart, DQN, by a factor of $\gamma$ (Theorems \ref{Thm3} and \ref{Thm4}). 
As $\gamma$ approaches one, we raise the emphasis on long-term rewards, making the accumulated error derived from the incorrect Q-function more significant. Consequently, this leads to reduced transferability between the source tasks and the target task. Conversely, when $\gamma$ is small, indicating substantial potential for transfer learning between the source and target tasks, we observe a more significant improvement when using SF-DQN.

\subsection{Assumptions}\label{sec:ass}

In this section, we propose the assumptions in deriving our major theoretical results. These assumptions are commonly used in existing RL and neural network learning theories.
\begin{ass}\label{ass1}
There exists a deep neural network with weights $\Theta^\star_i$ {such that it minimizes \eqref{eqn:MSBE} for the $i$-th task, i.e, $f(\Theta_i^\star) = 0$}. 
\end{ass}
Assumption \ref{ass1} assumes a substantial expressive power of the deep neural network, allowing it to effectively represent $\psi^\star$ in the presence of an unknown ground truth $\Theta^\star$. 

\begin{ass}\label{ass2}
     At any fixed outer iteration $t$, the behavior policy $\pi_t$ and its corresponding transition kernel $\mathcal{P}_t$ satisfy
     \begin{equation}
     \begin{split}
        \textstyle\sup_{\bfs\in\mathcal{S}}~ d_{TV}\big( \mathbb{P}(\bfs_\tau\in \cdot)\mid \bfs_0 = \bfs), \mathcal{P}_t \big) \le \lambda \nu^\tau,
     \end{split}
     \end{equation}
     for some constant $\lambda>0$ and $\nu\in(0,1)$, where $d_{TV}$ denotes the total-variation distance.
\end{ass}
Assumption \ref{ass2} assumes the Markov chain $\{\bfs_n,a_n,\bfs_{n+1}\}$ induced by the behavior policy is uniformly ergodic with the corresponding invariant measure $\mathcal{P}_t$. {This assumption is standard in Q-learning \cite{XG20, ZWL19,BRS18}, where the data are non-i.i.d.}


\begin{ass}\label{ass3}
Let $Q^\star$ and $Q_t$ be the optimal and estimated Q-function, respectively. We assume the greedy policy $\pi_t$, i.e., $\pi_t(a|\bfs) = \arg\max_{a'} Q_t(\bfs,a')$, satisfies 
\begin{equation}\label{eqn:C1}
\vspace{-1mm}
\begin{split}
        &\big| \pi_t(a|\bfs) -\pi^\star(a|\bfs) \Big|\\
        \le&~C\cdot \textstyle\sup_{(\bfs,a)} \|Q_t(\bfs,a) - Q^\star(\bfs,a)\|_F,
\end{split}
\end{equation}
where $C$ is a positive constant. 
Equivalently, when $Q_t = \psi(\Theta_i^{(t)})^\top\bfw_i^{(t)}$, we have 
\begin{equation}\label{eqn:C}
\vspace{-1mm}
\begin{split}
        &\big| \pi_t(a|\bfs) -\pi^\star(a|\bfs) \Big|\\ 
    \le& ~C\cdot\big(\|\Theta_i^{(t)}-\Theta_i^\star\|_2 + \|\bfw_i^{(t)}-\bfw_i^\star\|_2\big).
\end{split}
\end{equation}
\end{ass}
Assumption \ref{ass3} indicates the policy difference between the behavior policy and the optimal policy. Moreover, \eqref{eqn:C} can be considered as a more relaxed variant of condition (2) in \cite{ZWL19} as \eqref{eqn:C} only requires the holding for the distance of an arbitrary function from the ground truth, rather than the distance between two arbitrary functions.

\subsection{Main Theoretical Findings}\label{subsec:main_theorem}
\subsubsection{Convergence analysis of SF-DQN}
Theorem \ref{Thm1} demonstrates that the learned Q function converges to the optimal Q function when using SF-DQN for Task 1. Notably, GPI is not employed for the initial task, as we lack prior knowledge about the environment.  Specifically, given conditions (i) the initial weights for $\psi$ are close to the ground truth as shown in  \eqref{mainthm_initial}, (ii) the replay buffer is large enough as in \eqref{mainthm_sample}, {and (iii) the distribution shift between the behavior policy and optimal policy is bounded (as shown in Remark)}, the learned parameters from Algorithm \eqref{Alg} for task 1, $\psi_1(\Theta_1)$ and $\bfw_1$, converge to the ground truth $\psi^\star_1$ and $\bfw^\star_1$ as in  \eqref{eqn: theta_w}, indicating that the learned Q function converges to the optimal Q function as in \eqref{eqn: thm1_Q}.

\begin{theorem}[Convergence analysis of SF-DQN without GPI]\label{Thm1}
{Suppose the assumptions in Section \ref{sec:ass} hold and the initial neuron weights of the SF of task $1$ satisfy 
\vspace{-1mm}
\begin{equation}\label{mainthm_initial}
\begin{split}
    \frac{\|\Theta_1^{(0)}-\Theta_1^\star\|_F}{\|\Theta_1^\star\|_F}  
    \le ~(1-c_N)\cdot \frac{\rho_1}{K^{2}},
\end{split}
\end{equation}
for some positive $c_N$.
When we select the step size as $\eta_t = \frac{1}{t+1}$, and the size of the replay buffer is 
\vspace{-1mm}
    \begin{equation}\label{mainthm_sample}
        N = \Omega(c_N^{-2}\rho_1^{-1}\cdot K^2\cdot L^2  d \log q).
\end{equation}
 Then, with the high probability of at least $1-q^{-d}$, the weights $\W[T]$ from Algorithm \ref{Alg} satisfy  
    \begin{equation}\label{eqn: theta_w}
    \begin{gathered}
        \|\Theta_1^{(T)} -\Theta_1^\star\|_2 \le  \frac{C_1 + C^\star\cdot \|\bfw_1^{(0)} -\bfw_1^\star\|_2}{(1-\gamma-c_N)(1-\gamma)\rho_1-C^\star}\cdot \frac{\log^2 T}{T},\\
        \|\bfw_1^{(T)} - \bfw_1^\star\|_2 \le \Big(1-\frac{\rho_2}{\boldsymbol{\phi}_{\max}}\Big)^T \|\bfw_1^{(0)} -\bfw_1^\star \|_2,
    \end{gathered}
    \end{equation}
        where $C_1 = (2+\gamma)\cdot R_{\max} $, and $C^\star =  |\mathcal{A}|\cdot {R_{\max}}\cdot (1+\log_\nu \lambda^{-1}+\frac{1}{1-\nu})\cdot C$. Specifically, the learned Q-function satisfies}
    \begin{equation}\label{eqn: thm1_Q}
    \begin{split}
        &\max_{\bfs,a} \Big|  \Revise{Q_1^{(T)}} -Q^\star\Big|\\
        \le&~~\frac{C_1 + \|\bfw_1^{(0)} -\bfw_1^\star\|_2}{(1-\gamma-c_N)(1-\gamma)\rho_1-1}\cdot  \frac{\log^2 T}{T}\\
        &+ \frac{\|\bfw_1^{(0)} -\bfw_1^\star \|_2 R_{\max}}{1-\gamma} \Big(1-\frac{\rho_2}{\boldsymbol{\phi}_{\max}}\Big)^T. 
    \end{split} 
    \end{equation}
\end{theorem}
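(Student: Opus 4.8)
The plan is to decouple the two coupled updates and analyze them separately, stitching them together through the distribution-shift coupling term. The reward-mapping update for $\bfw_1$ is the easy half: the objective \eqref{eqn: optimization_w} is a fixed least-squares problem whose population Hessian is $\mathbb{E}[\boldsymbol{\phi}\boldsymbol{\phi}^\top]$, with smallest eigenvalue $\rho_2$ and operator norm bounded in terms of $\boldsymbol{\phi}_{\max}$. Crucially, since $\boldsymbol{\phi}$ and $r_1$ do not depend on $\Theta_1$, this subproblem is genuinely strongly convex and its minimizer is independent of the successor-feature iterates, so gradient descent with a suitable step size $\kappa_t$ contracts the error geometrically by the factor $1-\rho_2/\boldsymbol{\phi}_{\max}$ per step. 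This yields the second bound in \eqref{eqn: theta_w} directly.

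The successor-feature update for $\Theta_1$ is the substantive part. First I would use Assumption \ref{ass1} together with the warm-start condition \eqref{mainthm_initial} to show that the population MSBE \eqref{eqn:MSBE} is locally one-point strongly convex around $\Theta_1^\star$ with curvature controlled by $\rho_1$, so that the iterates remain in a basin where the local geometry is benign. Next I would compare the stochastic gradient $g_\Theta$ actually used in \eqref{eqn: gradient_descent_both} against the idealized population gradient of \eqref{eqn:MSBE}, isolating three error sources: (i) the bootstrapping error from replacing $\max_{a'}\boldsymbol{\psi}_1^\star$ by $\max_{a'}\boldsymbol{\psi}_1(\Theta_1^{(t)})$ as in \eqref{eqn:MSBE2}, which inherits a $\gamma$-contraction from the Bellman operator and produces the $(1-\gamma)$ factors; (ii) the finite-sample concentration error from the mini-batch drawn out of the replay buffer, bounded on the order of $\Omega(N^{-1/2})$ with probability at least $1-q^{-d}$ once $N$ meets the sample complexity \eqref{mainthm_sample}, after a uniform-ergodicity argument invoking Assumption \ref{ass2} to tame the non-i.i.d.\ Markov data; and (iii) the distribution-shift bias, which by Assumption \ref{ass3} is proportional to $\|\Theta_1^{(t)}-\Theta_1^\star\|_2+\|\bfw_1^{(t)}-\bfw_1^\star\|_2$ and, after summing the mixing tail through the factor $1+\log_\nu\lambda^{-1}+\tfrac{1}{1-\nu}$, gives rise to the constant $C^\star$.

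Combining these pieces produces a one-step recursion of the schematic form $\|\Theta_1^{(t+1)}-\Theta_1^\star\|_2 \le (1-\tfrac{\beta}{t+1})\|\Theta_1^{(t)}-\Theta_1^\star\|_2 + \tfrac{C_1+C^\star\|\bfw_1^{(t)}-\bfw_1^\star\|_2}{t+1}$ after substituting $\eta_t=1/(t+1)$, with effective contraction coefficient $\beta=(1-\gamma-c_N)(1-\gamma)\rho_1-C^\star$. The decisive requirement is $\beta>0$, which is precisely the bounded-distribution-shift hypothesis (iii) in the statement and explains the denominator of \eqref{eqn: theta_w}. Because $\|\bfw_1^{(t)}-\bfw_1^\star\|_2$ decays geometrically from the first half of the argument, the coupling term is summable and does not spoil the rate; unrolling the recursion by the standard induction for $1/t$-step-size descent on a strongly convex objective then produces the $\log^2 T/T$ bound, with the numerator collecting $C_1$ and $C^\star\|\bfw_1^{(0)}-\bfw_1^\star\|_2$.

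Finally, for the Q-function guarantee \eqref{eqn: thm1_Q} I would split
\begin{equation*}
Q_1^{(T)}-Q^\star = \boldsymbol{\psi}_1(\Theta_1^{(T)})^\top(\bfw_1^{(T)}-\bfw_1^\star) + \big(\boldsymbol{\psi}_1(\Theta_1^{(T)})-\boldsymbol{\psi}_1^\star\big)^\top\bfw_1^\star,
\end{equation*}
bounding the first term by $\|\bfw_1^{(T)}-\bfw_1^\star\|_2$ times the $\tfrac{R_{\max}}{1-\gamma}$-scale magnitude of $\boldsymbol{\psi}_1$, and the second by the Lipschitz constant of $\boldsymbol{\psi}_1$ in $\Theta_1$ times $\|\Theta_1^{(T)}-\Theta_1^\star\|_2$; substituting the two bounds from \eqref{eqn: theta_w} recovers the two summands of \eqref{eqn: thm1_Q}. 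I expect the main obstacle to be certifying the local strong convexity of the neural-network MSBE and proving the iterates never leave the good basin, since this must be done simultaneously with the moving bootstrap target and the $\bfw_1$-dependent distribution shift, rather than for a static supervised loss.
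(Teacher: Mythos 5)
Your proposal follows essentially the same route as the paper's proof: a geometric-decay analysis of the strongly convex $\bfw_1$ subproblem (the paper's Lemma~\ref{lemma:convergence_of_w}), local positive-definiteness of the Hessian of the population MSBE controlled by $\rho_1$ (Lemma~\ref{Lemma: second_order_derivative}), a decomposition of the gradient error into concentration, Markov-mixing, distribution-shift, and bootstrap terms (Lemma~\ref{Lemma: first_order_derivative}'s $\bfI_1$--$\bfI_4$, which you group into three items), and the resulting one-step recursion with $\eta_t=1/(t+1)$ unrolled by telescoping to get the $\log^2 T/T$ rate, with the final Q-function bound obtained by the standard bilinear splitting. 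The approach and the key ingredients match; no substantive gap.
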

\textbf{Remark 1} (\textbf{upper bound of $C$}):  To ensure the meaningfulness of the upper bound in \eqref{eqn: thm1_Q}, specifically that the denominator needs to be greater than $0$, $C$ has an explicit upper bound as $C \leq \frac{(1-\gamma-c_N)(1-\gamma)\rho_1}{|\mathcal{A}|\cdot R_{\max}}$. Considering the definition of $C$ in Assumption \ref{ass3}, it implies that the difference between the behavior policy and the optimal policy is bounded. In other words, the fraction of bad tuples \footnote{A “bad tuple” refers to the data $(\bfs, a)$ collected based on behavior policy $a = \pi_t(\bfs)$ that differs from the optimal policy $a = \pi^\star(\bfs)$. Intuitively, we can clearly see that the fraction of ``bad tuples'' is positively related to the distance between the behavior policy and the optimal policy (the motivation of Assumption \ref{ass3}). In fact, similar assumptions can be found in many theoretical frameworks when analyzing Q-learning with function approximation \cite{ZWL19} to guarantee that there is a certain fraction of collected data that is useful for estimating the ground-truth Q-value.} in the collected samples is constrained.

\textbf{Remark 2} (\textbf{Initialization}): Note that \eqref{mainthm_initial} requires a good initialization. Firstly, it is still a state-of-the-art practice in analyzing Q-learning via deep neural network approximation. Secondly, according to the NTK theory \cite{JGH18}, there always exist some good local minima, which is almost as good as the global minima, near some random initialization. Finally, such a good initialization can also be adapted from some pre-trained models.

\subsubsection{Improved performance with GPI.}
Theorem \ref{Thm2} establishes that the estimated Q function converges towards the optimal solution with the implementation of GPI as shown in \eqref{eqn: thm2}, leveraging the prior knowledge learned from previous tasks. The enhanced performance associated with GPI finds its expression as $q^\star$ defined in \eqref{eqn: q_star}. Notably, when tasks $i$ and $j$ exhibit a higher degree of correlation, meaning that the distance between $\bfw^\star_i$ and $\bfw^\star_j$ for tasks $i$ and $j$ is smaller, we can observe a more substantial enhancement by employing GPI in transferring knowledge from task $i$ to task $j$ from \eqref{eqn: thm2}. 
\begin{theorem}[Convergence analysis of SF-DQN with GPI]\label{Thm2}
    Let us define 
    \begin{equation}\label{eqn: q_star}
        q^\star = \frac{(1+\gamma)R_{\max}}{1-\gamma}\cdot \frac{ \min_{1\le i\le j-1}~\|\bfw_i^\star - \bfw_j^\star\|_2  }{\|\Theta_j^{(0)}-\Theta_j^\star\|_2}.
    \end{equation}
    Then, with the probability of at least $1-q^{-d}$, the neuron weights $\Theta^{(T)}_j$ for the $j$-th task  satisfy
    \begin{equation}\label{eqn: thm2}
    \begin{split}
        &\|\Theta^{(T)}_j -\Theta^\star_j\|_2\\ 
        \le& \frac{C_1 + C^\star\|\bfw_j^{(0)} -\bfw_j^\star\|_2}{(1-\gamma-c_N)(1-\gamma)\rho_1-\min\{q^\star,1\}\cdot C^\star} \cdot  \frac{\log^2 T}{T}.
    \end{split}
    \end{equation}
\end{theorem}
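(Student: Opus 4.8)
The plan is to reuse the one-step recursion established in the proof of Theorem~\ref{Thm1} and to isolate the single place where GPI changes the analysis, namely the bias induced by the distribution shift between the behavior policy and $\pi^\star$. Recall that in the proof of Theorem~\ref{Thm1} the update \eqref{eqn: gradient_descent_both} for $\Theta_j$ yields, after taking expectations and using the local curvature of the MSBE near $\Theta_j^\star$ (governed by $\rho_1$), a contraction of the form
\[
\|\Theta_j^{(t+1)}-\Theta_j^\star\|_2 \le \big(1-\eta_t (1-\gamma-c_N)(1-\gamma)\rho_1\big)\|\Theta_j^{(t)}-\Theta_j^\star\|_2 + \eta_t\,(\mathrm{bias}_t) + \eta_t\,C_1,
\]
where $\mathrm{bias}_t$ is the error from sampling under $\pi_t$ rather than $\pi^\star$. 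By Assumption~\ref{ass3} together with the mixing bound of Assumption~\ref{ass2}, this bias is controlled by $C^\star\cdot\sup_{\bfs,a}|Q_t(\bfs,a)-Q_j^\star(\bfs,a)|$, where $Q_t$ is the Q-function driving the behavior policy. The entire difference between Theorems~\ref{Thm1} and~\ref{Thm2} therefore reduces to replacing the single-task estimate by the GPI estimate $Q_t^{\mathrm{GPI}}(\bfs,a)=\max_{c\in[j]}\psi(\Theta_c;\bfs,a)^\top\bfw_j^{(t)}$ inside this bias term and re-bounding it; when only the current-task argument is kept, one recovers exactly the denominator $(1-\gamma-c_N)(1-\gamma)\rho_1-C^\star$ of Theorem~\ref{Thm1}.

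The crux is a sharper bound on $\sup_{\bfs,a}|Q_t^{\mathrm{GPI}}-Q_j^\star|$. Because the tasks are trained sequentially, Theorem~\ref{Thm1} applied to each $i<j$ guarantees $\Theta_i^{(T)}\approx\Theta_i^\star$, so the maximum defining $Q_t^{\mathrm{GPI}}$ contains the term $\psi(\Theta_i^\star;\bfs,a)^\top\bfw_j^{(t)}$, which is the value of the source-optimal policy $\pi_i^\star$ under the current reward, i.e.\ $Q_j^{\pi_i^\star}$. Since $Q_j^\star=\max_\pi Q_j^\pi\ge Q_j^{\pi_i^\star}$, the GPI maximum is squeezed between $Q_j^{\pi_i^\star}$ and $Q_j^\star$, and the successor-feature policy-improvement inequality bounds the gap by $\tfrac{(1+\gamma)R_{\max}}{1-\gamma}\|\bfw_i^\star-\bfw_j^\star\|_2$. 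Minimizing over $i$ while retaining the current-task term (whose error is $O(\|\Theta_j^{(t)}-\Theta_j^\star\|_2)$) gives
\[
\sup_{\bfs,a}|Q_t^{\mathrm{GPI}}-Q_j^\star| \;\le\; \min\Big\{\,O\big(\|\Theta_j^{(t)}-\Theta_j^\star\|_2\big),\;\tfrac{(1+\gamma)R_{\max}}{1-\gamma}\min_{i<j}\|\bfw_i^\star-\bfw_j^\star\|_2\Big\}.
\]
By the definition \eqref{eqn: q_star}, the second argument equals $q^\star\cdot\|\Theta_j^{(0)}-\Theta_j^\star\|_2$, so the bias feedback becomes $C^\star\min\{\|\Theta_j^{(t)}-\Theta_j^\star\|_2,\;q^\star\|\Theta_j^{(0)}-\Theta_j^\star\|_2\}$.

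Feeding this back into the recursion and maintaining the inductive invariant $\|\Theta_j^{(t)}-\Theta_j^\star\|_2\le\|\Theta_j^{(0)}-\Theta_j^\star\|_2$, the $\min$ collapses so that the bias contributes at most $\min\{q^\star,1\}\cdot C^\star\cdot\|\Theta_j^{(t)}-\Theta_j^\star\|_2$ to the contraction. This converts the effective contraction constant to $(1-\gamma-c_N)(1-\gamma)\rho_1-\min\{q^\star,1\}C^\star$, and applying the same telescoping of the recursion with $\eta_t=1/(t+1)$ as in Theorem~\ref{Thm1} then produces the claimed $\log^2 T/T$ rate and the denominator in \eqref{eqn: thm2}. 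I expect the main obstacle to be precisely this last collapse: the GPI bias is a \emph{constant} fixed by the reward-weight distance, whereas the single-task bias \emph{shrinks} with $\|\Theta_j^{(t)}-\Theta_j^\star\|_2$, so the two arguments of the $\min$ dominate in different phases of training, and a careful phase-splitting (or a potential-function argument) is needed to justify absorbing it uniformly into the factor $\min\{q^\star,1\}$. A secondary technicality is that the source successor features are only estimated, so the residual errors $\|\Theta_i^{(T)}-\Theta_i^\star\|_2$ from earlier tasks must be shown to be of lower order and folded into $C_1$.
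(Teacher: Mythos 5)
Your route coincides with the paper's: Theorem \ref{Thm2} is obtained by rerunning the one-step recursion \eqref{eqn: thm1_thm2} from the proof of Theorem \ref{Thm1} and changing only the behavior-policy bias term ($\bfI_3$ in Lemma \ref{Lemma: first_order_derivative}, which enters through $C_t$ and Assumption \ref{ass3}); with GPI the distance of the value estimate to $Q_j^\star$ is controlled by the task-relevance bound of Lemma \ref{lemma: DQN_difference} (the $\tfrac{2\gamma}{1-\gamma}\max_{\bfs,a}|r_i-r_j|$ inequality combined with $|r_i-r_j|\le\phi_{\max}\|\bfw_i^\star-\bfw_j^\star\|_2$ and the overshoot correction from the estimated source features), which is exactly the constant normalized by $\|\Theta_j^{(0)}-\Theta_j^\star\|_2$ in the definition of $q^\star$. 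Your secondary concern about the residual errors $\|\Theta_i^{(T)}-\Theta_i^\star\|_2$ of the source tasks is also handled as you suggest, through the $O(1/T)$ term of Theorem \ref{Thm3}.

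The gap is the step you flag yourself, and it is the step that actually delivers the theorem. The collapse $\min\{\|\Theta_j^{(t)}-\Theta_j^\star\|_2,\,q^\star\|\Theta_j^{(0)}-\Theta_j^\star\|_2\}\le\min\{q^\star,1\}\cdot\|\Theta_j^{(t)}-\Theta_j^\star\|_2$ is false whenever $q^\star<1$ and $\|\Theta_j^{(t)}-\Theta_j^\star\|_2<\|\Theta_j^{(0)}-\Theta_j^\star\|_2$: both arguments of the min are then at least $q^\star\|\Theta_j^{(t)}-\Theta_j^\star\|_2$, so the inequality holds in the reverse direction. Concretely, once the iterate error drops below $q^\star\|\Theta_j^{(0)}-\Theta_j^\star\|_2$ the min is attained by the single-task argument and the per-step bias reverts to $C^\star\|\Theta_j^{(t)}-\Theta_j^\star\|_2$ with no $q^\star$ gain; conversely, in the early phase the bias is a constant, and a constant additive bias pushed through the telescoping of $(t+1)\|\Theta_j^{(t+1)}-\Theta_j^\star\|_2-t\|\Theta_j^{(t)}-\Theta_j^\star\|_2$ with $\eta_t\propto 1/(t+1)$ accumulates to a non-vanishing floor rather than a $\log^2T/T$ term. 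To arrive at the denominator $(1-\gamma-c_N)(1-\gamma)\rho_1-\min\{q^\star,1\}\cdot C^\star$ you must establish the per-step bound $\mathrm{bias}_t\le\min\{q^\star,1\}\cdot C^\star\|\Theta_j^{(t)}-\Theta_j^\star\|_2$ uniformly in $t$, e.g.\ by carrying the min explicitly through an induction on $(t+1)\|\Theta_j^{(t)}-\Theta_j^\star\|_2$ across both phases; as written, your proposal names this obstacle but asserts its resolution rather than supplying it, so the stated bound \eqref{eqn: thm2} is not yet proved.
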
 
\textbf{Remark 3} (\textbf{Improvement via GPI}):
Utilizing GPI enhances the convergence rate from in the order of $\frac{1}{1-C^\star}\cdot \frac{1}{T}$ to in the order of $\frac{1}{1-q^\star \cdot C^\star}\cdot \frac{1}{T}$. When the distance between the source task and target tasks is small, $q^\star$ can approach zero, indicating an improved generalization error by a factor of $1-C^\star$, where $C^\star$ is proportional to the fraction of bad tuples. The improvement achieved through GPI is derived from the reduction of the distance between the behavior policy and the optimal policy, subsequently decreasing the fraction of bad tuples in the collected data. Here, $C^\star$ is proportional to the fraction of bad tuples without using GPI, and $q^\star\cdot C^\star$ is proportional to the fraction of bad tuples when GPI is employed.

\subsubsection{Improved Performance with the Knowledge Transfer}
Using our proposed SF-DQN, we have estimated $Q^{\pi^\star_i}_i$ for task $i$. When the reward changes to $\Revise{r_{n+1}}(\bfs,a,\bfs^\prime) = \boldsymbol{\phi}^\top (\bfs,a,\bfs^\prime)\bfw^\star_{n+1}$ for a new task $\mathcal{T}_{n+1}$, and once $\bfw^\star{n+1}$ is estimated, we can calculate the estimated Q-value function for $\mathcal{T}_{n+1}$ by setting
\begin{equation}\label{eqn: transfer_SF}
    Q^{\pi_{n+1}}_{n+1}(\bfs,a) = \max_{1\le j\le n} \psi(\Theta_j^{(T)};\bfs,a)\bfw^{\star}_{n+1}.
\end{equation}
As $\bfw^{(t)}_{n+1}$ experiences linear convergence to its optimal $\bfw^\star$, which is significantly faster than the sub-linear convergence of $\Theta^{(t)}_{n+1}$, as shown in \eqref{eqn: theta_w}, this derivation of $Q_{n+1}$ in \eqref{eqn: transfer_SF} simplifies the computation of $\Theta_{n+1}^\star$ into a much more manageable supervised setting for approximating $w^\star_{n+1}$ with only a modest performance loss as shown in \eqref{eqn: thm3_main}. This is demonstrated in the following Theorem \ref{Thm3}. 
\begin{theorem}[Transfer learning via SF-DQN]\label{Thm3}
For the $(n+1)$-th task with $r_{n+1} = \boldsymbol{\phi}^\top w^\star_{n+1}$, suppose the Q-value function is derived based on \eqref{eqn: transfer_SF},  we have  
    \begin{equation}\label{eqn: thm3_main}
    \begin{split}
        &\max_{\bfs,a} |Q_{n+1}^{\pi_{n+1}}(\bfs,a) -Q_{n+1}^\star(\bfs,a)|\\ 
        \le&~\frac{2\gamma}{1-\gamma}\phi_{\max} \min_{j\in[n]}\|\bfw_j^\star -\bfw_{n+1}^\star\|_2
        + \frac{\|\bfw_{n+1}^\star\|_2}{(1-\gamma)\cdot T}. 
    \end{split}
    \end{equation}
\end{theorem}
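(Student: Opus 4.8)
The plan is to bound the transfer error by the triangle inequality, passing through an idealized GPI value built from the \emph{exact} optimal successor features. Define $\tilde{Q}(\bfs,a) := \max_{j\in[n]} \psi_j^\star(\bfs,a)^\top \bfw_{n+1}^\star$, the GPI Q-function one would obtain if every source-task successor feature were learned perfectly. Then, since $Q_{n+1}^{\pi_{n+1}}$ is defined by \eqref{eqn: transfer_SF},
\begin{equation*}
\max_{\bfs,a}|Q_{n+1}^{\pi_{n+1}} - Q_{n+1}^\star| \le \underbrace{\max_{\bfs,a}|Q_{n+1}^{\pi_{n+1}} - \tilde{Q}|}_{\text{(I)}} + \underbrace{\max_{\bfs,a}|\tilde{Q} - Q_{n+1}^\star|}_{\text{(II)}},
\end{equation*}
where (I) is an estimation error from using $\Theta_j^{(T)}$ instead of $\Theta_j^\star$, and (II) is the task-relatedness error intrinsic to GPI.

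For (I), I would apply the elementary inequality $|\max_j a_j - \max_j b_j| \le \max_j |a_j - b_j|$ followed by Cauchy--Schwarz, giving $|Q_{n+1}^{\pi_{n+1}}(\bfs,a) - \tilde Q(\bfs,a)| \le \max_{j\in[n]} \|\psi(\Theta_j^{(T)};\bfs,a) - \psi_j^\star(\bfs,a)\|_2 \cdot \|\bfw_{n+1}^\star\|_2$. Since $\psi_j^\star = \psi(\Theta_j^\star)$ by Assumption~\ref{ass1}, the factor $\|\psi(\Theta_j^{(T)}) - \psi_j^\star\|_\infty$ is controlled by the convergence guarantee of Theorem~\ref{Thm1} (equivalently Theorem~\ref{Thm2}): the parameter error $\Theta_j^{(T)}\to\Theta_j^\star$ produces a one-step successor-feature Bellman residual that, upgraded through the $\gamma$-contraction of the SF Bellman operator in sup-norm, becomes a global function-value error of order $\frac{1}{(1-\gamma)T}$, yielding the second term $\frac{\|\bfw_{n+1}^\star\|_2}{(1-\gamma)T}$.

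The heart of the argument is (II). The key observation is the identity $\psi_j^\star(\bfs,a)^\top \bfw_{n+1}^\star = Q_{n+1}^{\pi_j^\star}(\bfs,a)$, i.e.\ the value, measured under the \emph{new} reward $r_{n+1}$, of following the optimal policy $\pi_j^\star$ of source task $j$; hence $\tilde Q = \max_{j\in[n]} Q_{n+1}^{\pi_j^\star}$. Because each $\pi_j^\star$ is a fixed policy we have $Q_{n+1}^{\pi_j^\star}\le Q_{n+1}^\star$, so $\tilde Q\le Q_{n+1}^\star$ and it suffices to lower bound $\tilde Q$. Letting $j^\star := \arg\min_{j\in[n]}\|\bfw_j^\star-\bfw_{n+1}^\star\|_2$, we get $|\tilde Q - Q_{n+1}^\star| \le Q_{n+1}^\star - Q_{n+1}^{\pi_{j^\star}^\star}$. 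I would peel off one Bellman step, $Q_{n+1}^\star(\bfs,a)-Q_{n+1}^{\pi_{j^\star}^\star}(\bfs,a)=\gamma\,\mathbb{E}_{\bfs'|\bfs,a}[V_{n+1}^\star(\bfs')-V_{n+1}^{\pi_{j^\star}^\star}(\bfs')]$ (the immediate rewards cancel, which is where the factor $\gamma$ comes from, with $V$ the corresponding state-value functions), and then telescope the value gap through task $j^\star$:
\begin{equation*}
V_{n+1}^\star - V_{n+1}^{\pi_{j^\star}^\star} = (V_{n+1}^\star - V_{j^\star}^\star) + (V_{j^\star}^\star - V_{j^\star}^{\pi_{j^\star}^\star}) + (V_{j^\star}^{\pi_{j^\star}^\star} - V_{n+1}^{\pi_{j^\star}^\star}).
\end{equation*}
The middle term vanishes because $\pi_{j^\star}^\star$ is optimal for task $j^\star$; the first and third compare value functions of two MDPs sharing dynamics but differing in reward mapping, each bounded by $\frac{\phi_{\max}\|\bfw_{n+1}^\star-\bfw_{j^\star}^\star\|_2}{1-\gamma}$ via the standard fixed-point perturbation of the (optimality/policy) Bellman operator together with $\|r_{n+1}-r_{j^\star}\|_\infty\le\phi_{\max}\|\bfw_{n+1}^\star-\bfw_{j^\star}^\star\|_2$. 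Summing and reinserting the $\gamma$ factor produces $\frac{2\gamma}{1-\gamma}\phi_{\max}\min_{j\in[n]}\|\bfw_j^\star-\bfw_{n+1}^\star\|_2$.

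I expect the main obstacle to be the rigorous justification of term (I): converting the parameter-space convergence of Theorem~\ref{Thm1} into a uniform $\frac{1}{(1-\gamma)T}$ bound on $\|\psi(\Theta_j^{(T)})-\psi_j^\star\|_\infty$, which requires controlling both the Lipschitz dependence of the deep-network output on $\Theta_j$ and the contraction that lifts a one-step residual to a global sup-norm error. Part (II) is conceptually the crux but reduces to a clean perturbation/telescoping argument once the identity $\psi_j^\star{}^\top\bfw_{n+1}^\star = Q_{n+1}^{\pi_j^\star}$ is established; the only care needed there is tracking the single factor of $\gamma$ through the Bellman recursion and taking all reward-difference bounds in the sup-norm.
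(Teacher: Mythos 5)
Your proposal follows essentially the same route as the paper: the triangle-inequality split through the idealized GPI value built from exact successor features matches the paper's decomposition into "(1) characterize \eqref{eqn: transfer_SF} assuming knowledge of the optimal Q-functions of previous tasks, (2) add the accumulated estimation error," and your treatment of term (II) — using $\psi_j^{\star\top}\bfw_{n+1}^\star = Q_{n+1}^{\pi_j^\star}$, peeling one Bellman step to extract the factor $\gamma$, telescoping through task $j^\star$'s optimal value so the middle term vanishes, and bounding each remaining piece by $\frac{1}{1-\gamma}\phi_{\max}\|\bfw_{j^\star}^\star-\bfw_{n+1}^\star\|_2$ — is precisely the paper's Lemma~\ref{lemma: DQN_difference} with its $I_5$ and $I_6$ terms. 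The argument is correct and the identified obstacle (lifting the parameter convergence of Theorem~\ref{Thm1} to the sup-norm error $\frac{\|\bfw_{n+1}^\star\|_2}{(1-\gamma)T}$) is exactly the remaining step the paper handles in its accumulated-error analysis.
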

\textbf{Remark 4 (Connection with existing works of SF in tabular cases):} The second term of the upper bound in \eqref{eqn: thm3_main}, $\frac{\|\bfw_{n+1}^\star\|_2}{(1-\gamma)\cdot T}$, characterizes the value of $\epsilon$ assumed in \cite{barreto2017successor}, which results from the approximation error of the optimal Q-functions in the previous tasks \footnote{Our upper bound in \eqref{eqn: thm3_main} differs from the one in \cite{barreto2017successor} in the first term. This distinction arises from our improvement in Lemma \ref{lemma: DQN_difference} compared to Lemma 1 in \cite{barreto2017successor}. See Appendix \ref{app: GPI} for the proof of Lemma \ref{lemma: DQN_difference}.}.

Without the SF decomposition as shown in \eqref{eqn: SF_Q}, one can apply a similar strategy in \eqref{eqn: transfer_SF} for DQN as
\begin{equation}\label{eqn: transfer_DQN}
    Q^{\pi_{n+1}^\prime}_{n+1}(s,a) = \max_{1\le j\le n} Q(\omega_j^{(T)};s,a).
\end{equation}
In Theorem \ref{Thm4}, \eqref{eqn: thm4_main} illustrates the performance of \eqref{eqn: transfer_DQN} through DQN. Compared to Theorem \ref{Thm3}, transfer learning via DQN is worse than that via SF-DQN by a factor of $\frac{1+\gamma}{2}$ when comparing the estimation error of the optimal function $Q_{n+1}^\star$ in \eqref{eqn: thm3_main} and \eqref{eqn: thm4_main}, indicating the advantages of using SFs in transfer reinforcement learning.
\begin{theorem}[Transfer learning via DQN]\label{Thm4}
For the $(n+1)$-th task with $r_{n+1}= \phi\cdot w^\star_{n+1}$, suppose the Q-value function is derived based on \eqref{eqn: transfer_DQN},  we have 
    \begin{equation}\label{eqn: thm4_main}  
    \begin{split}
        &\max_{(\bfs,a)}: |Q_{n+1}^{\pi_{n+1}^\prime}(\bfs,a) -Q_{n+1}^\star(\bfs,a) | \\
        \le& \frac{2}{1-\gamma}\phi_{\max} \cdot \min_{j\in[n]}\|\bfw_j^\star -\bfw_{n+1}^\star\|_2
        + \frac{\|\bfw_{n+1}^\star\|_2}{(1-\gamma)\cdot T}. 
    \end{split}
    \end{equation}
\end{theorem}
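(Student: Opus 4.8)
The plan is to follow the template of Theorem~\ref{Thm3} and to isolate the one step at which the DQN representation is strictly weaker than the successor-feature representation. Write $\pi' := \pi_{n+1}'$ for the GPI policy that is greedy with respect to $\widehat{Q}(\bfs,a) := \max_{1\le j\le n} Q(\omega_j^{(T)};\bfs,a)$ as in \eqref{eqn: transfer_DQN}, and let $\pi_j$ be the (near-)optimal policy of source task $j$, so that $Q(\omega_j^{(T)})$ is the DQN estimate of $Q_j^\star = Q_j^{\pi_j}$, the optimal action-value function of task $j$ computed under its own reward $r_j = \boldsymbol{\phi}^\top\bfw_j^\star$. The quantity to bound is $\max_{\bfs,a}|Q_{n+1}^{\pi'}(\bfs,a) - Q_{n+1}^\star(\bfs,a)|$, and I would split it into a task-mismatch contribution and a finite-$T$ estimation contribution, exactly paralleling the two terms of \eqref{eqn: thm4_main}.

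For the task-mismatch term I would invoke the DQN instance of Lemma~\ref{lemma: DQN_difference}. Since $\pi'$ is greedy with respect to the pointwise maximum of the source value functions, the lemma bounds its suboptimality on task $n+1$ by how tightly the source rewards bracket $r_{n+1}$, giving a bound of the form $\tfrac{2}{1-\gamma}\min_{j\in[n]}\|r_{n+1}-r_j\|_\infty$ (up to the estimation error handled below). The reward gap then collapses via the shared feature $\boldsymbol{\phi}$: since tasks $n+1$ and $j$ differ only through $\bfw^\star$, one has $\|r_{n+1}-r_j\|_\infty=\|\boldsymbol{\phi}^\top(\bfw_{n+1}^\star-\bfw_j^\star)\|_\infty\le \phi_{\max}\|\bfw_{n+1}^\star-\bfw_j^\star\|_2$, and minimizing over $j$ produces the first term of \eqref{eqn: thm4_main}. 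The key is that Lemma~\ref{lemma: DQN_difference} must accommodate a policy that is greedy with respect to \emph{wrong-reward} Q-functions $Q_j^\star$ rather than the target-reward values $Q_{n+1}^{\pi_j}$; quantifying this mismatch is what fixes the constant.

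The delicate point, and the reason Theorem~\ref{Thm4} carries $\tfrac{2}{1-\gamma}$ where Theorem~\ref{Thm3} carries $\tfrac{2\gamma}{1-\gamma}$, is to track precisely where the reward mismatch enters the Bellman recursion; I expect this to be the main obstacle. In SF-DQN one stores $\psi_j^\star$ and re-synthesizes $\psi_j^\star{}^\top\bfw_{n+1}^\star = Q_{n+1}^{\pi_j}$, i.e. the source policy evaluated under the \emph{correct} target reward, so the immediate rewards agree and the discrepancy only propagates from the next state onward, yielding the extra factor $\gamma$. DQN, by contrast, can only reuse $Q_j^\star$, which already embeds the \emph{wrong} reward $r_j$ at the current transition; the mismatch $r_{n+1}-r_j$ is therefore paid in full at step $0$, no factor $\gamma$ can be extracted, and one is left with the classical GPI constant $\tfrac{2}{1-\gamma}$. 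Making this rigorous is precisely the gap between Lemma~\ref{lemma: DQN_difference} and its SF-specialized use in Theorem~\ref{Thm3}, and I would carry it out by writing the one-step Bellman comparison for $Q_{n+1}^\star-Q_{n+1}^{\pi'}$ and checking that the $r_{n+1}-r_j$ term is undiscounted.

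Finally, for the estimation term I would replace the exact $Q_j^\star$ by the learned $Q(\omega_j^{(T)})$. This substitution perturbs $\widehat Q$, and hence the value of the induced greedy policy, by at most $\tfrac{1}{1-\gamma}\max_{\bfs,a}|Q(\omega_j^{(T)})-Q_j^\star|$; feeding in the $1/T$ convergence of the source networks (the DQN analogue of \eqref{eqn: theta_w} in Theorem~\ref{Thm1}) gives the second term $\tfrac{\|\bfw_{n+1}^\star\|_2}{(1-\gamma)T}$, identical in form to that of Theorem~\ref{Thm3}. Adding the task-mismatch and estimation contributions yields \eqref{eqn: thm4_main}. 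I expect the first and last steps to be essentially routine once Theorems~\ref{Thm1} and~\ref{Thm3} are in hand; the genuine content is the middle step, namely verifying that the DQN representation forfeits the $\gamma$ improvement enjoyed by SF-DQN.
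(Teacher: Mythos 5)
Your proposal follows essentially the same route as the paper: the error is split into a task-mismatch term and a finite-$T$ estimation term, the mismatch term is controlled by the quantities $I_5=\max_{\bfs,a}|Q_i^{\pi_i^\star}-Q_j^{\pi_j^\star}|$ and $I_6=\max_{\bfs,a}|Q_j^{\pi_j^\star}-Q_i^{\pi_j^\star}|$ from the proof of Lemma \ref{lemma: DQN_difference} (each $\le\frac{1}{1-\gamma}\max|r_i-r_j|$, with $\max|r_{n+1}-r_j|\le\phi_{\max}\|\bfw_{n+1}^\star-\bfw_j^\star\|_2$), and your diagnosis of why DQN forfeits the leading $\gamma$ — the reused $Q_j^\star$ embeds the wrong immediate reward, whereas $\psi_j^{\star\top}\bfw_{n+1}^\star=Q_{n+1}^{\pi_j^\star}$ has the correct one so the rewards cancel in the first Bellman step — is exactly the mechanism behind the paper's improved lemma and Remark 5. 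This is the paper's argument in all essentials.
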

\textbf{Remark 5 (Improvement by a factor of $\frac{1+\gamma}{2}$):} Transfer learning performance in SF-DQN is influenced by the knowledge gap between previous and current tasks, primarily attributed to differences in rewards and data distribution. In SF-DQN, the impact of reward differences is relatively small since $\phi$ that plays the role of reward remains fixed. The parameter $\gamma$ affects the influence of data distribution differences. A small $\gamma$ prioritizes immediate rewards, thereby the impact of data distribution on the knowledge gap is not significant. With a small $\gamma$, the impact of reward difference dominates, resulting in a high gap between SF-DQN and DQN in transfer learning.

\begin{table*}[t]
    \centering
    \caption{Normalized average reward for SF-DQN with and without GPI.}
    \setlength{\tabcolsep}{1.0em} 
{\renewcommand{\arraystretch}{1.2}
    \begin{tabular}{ | c |c | c | c | c | }
    \hline
         $\| w^*_1 - w^*_2 \|$& $=0.01$ & $=0.1$ &  $=1$ & $=10$ \\
         \hline 
         SF-DQN (w/ GPI) 
         & $\bm{0.986} \pm 0.007$ & $\bm{0.965} \pm 0.007$  &  $\bm{0.827} \pm 0.008$ & $\bm{0.717} \pm 0.012$  \\ 
         SF-DQN (w/o GPI) 
         & $0.942 \pm 0.004$  & $0.911 \pm 0.013$  & $0.813 \pm 0.009$ & $0.707 \pm 0.011$ \\
         \hline 
    \end{tabular}}    
    \label{tab:gpi-effect}
\vspace{-0.2cm}
\end{table*}

\subsection{Technical Challenges, and Comparison with Existing Works}
\textbf{Beyond deep learning theory: challenges in deep reinforcement learning.}
The proof of Theorem \ref{Thm1} is inspired by the convergence analysis of one-hidden-layer neural networks within the semi-supervised learning \cite{ZSJB17,ZWLCX22} and a recent theoretical framework in analyzing DQN \cite{zhang2024convergence}. This proof tackles \textit{two primary objectives}: (\textbf{i}) the first objective involves characterizing the local convex region of the objective functions presented in \eqref{eqn:MSBE} and \eqref{eqn: optimization_w}; (\textbf{ii}) the second objective focuses on quantifying the distance between the gradient defined in \eqref{eqn: gradient_descent_both} and the gradient of the objective functions in \eqref{eqn:MSBE} and \eqref{eqn: optimization_w}. 

However, extending this approach from the semi-supervised learning setting to the deep reinforcement learning domain introduces \textit{additional challenges}.
First, we expand our proof beyond the scope of one-hidden-layer neural networks to encompass multi-layer neural networks. This extension requires new technical tools for characterizing the Hessian matrix and concentration bounds, as outlined in Appendix \ref{sec: Proof: second_order_derivative}.
Second, the approximation error bound deviates from the supervised learning scenarios due to several factors: the non-i.i.d. of the collected data, the distribution shift between the behavior policy and the optimal policy, and the approximation error incurred when utilizing \eqref{eqn:MSBE2} to estimate \eqref{eqn:MSBE}. Addressing these challenges requires developing supplementary tools, as mentioned in Lemma \ref{Lemma: first_order_derivative}. 
Notably, this approximation does not exhibit scaling behavior proportional to $\|\Theta_i-\Theta_i^\star\|_2$, resulting in a sublinear convergence rate.


\textbf{Beyond DQN: challenges in GPI.}
The major challenges in proving Theorems \ref{Thm2}-\ref{Thm4} centers on deriving the improved performance by utilizing GPI. The intuition is as follows. 
Imagine we have two closely related tasks, labeled as $i$ and $j$, with their respective optimal weight vectors, $\bfw_i^\star$ and $\bfw_j^\star$, being close to each other. This closeness suggests that these tasks share similar rewards, leading to a bounded distributional shift in the data, which, in turn, implies that their optimal Q-functions should exhibit similarity.
To rigorously establish this intuition, we aim to characterize the distance between these optimal Q-functions, denoted as $|Q_i^\star - Q_j^\star|$, in terms of the Euclidean distance between their optimal weight vectors, $||\bfw_i^\star - \bfw_j^\star||_2$ (See details in Appendix \ref{app: GPI}).
Furthermore, we can only estimate the optimal Q-function for previous tasks during the learning process, and such an estimation error accumulates in the temporal difference learning, e.g., the case of the SF learning of $\psi^\star$. We developed novel analytical tools to quantify the error accumulating in the temporal difference learning (see details in Appendix \ref{app: thm34}), which is not a challenge for previous works in the supervised learning setting.


\section{Experiments}\label{sec:experiments}
This section summarizes empirical validation for the theoretical results obtained in Section \ref{sec: theorem} using a synthetic RL environment. {The experiment setup and additional experimental results for real-world RL benchmarks are summarized in Appendix \ref{app:experiment}.}

\textbf{Convergence of SF-DQN with varied initialization.}
Figure \ref{fig:w-init-gap1} shows the performance of Algorithm \ref{Alg} with different initial $\bfw_1^{(0)}$ to the ground truth $\bfw^\star_1$.
When the initialization is close to the ground truth, we observe an increased accumulated reward, which verifies our theoretical findings in \eqref{eqn: thm1_Q} that the estimation error of the optimal Q-function reduces as $\|\bfw_1^{(0)}-\bfw^\star\|_2$ decreases.

\begin{figure}[h]
\centering
\begin{minipage}{0.48\linewidth}
    \centering
         \includegraphics[width=1.0\linewidth]{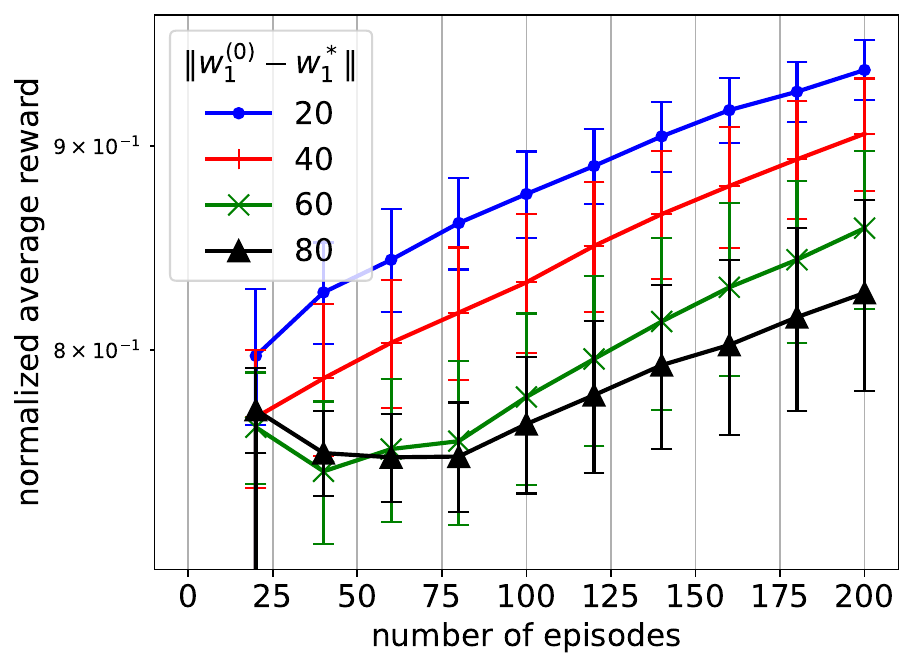}
         \caption{Performance of SF-DQN presented in Algorithm \ref{Alg} on Task 1.}
         \label{fig:w-init-gap1}
\end{minipage}
\hfill
\begin{minipage}{0.48\linewidth}
    \centering
         \includegraphics[width=1.0\linewidth]{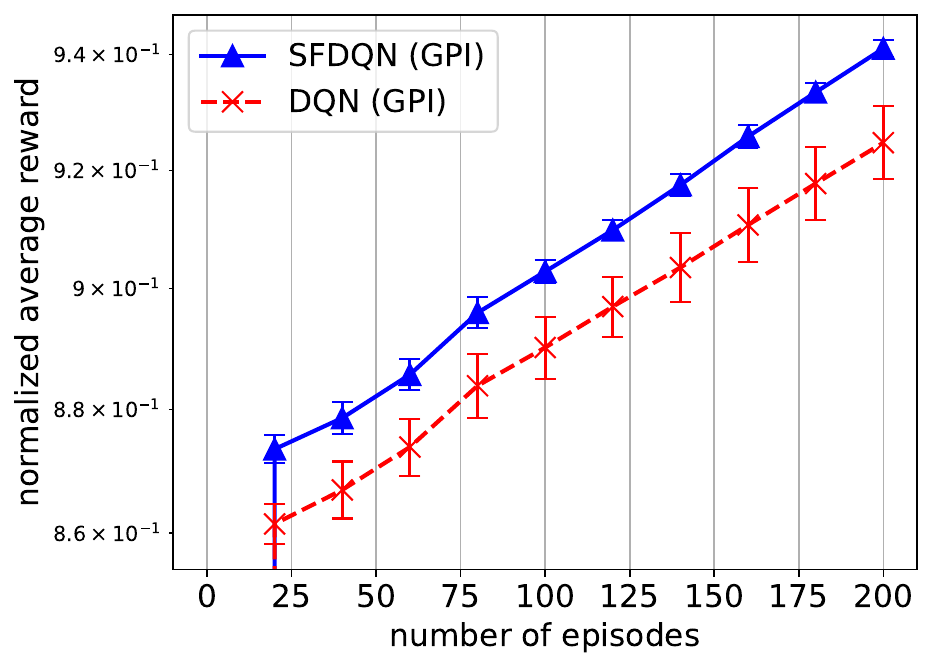}
         \centering

         \caption{Transfer comparison for SF-DQN and DQN (with GPI)}
         \label{fig:com_sfdqn_dqn}
\end{minipage}
\end{figure}

\textbf{Performance of SF-DQN with GPI when adapting to tasks with varying relevance.} 
We conducted experiments to investigate the impact of GPI with varied task relevance.  Since the difference in reward mapping impacts data distribution shift, rewards, and consequently the optimal Q-function, we utilize the metric $\|\bfw_1^\star -\bfw_2^\star\|_2$ to measure the task irrelevance.
The results summarized in Table \ref{tab:gpi-effect} demonstrate that when tasks are similar (i.e., small $\|\bfw^\star_{1} - \bfw^\star_2\|$), SF-DQN with GPI consistently outperforms its counterpart without GPI. However, when tasks are dissimilar (i.e., large $\|\bfw^\star_1 - \bfw^\star_2\|$), both exhibit the same or similar performance, indicating that GPI is ineffective when two tasks are irrelevant. 
The observations in Table 2 validate our theoretical findings in \eqref{eqn: thm2}, showing a more significant improvement in using GPI as  $\|\bfw^\star_1 - \bfw_2^\star\|_2$ decreases.

\textbf{Comparison of the SF-DQN agent and DQN agent.} From Figure \ref{fig:com_sfdqn_dqn}, it is evident that the SF-DQN agent consistently achieves a higher average reward (task 2) than the DQN when starting training on task 2, where transfer learning occurs. These results strongly indicate the improved performance of the SF-DQN agent over the DQN, aligning with our findings in \eqref{eqn: thm3_main} and \eqref{eqn: thm4_main}. SF-DQN benefits from reduced estimation error of the optimal Q-function compared to DQN when engaging in transfer reinforcement learning for relevant tasks.

\vspace{-0.2cm}
\section{Conclusions}\label{sec: conclusion}
This paper analyzes the transfer learning performance of SF \& GPI, with SF being learned using deep neural networks. Theoretically, we present a convergence analysis of our proposed SF-DQN with generalization guarantees and provide theoretical justification for its superiority over DQN without using SF in transfer reinforcement learning.
We further verify our theoretical findings through numerical experiments conducted in both synthetic and benchmark RL environments. {Future directions include exploring the possibility of learning $\phi$ using a DNN approximation and exploring the combination of successor features with other deep reinforcement learning algorithms.}

\section*{Acknowledgment}
Part of this work was done when Shuai Zhang was a postdoc at Rensselaer Polytechnic Institute (RPI). 
 This work was supported by AFOSR FA9550-20-1-0122, ARO W911NF-21-1-0255, NSF 1932196, NSF CAREER project 2047177, Cisco Research Award, and the Rensselaer-IBM AI Research Collaboration (http://airc.rpi.edu), part of the IBM AI Horizons Network (http://ibm.biz/AIHorizons).   We thank all anonymous reviewers for their constructive comments.

\section*{Broader Impact}
This paper presents work whose goal is to advance the field of Machine Learning. There are many potential societal consequences of our work, none of which we feel must be specifically highlighted here.

\newpage
\appendix
\onecolumn
~
\vspace{0.5in}
\begin{center}
\Large Appendix of ``Provable Knowledge Transfer using Successor Feature for Deep Reinforcement Learning.''
\end{center}

\vspace{0.5in}

Before moving into the technical details, we provide an overview of the structure of the appendix.  

In Appendix \ref{app:notation}, we define some notations and useful lemmas to simplify the presentation and analysis. Some important notations for understanding the proof is summarized in Table \ref{table:appendix}.

In Appendix \ref{app:thm1}, we provide some preliminary lemmas and proof for Theorem \ref{Thm1}. A proof sketch is included as (\textbf{i}) characterization of the local convex region of the objective function in \eqref{eqn:MSBE} and \eqref{eqn: optimization_w} (Lemma \ref{Lemma: second_order_derivative}), (\textbf{ii}) Characterization of the difference between the empirical gradient in \eqref{eqn: gradient_descent_both} and the gradient of the objective function (Lemma \ref{Lemma: first_order_derivative}), (\textbf{iii}) Characterization of the relation of two consecutive iterations $\Theta^{(t+1)}$ and $\Theta^{(t)}$ in \eqref{eqn: temp1.1}, and (\textbf{iv}) Mathematical induction over $(t+1)\cdot \|\Theta^{(t)}-\Theta^\star\|_2$ from $t=1$ to $T$ to obtain the error bound between the learned model weights $\Theta^{(T)}$ and the optimal $\Theta^\star$. 

In Appendix \ref{app: thm34}, we provide the proof for Theorems \ref{Thm3} and \ref{Thm4}. A proof sketch is included as follows: (1) Characterization of \eqref{eqn: transfer_SF} by assuming knowledge of the optimal Q-function for previous tasks.
(2) Characterization of the accumulated error resulting from the estimation error of the learned Q-function in previous tasks.
(3) Combining the bounds from (1) and (2) leads to the error bound between \eqref{eqn: transfer_SF} derived from the estimated Q-function of previous tasks and the optimal Q-function for the new tasks.

In Appendix \ref{app:thm2}, we provide the proof for Theorem \ref{Thm2}. The proof sketch is a direct application of the existing results of the convergence analysis as shown in Appendix \ref{app:thm1} and the error bound between \eqref{eqn: transfer_SF} derived from the estimated Q-function of previous tasks and the optimal Q-function for the new tasks as shown in  Appendix \ref{app: thm34}.

In Appendix \ref{app:experiment}, we provide additional experiments to further support the proposed SF-DQN in Algorithm \ref{Alg} and our theoretical findings. 

In Appendix \ref{app:proof_of_lemma_1}, we provide the proofs for the preliminary lemmas in proving Theorems \ref{Thm1} and \ref{Thm2}.

In Appendix \ref{app: GPI}, we provide the proofs for the preliminary lemmas in proving Theorems \ref{Thm3} and \ref{Thm4}.

In Appendix \ref{app: proof}, we provide the proof for some additional lemmas.

\section{Notations and preliminary results}\label{app:notation}

    \textbf{Population risk function.} We define a population risk function as
    \begin{equation}\label{eqn:prf}
        f_{\pi^\star}(\theta):=  \mathbb{E}_{(\bfs,a)\sim\pi^\star}\big\| \psi(\theta;\bfs,a) - \mathbb{E}_{\bfs^\prime|(\bfs,a), a'\sim\pi^\star(\bfs^\prime)}\big(\phi(\bfs,a,\bfs^\prime) + \gamma \cdot \psi(\theta^\star;\bfs',a') \big) \big\|^2_2.
    \end{equation}
    We can see that $\theta^\star$ is the global minimal to \eqref{eqn:prf} with Assumption \ref{ass1}. For the convenience of presentation, we simplify $f_{\pi^\star}$ as $f$ in the supplementary materials. 


Then, the gradient of \eqref{eqn:prf} is 
 \begin{equation}\label{eqn:prf_gradient}
     \begin{split}
         &\nabla f_{\pi^\star}(\theta)\\
         =&~\mathbb{E}_{(\bfs,a)\sim\pi^\star,\bfs'|(\bfs,a)\sim \mathcal{P},a^\prime\sim \pi^\star} \big( \psi(\theta;\bfs,\bfa) - \phi(\bfs,\bfa,\bfs^\prime) - \gamma \cdot \psi(\theta^\star;\bfs',a')\big)\cdot \nabla \psi(\theta;\bfs,a).
     \end{split}
 \end{equation}

Given $f$ is a smooth function, we have the gradient of $f$ with respect to any $\theta_\ell$ at the ground truth $\theta^\star$ equals to zero, namely,
\begin{equation}\label{eqn: derivative_ell}
\nabla_\ell f(\theta^\star) := \nabla_{\theta_\ell} f(\theta^\star) =\bfzero, \textit{\quad\quad } \forall \ell \in [L].    
\end{equation}

\textbf{Vectorized Gradient of $\theta$ and $\bfw$ at iteration $t$.}  
To avoid unnecessary high-dimensional tensor analysis, the gradient with respect to $\theta$, denoted as $\nabla_{\theta} H$ for some function $H$, is represented as its corresponding vectorized version, $\nabla_{\text{Vec}(\theta)} H$.

Let $n$ denote the dimension of $\bfW$ defined in \eqref{eqn: Q}. We denote $n_l$ as the dimension of the vectorized neuron weights in the $\ell$-th layer, namely, $n_\ell = \text{dim(vec($\theta_\ell$))}$.

Then, the gradient in updating $\theta$ as
\begin{equation}\label{eqn: gradient_theta}
\begin{split}
    &g^{(t)}(\theta^{(t)};\mathcal{D}_t)\\
    =&~\sum_{m\in\mathcal{D}_t} \Big(\psi(\theta^{(t)};\bfs_m,a_m) -\phi(\bfs_m,a_m,\bfs_m^\prime) -\gamma \cdot \psi(\theta^{(t)};\bfs_m',a_m') \Big)\cdot \nabla_{\theta}\psi(\theta^{(t)};\bfs_m,a_m)\\
\end{split}
\end{equation}
with $g^{(t)}(\theta^{(t)};\mathcal{D}_t)\in \mathbb{R}^n$.
Then, we have 
\begin{equation}\label{eqn: theta_update}
    \theta^{(t+1)} = \theta^{(t)} -\eta_t \cdot g^{(t)}(\theta^{(t)};\mathcal{D}_t).
\end{equation}
Similar to \eqref{eqn: gradient_theta}, we define the gradient 
\begin{equation}\label{eqn: gradient_w}
l^{(t)}(\bfw^{(t)};\mathcal{D}_t)= 
    \sum_{m\in\mathcal{D}_t}
    \Big( \phi(\bfs_m,a_m,\bfs^\prime_m)^\top\bfw^{(t)} -r(\bfs_m,a_m,\bfs_m^\prime) \Big)\cdot \phi(\bfs_m,a_m,\bfs_m^\prime).
\end{equation}

In addition, without special descriptions, ${\boldsymbol{\alpha}}=[{{\boldsymbol{\alpha}}}_1^\top, {{\boldsymbol{\alpha}}}_2^\top, \cdots, {{\boldsymbol{\alpha}}}_K^\top ]^\top$  stands for any unit vector that in $\mathbb{R}^{K_\ell K_{\ell-1}}$ with ${{\boldsymbol{\alpha}}}_j\in \mathbb{R}^{K_{\ell-1}}$ ($K_0 = d$).  Therefore, we have 
\begin{equation}\label{eqn: alpha_definition}
\begin{gathered}
    \| \nabla_{\ell} H\|_2 = \max_{\boldsymbol{\alpha}}\|\boldsymbol{\alpha}^\top \nabla_{\ell} H  \|_2
    = 
    \max_{\boldsymbol{\alpha}}
    \Big|
    \sum_{j=1}^{K} \boldsymbol{\alpha}_j^\top\frac{\partial H }{\partial \bfw_{\ell,j}}
    \Big|,
   \\
    \| \nabla_{\ell}^2 H\|_2 = \max_{\boldsymbol{\alpha}}\|\boldsymbol{\alpha}^\top \nabla_{
    \ell}^2~H~\boldsymbol{\alpha} \|_2
    = 
    \max_{\boldsymbol{\alpha}}
    \Big(
    \sum_{j=1}^{K} \boldsymbol{\alpha}_j^\top\frac{\partial H }{\partial \bfw_{\ell,j}}
    \Big)^2.
\end{gathered}
\end{equation}

\textbf{Derivation of the gradient of deep neural networks.}
We use $h^{(\ell)}
(\theta)$ to denote the input in the $\ell$-th layer (or the output in the $(\ell-1)$-th layer) of deep neural network $\psi(\theta)$, and $h^{(1)}=\bfx(\bfs,a)$,
where
\begin{equation}\label{eqn: defi_h}
    \bfh^{(\ell)}(\theta;\bfs,a) = \sigma(\theta_{\ell-1}^\top \bfh^{(\ell-1)}) = \cdots = \sigma\Big(\theta_\ell^\top \sigma\big(\theta_{\ell-1} \cdots \sigma( \theta_1^\top\bfx(\bfs,a) )\big) \Big).
\end{equation}
Then, we denote the dimension of $\bfh^{(\ell)}$ as $K_\ell$.
Then, $\psi(\theta;\bfs,a)$ can be written as 
\begin{equation}\label{eqn: Q_h}
    \psi(\theta;\bfs,a) = \frac{\bfone^\top}{K_L}\sum_{k=1}^{K_L}\sigma(\theta_{L,k}^{\top}\bfh^{(L)}) 
    =\frac{\bfone^\top}{K_L}\sigma\big(\theta_{L}^{\top}\sigma(\theta_{L-1}^\top\bfh^{(L-1)})\big),
\end{equation}
where $\theta_{\ell, k}$ denotes the $k$-th neuron weights in the $\ell$-th layer.
Then, we define a group of functions $\cJ_\ell(\theta)\in\mathbb
{R}^{n} \longrightarrow\mathbb{R}^K$ such that  
\begin{equation}\label{eqn: defi_jc}
\begin{split}
    &\cJ_{\ell}(\theta)\\
    =&\begin{cases}
        \big[\bfone^\top \sigma^\prime(\theta_L^\top\bfh^{(L)})\theta_L^\top \cdot \sigma^\prime(\theta_{L-1}^\top \bfh^{(L-1)})\theta_{L-1}^\top\cdots\sigma^\prime(\theta_{\ell+1}^\top \bfh^{(\ell+1)})\theta_{\ell+1}^\top\big]^\top \quad  \text{if} \quad \ell >1\\
        \bfone \quad  \text{if}\quad  \ell =1
    \end{cases}.
\end{split} 
\end{equation}
Then, the gradient of $\psi$ can be represented as 
\begin{equation}\label{eqn: gradient_multi}
    \frac{\partial \psi}{\partial \theta_{\ell,k}}(\theta) = \frac{1}{K_\ell} \cJ_{\ell,k}(\theta)\sigma^\prime\big(\theta_{\ell,k}^\top \bfh^{(\ell)}(\theta)\big)\bfh^{(\ell)}(\theta),
\end{equation}
where $\cJ_{\ell,k}$ stands for the $k$-th component of $\cJ_\ell$.

\textbf{Order-wise Analysis.}
%
Most constant numbers will be ignored in most steps. In particular, we use $h_1(z) \gtrsim(\text{or }\lesssim, \eqsim ) h_2(z)$ to denote there exists some positive constant $C$ such that $h_1(z)\ge(\text{or } \le, = ) C\cdot h_2(z)$ when $z\in\mathbb{R}$ is sufficiently large. 
In this paper, we consider the case where $\theta^\star_\ell$ is well-conditioned, such that its largest singular value $\Sigma_{1}(\ell)$ and the condition number $\Sigma_{1}(\ell)/\sigma_{K}(\ell)$ can be viewed as constants and will be hidden in the order-wise analysis.

\renewcommand{\arraystretch}{1.5}
\begin{table}[H]
    \caption{Notations for the proofs} 
    \centering
    \begin{tabular}{c|p{11.5cm}}
    \hline
    \hline
    $d$ & Dimension of the feature mappings of the state-action pair $(\bfs,a)\in \mathcal{S}\times\mathcal{A}$.\\
    \hline
    $K$ & Number of neurons in the hidden layer.
    \\
    \hline
    $L$ & Number of hidden layers.
    \\
    \hline
    $T$ & Number of iterations.\\
    \hline
    $\bfw_i^{(t)}$ & The estimated value for reward mapping of task $i$ at $t$-th iteration.\\
    \hline
    $\Theta^{(t)}_i$ & The estimated neuron weights for the successor feature of task $i$ at $t$-th iteration.\\
    \hline
    $\theta^{(t)}$ & The value of $\Theta^{(t)}_1$ to simplify the notation in the analyses without GPI.\\
    \hline
    $g^{(t)}(\theta^{(t)};\mathcal{D}_t)$ & The pseudo-gradient function defined in \eqref{eqn: gradient_theta} at point $\theta^{(t)}$ with respect to the dataset $\mathcal{D}_t$.\\
    \hline
    $f_{\pi^\star}$ or $f$ & The population risk function defined in \eqref{eqn:prf}.\\
    \hline
    $\nabla_\ell H(\hat{\theta})$ & The gradient of a function $H$ with respect to the components of $\theta_\ell$ at point $\hat{\theta}$.\\
     \hline
     $\nabla^2_\ell H(\hat{\theta})$   & The Hessian matrix of a function $H$ with respect to the components of $\theta_\ell$ at point $\hat{\theta}$.\\
    \hline
    $Q_i^{\pi}$   & The Q-function of task $i$ for policy $\pi$.\\
    \hline
    $Q_i^{\star}$   & The Q-function of task $i$ for the optimal policy $\pi^\star$.\\
    \hline
    $q^{\star}$   & A constant defined in \eqref{eqn: q_t}, depending on task relevance $\|\bfw_i-\bfw_j\|_2$.\\
    \hline
    $\eta_t$ & The step size for updating neuron weights $\Theta_i$ for the successor feature. \\
    \hline
    $\kappa_t$ & The step size for updating the parameter for the weight mapping. \\
    \hline
    $c_N$ & A constant in the order of ${1}/{\sqrt{N}}$.\\
    \hline
     $n$  &  The dimension of $\theta$.\\
     \hline
    $n_\ell$  & The dimension of vectorized $\theta_\ell$.\\
    \hline
    $K_\ell$ & The dimension of the input for the $\ell$-th layer for the deep neural network. $K_0 =d$.\\
    \hline
    $\cJ_\ell(\bfW)$   & A function in $\mathbb{R}^n\longrightarrow \mathbb{R}^K$, defined in \eqref{eqn: defi_jc}.\\
    \hline
    $C_t$ & The distribution shift between the optimal policy and behavior policy at iteration $t$, defined in Assumption \eqref{ass3}.\\
    \hline
        $N$ & The size of the experience replay buffer.\\
        \hline
        $\phi_{\max}$ & The upper bound of the transition feature.\\
    \hline
    $\rho_1$ & A constant defined in \eqref{def: rho}.\\
    \hline
    $\rho_2$ & The smallest eigenvalue of $\mathbb{E} \phi(\bfs,a)\phi(\bfs,a)^\top\in\mathbb{R}^{d\times d}$.\\
    \hline
    $\phi_{\max}$ & The upper bound of the transition feature.\\
    \hline
    \hline
    \end{tabular}
    \label{table:appendix}
\end{table}

\subsection{Useful Lemmas for matrix concentration}

\begin{lemma}[Weyl's inequality, \cite{B97}] \label{Lemma: weyl}
Let $\bfB =\bfA + \bfE$ be a matrix with dimension $m\times m$. Let $\lambda_i(\bfB)$ and $\lambda_i(\bfA)$ be the $i$-th largest eigenvalues of $\bfB$ and $\bfA$, respectively.  Then, we have 
\begin{equation}
    |\lambda_i(\bfB) - \lambda_i(\bfA)| \le \|\bfE \|_2, \quad \forall \quad i\in [m].
\end{equation}
\end{lemma}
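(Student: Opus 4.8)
The plan is to prove this classical perturbation bound via the Courant--Fischer min--max characterization of eigenvalues, which applies because the eigenvalue ordering in the statement presupposes that $\bfA$, $\bfB$, and hence $\bfE=\bfB-\bfA$ are symmetric (Hermitian) with real spectra. Recall that for a symmetric $m\times m$ matrix $\bfM$ with eigenvalues $\lambda_1(\bfM)\ge\cdots\ge\lambda_m(\bfM)$, Courant--Fischer gives
\begin{equation*}
\lambda_i(\bfM)=\max_{\substack{V\subseteq\mathbb{R}^m\\ \dim V=i}}\ \min_{\substack{\bfx\in V\\ \|\bfx\|_2=1}}\ \bfx^\top\bfM\bfx.
\end{equation*}
First I would fix an index $i\in[m]$ and let $V^\star$ be an $i$-dimensional subspace achieving the maximum for $\bfA$, so that $\min_{\bfx\in V^\star,\,\|\bfx\|_2=1}\bfx^\top\bfA\bfx=\lambda_i(\bfA)$.

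The key observation is that for any unit vector $\bfx$ one has $|\bfx^\top\bfE\bfx|\le\|\bfE\|_2$, since $\|\bfE\|_2$ is exactly the largest absolute eigenvalue (the operator norm, equal to the spectral radius) of the symmetric matrix $\bfE$. Writing $\bfx^\top\bfB\bfx=\bfx^\top\bfA\bfx+\bfx^\top\bfE\bfx$, this controls the quadratic form of $\bfB$ by that of $\bfA$ up to an additive error of $\|\bfE\|_2$, uniformly over the unit sphere.

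Then I would combine these facts: using $V^\star$ as a (possibly suboptimal) competitor in the max--min formula for $\lambda_i(\bfB)$ yields
\begin{equation*}
\lambda_i(\bfB)\ \ge\ \min_{\substack{\bfx\in V^\star\\ \|\bfx\|_2=1}}\bfx^\top\bfB\bfx\ \ge\ \min_{\substack{\bfx\in V^\star\\ \|\bfx\|_2=1}}\bfx^\top\bfA\bfx-\|\bfE\|_2=\lambda_i(\bfA)-\|\bfE\|_2,
\end{equation*}
which gives one direction. The reverse inequality $\lambda_i(\bfA)\ge\lambda_i(\bfB)-\|\bfE\|_2$ follows by the symmetric argument, writing $\bfA=\bfB+(-\bfE)$ and using $\|-\bfE\|_2=\|\bfE\|_2$. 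Together these give $|\lambda_i(\bfB)-\lambda_i(\bfA)|\le\|\bfE\|_2$ for every $i$. There is no genuine obstacle beyond bookkeeping; the only points requiring care are applying the variational formula consistently on the same subspace $V^\star$ when passing between the characterizations of $\lambda_i(\bfA)$ and $\lambda_i(\bfB)$, and invoking the symmetry of $\bfE$ to equate its operator norm with its spectral radius so that the bound $|\bfx^\top\bfE\bfx|\le\|\bfE\|_2$ is valid.
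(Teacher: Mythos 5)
The paper does not prove this lemma at all; it is imported verbatim from Bhatia's \emph{Matrix Analysis} as a known tool, so there is no in-paper argument to compare against. Your proof is the standard and correct one: the Courant--Fischer max--min characterization, the uniform bound $|\bfx^\top\bfE\bfx|\le\|\bfE\|_2$ on the unit sphere, and reuse of the optimal subspace $V^\star$ for $\bfA$ as a competitor for $\bfB$, followed by the symmetric argument with $-\bfE$. You are also right to flag the implicit hypothesis that $\bfA$ and $\bfB$ are symmetric (Hermitian), which the paper's statement omits but which is needed for the eigenvalue ordering and for Courant--Fischer; in the paper's application the matrices are Hessians, so this holds. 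One small remark: the bound $|\bfx^\top\bfE\bfx|\le\|\bfx\|_2\|\bfE\bfx\|_2\le\|\bfE\|_2$ already follows from Cauchy--Schwarz and the definition of the operator norm, so you do not actually need to invoke symmetry of $\bfE$ to equate the operator norm with the spectral radius at that step. No gaps.
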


\begin{lemma}[\cite{T12}, Theorem 1.6]\label{prob}
	Consider a finite sequence $\{{\bfZ}_k\}$ of independent, random matrices with dimensions $d_1\times d_2$. Assume that such random matrix satisfies
	\begin{equation*}
	\vspace{-2mm}
	\mathbb{E}({\bfZ_k})=0\quad \textrm{and} \quad \left\|{\bfZ_k}\right\|\le R \quad \textrm{almost surely}.	
	\end{equation*}
	Define
	\begin{equation*}
	\vspace{-2mm}
	\delta^2:=\max\Big\{\Big\|\sum_{k}\mathbb{E}({\bfZ}_k{\bfZ}_k^*)\Big\|,\Big\|\displaystyle\sum_{k}\mathbb{E}({\bfZ}_k^*{\bfZ}_k)\Big\|\Big\}.
	\end{equation*}
	Then for all $t\ge0$, we have
	\begin{equation*}
	\text{Prob}\Bigg\{ \left\|\displaystyle\sum_{k}{\bfZ}_k\right\|\ge t \Bigg\}\le(d_1+d_2)\exp\Big(\frac{-t^2/2}{\delta^2+Rt/3}\Big).
	\end{equation*}
\end{lemma}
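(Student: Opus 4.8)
The plan is to prove this dimension-dependent tail bound by the matrix Laplace transform method, since the summands are only independent (not identically distributed) and we need control on a spectral norm. First I would reduce the rectangular problem to a Hermitian one through the Hermitian dilation: for each $\bfZ_k$ set
\[
\mathcal{H}(\bfZ_k) = \begin{pmatrix} \bfzero & \bfZ_k \\ \bfZ_k^* & \bfzero \end{pmatrix} \in \mathbb{R}^{(d_1+d_2)\times(d_1+d_2)},
\]
which is Hermitian with $\|\mathcal{H}(\bfZ_k)\| = \|\bfZ_k\| \le R$. Because the spectrum of a dilation is symmetric about zero, $\|\sum_k \bfZ_k\| = \lambda_{\max}(\sum_k \mathcal{H}(\bfZ_k))$, so the event in question is exactly $\{\lambda_{\max}(\sum_k \mathcal{H}(\bfZ_k)) \ge t\}$. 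Moreover $\mathcal{H}(\bfZ_k)^2 = \mathrm{diag}(\bfZ_k\bfZ_k^*,\, \bfZ_k^*\bfZ_k)$, so $\|\sum_k \mathbb{E}\,\mathcal{H}(\bfZ_k)^2\| = \delta^2$, matching the variance proxy. It therefore suffices to prove the one-sided bound on $\lambda_{\max}$ for independent, mean-zero Hermitian $\bfX_k$ with $\|\bfX_k\|\le R$ and $\|\sum_k \mathbb{E}\bfX_k^2\|=\delta^2$.

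Next I would apply the matrix Laplace transform (Chernoff) inequality: for every $\theta>0$,
\[
\mathrm{Prob}\Big\{\lambda_{\max}\big(\textstyle\sum_k \bfX_k\big) \ge t\Big\} \le e^{-\theta t}\,\mathbb{E}\,\mathrm{tr}\exp\big(\theta \textstyle\sum_k \bfX_k\big),
\]
which follows from Markov's inequality applied to $\mathrm{tr}\exp$ together with $e^{\theta\lambda_{\max}(\bfM)} \le \mathrm{tr}\,e^{\theta \bfM}$. The remaining task is to bound the trace moment generating function.

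The key step is cumulant subadditivity. Using Lieb's concavity theorem (concavity of $\bfA \mapsto \mathrm{tr}\exp(\bfH + \log\bfA)$) and Jensen's inequality, one obtains the master bound $\mathbb{E}\,\mathrm{tr}\exp(\theta\sum_k \bfX_k) \le \mathrm{tr}\exp(\sum_k \log\mathbb{E}\,e^{\theta \bfX_k})$. To control each single-summand term I would use the scalar estimate $e^{\theta x} \le 1 + \theta x + g(\theta)x^2$ valid for $|x|\le R$ with $g(\theta)=(e^{\theta R}-\theta R-1)/R^2$, transfer it to $\bfX_k$ (whose eigenvalues lie in $[-R,R]$), and take expectations; since $\mathbb{E}\bfX_k=\bfzero$ this gives $\mathbb{E}\,e^{\theta\bfX_k} \preceq \bfI + g(\theta)\mathbb{E}\bfX_k^2 \preceq \exp\!\big(g(\theta)\mathbb{E}\bfX_k^2\big)$, hence $\log\mathbb{E}\,e^{\theta\bfX_k}\preceq g(\theta)\mathbb{E}\bfX_k^2$. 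Monotonicity of $\mathrm{tr}\exp$ together with $\sum_k\mathbb{E}\bfX_k^2 \preceq \delta^2\bfI$ then yields $\mathbb{E}\,\mathrm{tr}\exp(\theta\sum_k\bfX_k) \le (d_1+d_2)\exp(g(\theta)\delta^2)$.

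Finally I would substitute into the Laplace bound to get $(d_1+d_2)\exp(-\theta t + g(\theta)\delta^2)$, invoke the elementary inequality $g(\theta)\le \frac{\theta^2/2}{1-R\theta/3}$ for $0<\theta<3/R$ (verified by comparing power-series coefficients, since $2\cdot 3^k \le (2+k)!$), and minimize over $\theta$ with the choice $\theta = t/(\delta^2 + Rt/3)$, which produces the exponent $\frac{-t^2/2}{\delta^2+Rt/3}$ and closes the argument. The hard part will be the cumulant subadditivity step: unlike the scalar case the trace mgf does not factor across independent summands, and establishing the master bound rests on Lieb's concavity theorem, which is the one genuinely nontrivial ingredient; by contrast the dilation, the Laplace transform, the per-summand bound, and the final optimization are all routine. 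Since this is precisely Theorem~1.6 of \cite{T12}, in the paper itself the statement is simply quoted as a black-box tool.
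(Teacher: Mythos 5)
The paper does not prove this statement at all: it is imported verbatim as Theorem~1.6 of \cite{T12} and used purely as a black-box concentration tool, so there is no in-paper argument to compare against. Your reconstruction is the standard (and correct) Tropp proof, with all the essential ingredients in place and in the right order: the Hermitian dilation correctly reduces the spectral norm of the rectangular sum to $\lambda_{\max}$ of a Hermitian sum and reproduces the variance proxy $\delta^2$ via $\mathcal{H}(\bfZ_k)^2=\mathrm{diag}(\bfZ_k\bfZ_k^*,\bfZ_k^*\bfZ_k)$; the Laplace-transform bound, the Lieb-based cumulant subadditivity, the per-summand estimate $\log\mathbb{E}e^{\theta\bfX_k}\preceq g(\theta)\mathbb{E}\bfX_k^2$, the power-series comparison $2\cdot 3^k\le(k+2)!$, and the choice $\theta=t/(\delta^2+Rt/3)$ all check out and yield exactly the stated exponent. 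You also correctly identify Lieb's concavity theorem as the one nontrivial ingredient; everything else is routine.
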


\begin{lemma}[Lemma 5.2, \cite{V2010}]\label{Lemma: covering_set}
	Let $\mathcal{B}(0, 1)\in\{ \boldsymbol{\alpha} \big| \|\boldsymbol{\alpha} \|_2=1, \boldsymbol{\alpha}\in \mathbb{R}^d  \}$ denote a unit ball in $\mathbb{R}^{d}$. Then, a subset $\mathcal{S}_\xi$ is called a $\xi$-net of $\mathcal{B}(0, 1)$ if every point $\bfz\in \mathcal{B}(0, 1)$ can be approximated to within $\xi$ by some point $\boldsymbol{\alpha}\in \mathcal{B}(0, 1)$, i.e., $\|\bfz -\boldsymbol{\alpha} \|_2\le \xi$. Then the minimal cardinality of a   $\xi$-net $\mathcal{S}_\xi$ satisfies
	\begin{equation}
	|\mathcal{S}_{\xi}|\le (1+2/\xi)^d.
	\end{equation}
\end{lemma}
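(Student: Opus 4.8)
The plan is to prove this via a standard volumetric packing argument. First I would construct a candidate net by taking a \emph{maximal} $\xi$-separated subset $\mathcal{S}_\xi \subseteq \mathcal{B}(0,1)$, meaning a set of points that are pairwise at distance at least $\xi$ and to which no further point of $\mathcal{B}(0,1)$ can be added without violating this separation. The key observation is that maximality forces $\mathcal{S}_\xi$ to be a $\xi$-net: if some $\bfz \in \mathcal{B}(0,1)$ had distance at least $\xi$ from every element of $\mathcal{S}_\xi$, then $\mathcal{S}_\xi \cup \{\bfz\}$ would still be $\xi$-separated, contradicting maximality. Hence every $\bfz \in \mathcal{B}(0,1)$ lies within distance $\xi$ of some point of $\mathcal{S}_\xi$, so it suffices to bound the cardinality of this one particular net.

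Next I would exploit the separation to produce disjoint bodies whose volumes I can sum. Centering a ball of radius $\xi/2$ at each point of $\mathcal{S}_\xi$, the pairwise separation of at least $\xi$ guarantees these small balls have disjoint interiors. At the same time, each small ball is centered inside $\mathcal{B}(0,1)$ and has radius $\xi/2$, so by the triangle inequality it is contained in the slightly enlarged ball $\mathcal{B}(0,\,1+\xi/2)$. Therefore the disjoint union of the small balls sits inside this enlarged ball, and comparing total volumes gives
\begin{equation*}
|\mathcal{S}_\xi| \cdot \mathrm{vol}\big(\mathcal{B}(0,\xi/2)\big) \le \mathrm{vol}\big(\mathcal{B}(0,1+\xi/2)\big).
\end{equation*}

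Finally I would use the fact that in $\mathbb{R}^d$ the volume of a ball scales as the $d$-th power of its radius, so the dimension-dependent constant cancels in the ratio:
\begin{equation*}
|\mathcal{S}_\xi| \le \frac{(1+\xi/2)^d}{(\xi/2)^d} = \Big(1 + \tfrac{2}{\xi}\Big)^d,
\end{equation*}
which is the claimed bound. The argument is essentially elementary, and there is no serious obstacle; the only point requiring a little care is the first step, namely verifying that a maximal $\xi$-separated set is automatically a covering, since this is what lets me replace the quantifier over \emph{all} $\xi$-nets by a concrete construction amenable to the volume comparison.
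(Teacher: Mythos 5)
Your argument is correct and is precisely the standard volumetric proof of this covering-number bound; the paper does not prove this lemma itself but imports it verbatim from the cited reference (Vershynin's notes, Lemma 5.2), where the argument is exactly the one you give — maximal $\xi$-separated set, maximality implies covering, then disjoint balls of radius $\xi/2$ inside $\mathcal{B}(0,1+\xi/2)$ and a volume ratio. Nothing further is needed.
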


\begin{lemma}[Lemma 5.3, \cite{V2010}]\label{Lemma: spectral_norm_on_net}
	Let $\bfA$ be an $d_1\times d_2$ matrix, and let $\mathcal{S}_{\xi}(d)$ be a $\xi$-net of $\mathcal{B}(0, 1)$ in $\mathbb{R}^d$ for some $\xi\in (0, 1)$. Then
	\begin{equation}
	\|\bfA\|_2 \le (1-\xi)^{-1}\max_{\boldsymbol{\alpha}_1\in \mathcal{S}_{\xi}(d_1), \boldsymbol{\alpha}_2\in \mathcal{S}_{\xi}(d_2)} |\boldsymbol{\alpha}_1^T\bfA\boldsymbol{\alpha}_2|.
	\end{equation} 
\end{lemma}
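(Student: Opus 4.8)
The plan is to prove this by the standard $\xi$-net (covering) argument, which replaces the continuous maximization defining $\|\bfA\|_2$ by a maximization over the finite nets at the cost of a multiplicative constant close to $1$. First I would recall the variational characterization of the spectral norm,
\begin{equation*}
\|\bfA\|_2 = \max_{\bfv \in S^{d_2-1}} \|\bfA\bfv\|_2 = \max_{\bfu \in S^{d_1-1},\, \bfv \in S^{d_2-1}} |\bfu^\top \bfA \bfv|,
\end{equation*}
where the maxima are attained because the unit spheres are compact and the bilinear form is continuous. Let $\bfu^\star, \bfv^\star$ denote the maximizers, so that $\|\bfA\|_2 = |(\bfu^\star)^\top \bfA \bfv^\star|$. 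The lower bound $\max_{\boldsymbol{\alpha}_1,\boldsymbol{\alpha}_2}|\boldsymbol{\alpha}_1^\top \bfA \boldsymbol{\alpha}_2| \le \|\bfA\|_2$ is immediate since every net point lies in the unit ball, so the content is the upper bound.

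The cleanest route to the factor $(1-\xi)^{-1}$ is to discretize only the domain sphere and leave the other maximization continuous. For any $\bfv \in S^{d_2-1}$, the net property of $\mathcal{S}_\xi(d_2)$ gives some $\boldsymbol{\alpha}_2$ with $\|\bfv - \boldsymbol{\alpha}_2\|_2 \le \xi$, and then by the triangle inequality and the definition of the operator norm,
\begin{equation*}
\|\bfA\bfv\|_2 \le \|\bfA\boldsymbol{\alpha}_2\|_2 + \|\bfA(\bfv - \boldsymbol{\alpha}_2)\|_2 \le \max_{\boldsymbol{\alpha}_2 \in \mathcal{S}_\xi(d_2)} \|\bfA\boldsymbol{\alpha}_2\|_2 + \xi \, \|\bfA\|_2.
\end{equation*}
Taking the maximum over $\bfv \in S^{d_2-1}$ and rearranging yields $\|\bfA\|_2 \le (1-\xi)^{-1} \max_{\boldsymbol{\alpha}_2} \|\bfA\boldsymbol{\alpha}_2\|_2$. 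Writing $\|\bfA\boldsymbol{\alpha}_2\|_2 = \max_{\bfu \in S^{d_1-1}} |\bfu^\top \bfA \boldsymbol{\alpha}_2|$ and then approximating the optimal $\bfu$ by a point $\boldsymbol{\alpha}_1 \in \mathcal{S}_\xi(d_1)$ via the telescoping identity $(\bfu^\star)^\top \bfA \bfv^\star - \boldsymbol{\alpha}_1^\top \bfA \boldsymbol{\alpha}_2 = (\bfu^\star - \boldsymbol{\alpha}_1)^\top \bfA \bfv^\star + \boldsymbol{\alpha}_1^\top \bfA (\bfv^\star - \boldsymbol{\alpha}_2)$ produces the stated bilinear form over both nets, each correction term being controlled by $\xi \|\bfA\|_2$ through Cauchy--Schwarz and $\|\boldsymbol{\alpha}_1\|_2 \le 1$.

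The main obstacle I anticipate is purely the bookkeeping of the constant rather than any conceptual difficulty. If one discretizes \emph{both} maximizations symmetrically from the start, the two-sided telescoping bound gives each of the two error terms a factor $\xi\|\bfA\|_2$, leading to $(1-2\xi)^{-1}$; alternatively, chaining the one-net estimate on each side in sequence gives $(1-\xi)^{-2}$. Recovering exactly $(1-\xi)^{-1}$, as asserted, requires the asymmetric argument above in which only one sphere is ever discretized at a time while the dual maximization remains over the full sphere so that it coincides with $\|\bfA\boldsymbol{\alpha}_2\|_2$. I would emphasize, however, that since this lemma is invoked only within the order-wise concentration arguments (Lemmas on the Hessian and first-order terms), where $\xi$ is taken to be a fixed small constant and all absolute constants are suppressed, the precise value of the leading factor is immaterial: any of $(1-\xi)^{-1}$, $(1-2\xi)^{-1}$, or $(1-\xi)^{-2}$ serves the same role of converting a net-based estimate into a genuine spectral-norm bound up to a constant multiple.
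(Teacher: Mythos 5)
The paper does not prove this lemma at all: it is imported verbatim (with citation) from Vershynin's notes, so there is no in-paper proof to compare against. Your covering argument is the standard one and is essentially right, and your closing observation is the most important part of your write-up: the two-net bilinear form as literally stated does \emph{not} come with the constant $(1-\xi)^{-1}$ under the natural arguments. The symmetric telescoping $(\bfu^\star)^\top\bfA\bfv^\star-\boldsymbol{\alpha}_1^\top\bfA\boldsymbol{\alpha}_2=(\bfu^\star-\boldsymbol{\alpha}_1)^\top\bfA\bfv^\star+\boldsymbol{\alpha}_1^\top\bfA(\bfv^\star-\boldsymbol{\alpha}_2)$ gives $(1-2\xi)^{-1}$, and chaining the one-net estimate twice gives $(1-\xi)^{-2}$; your ``asymmetric'' route does recover $(1-\xi)^{-1}$, but only for the mixed quantity $\max_{\boldsymbol{\alpha}_2}\|\bfA\boldsymbol{\alpha}_2\|_2$ in which one maximization is still over the full sphere, not for the double-net maximum in the displayed inequality. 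In fact Vershynin's actual Lemma 5.3 is exactly that one-net statement, $\|\bfA\|_2\le(1-\xi)^{-1}\max_{\boldsymbol{\alpha}_2\in\mathcal{S}_\xi(d_2)}\|\bfA\boldsymbol{\alpha}_2\|_2$, and the paper's two-net rendering with the same constant is a (harmless) misquotation. One caution you could add: where the paper invokes this lemma it takes $\xi=\tfrac12$, for which $(1-2\xi)^{-1}$ is vacuous, so if one insists on the genuine two-net form one should shrink $\xi$ (say to $\tfrac14$), which only changes the net cardinality from $5^d$ to $9^d$ and is absorbed in the order-wise constants — consistent with your conclusion that the precise leading factor is immaterial to every downstream use.
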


\begin{lemma}[Mean Value Theorem]\label{Lemma: MVT}
Let $\bfU \subset \mathbb{R}^{n_1}$ be open and $\bff : \bfU \longrightarrow \mathbb{R}^{n_2}$ be continuously differentiable, and $\bfx \in \bfU$, $\bfh \in \mathbb{R}^{n_1}$ vectors such that the line segment $\bfx + t\bfh$, $0 \le t \le 1$ remains in $\bfU$. Then we have:
\begin{equation*}
    \bff(\bfx+\bfh)-\bff(\bfx)=\left(\int_{0}^{1}\nabla\bff(\bfx+t\bfh)dt\right)\cdot \bfh,
\end{equation*}
where $\nabla\bff$ denotes the Jacobian matrix of $\bff$.
\end{lemma}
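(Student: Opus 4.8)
The plan is to reduce the multivariate statement to the one-dimensional fundamental theorem of calculus by restricting $\bff$ to the line segment joining $\bfx$ and $\bfx+\bfh$. First I would introduce the auxiliary curve $\bfg:[0,1]\longrightarrow\mathbb{R}^{n_2}$ defined by $\bfg(t):=\bff(\bfx+t\bfh)$. Since the segment $\{\bfx+t\bfh : 0\le t\le 1\}$ is assumed to lie entirely in the open set $\bfU$, this composition is well defined, and because $\bff$ is continuously differentiable and $t\mapsto \bfx+t\bfh$ is affine (hence smooth), $\bfg$ is continuously differentiable on $[0,1]$.

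Next I would compute $\bfg'(t)$ via the chain rule for the composition of $\bff$ with the affine map $t\mapsto\bfx+t\bfh$, whose derivative with respect to $t$ is the constant vector $\bfh$. This yields
\begin{equation*}
    \bfg'(t)=\nabla\bff(\bfx+t\bfh)\cdot\bfh,
\end{equation*}
where $\nabla\bff$ is the $n_2\times n_1$ Jacobian matrix. The key point is that $\bfg'$ is continuous on $[0,1]$, so each of its $n_2$ scalar components is integrable.

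I would then apply the fundamental theorem of calculus componentwise to $\bfg$: for each coordinate $\bfg_j$ we have $\bfg_j(1)-\bfg_j(0)=\int_0^1 \bfg_j'(t)\,dt$, and stacking these coordinates gives the vector identity
\begin{equation*}
    \bff(\bfx+\bfh)-\bff(\bfx)=\bfg(1)-\bfg(0)=\int_0^1 \nabla\bff(\bfx+t\bfh)\cdot\bfh\;dt.
\end{equation*}
Finally, since $\bfh$ does not depend on the integration variable $t$, I would pull the constant vector out of the integral (using linearity of the integral applied entrywise to the matrix $\nabla\bff(\bfx+t\bfh)$), obtaining
\begin{equation*}
    \bff(\bfx+\bfh)-\bff(\bfx)=\left(\int_0^1 \nabla\bff(\bfx+t\bfh)\,dt\right)\cdot\bfh,
\end{equation*}
which is exactly the claimed formula.

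This result is essentially a direct corollary of the scalar fundamental theorem of calculus, so there is no genuine obstacle; the only points requiring care are bookkeeping ones. I would make sure the openness of $\bfU$ together with the segment condition guarantees $\bfg$ is defined and $C^1$ on a neighborhood of $[0,1]$ (so the endpoint derivatives are unambiguous), and I would justify interchanging the constant multiplication by $\bfh$ with the integral by noting that integration acts entrywise on the matrix-valued integrand $\nabla\bff(\bfx+t\bfh)$ and that matrix–vector multiplication by a fixed $\bfh$ is a linear operation.
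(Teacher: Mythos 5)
Your proof is correct: reducing to the scalar fundamental theorem of calculus via the auxiliary curve $\bfg(t)=\bff(\bfx+t\bfh)$, applying the chain rule, integrating componentwise, and pulling the constant vector $\bfh$ out of the entrywise matrix integral is exactly the standard derivation, and each step is properly justified (including the endpoint-differentiability bookkeeping from openness of $\bfU$). The paper itself gives no proof of Lemma~\ref{Lemma: MVT} --- it is stated as a known auxiliary fact alongside the cited concentration lemmas --- so your argument simply supplies the standard textbook proof the authors implicitly rely on.
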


\subsection{Defintions of Sub-Gaussian and Sub-exponential.}
\begin{definition}[Definition 5.7, \cite{V2010}]\label{Def: sub-gaussian}
	A random variable $X$ is called a sub-Gaussian random variable if it satisfies 
	\begin{equation}
		(\mathbb{E}|X|^{p})^{1/p}\le c_1 \sqrt{p}
	\end{equation} for all $p\ge 1$ and some constant $c_1>0$. In addition, we have 
	\begin{equation}
		\mathbb{E}e^{s(X-\mathbb{E}X)}\le e^{c_2\|X \|_{\psi_2}^2 s^2}
	\end{equation} 
	for all $s\in \mathbb{R}$ and some constant $c_2>0$, where $\|X \|_{\psi_2}$ is the sub-Gaussian norm of $X$ defined as $\|X \|_{\psi_2}=\sup_{p\ge 1}p^{-1/2}(\mathbb{E}|X|^{p})^{1/p}$.

	Moreover, a random vector $\bfX\in \mathbb{R}^d$ belongs to the sub-Gaussian distribution if one-dimensional marginal $\boldsymbol{\alpha}^T\bfX$ is sub-Gaussian for any $\boldsymbol{\alpha}\in \mathbb{R}^d$, and the sub-Gaussian norm of $\bfX$ is defined as $\|\bfX \|_{\psi_2}= \sup_{\|\boldsymbol{\alpha} \|_2=1}\|\boldsymbol{\alpha}^T\bfX \|_{\psi_2}$.
\end{definition}

\begin{definition}[Definition 5.13, \cite{V2010}]\label{Def: sub-exponential}
	
	A random variable $X$ is called a sub-exponential random variable if it satisfies 
	\begin{equation}
	(\mathbb{E}|X|^{p})^{1/p}\le c_3 {p}
	\end{equation} for all $p\ge 1$ and some constant $c_3>0$. In addition, we have 
	\begin{equation}
	\mathbb{E}e^{s(X-\mathbb{E}X)}\le e^{c_4\|X \|_{\psi_1}^2 s^2}
	\end{equation} 
	for $s\le 1/\|X \|_{\psi_1}$ and some constant $c_4>0$, where $\|X \|_{\psi_1}$ is the sub-exponential norm of $X$ defined as $\|X \|_{\psi_1}=\sup_{p\ge 1}p^{-1}(\mathbb{E}|X|^{p})^{1/p}$.
\end{definition}

\section{Proof of Theorem \ref{Thm1}}\label{app:thm1}

\begin{lemma}[Local convexity of $f_{\pi^\star}$]\label{Lemma: second_order_derivative}
   Given any $\theta\in \mathbb{R}^{n}$, let $\theta$ satisfy
      \begin{equation}\label{eqn: p_initial}
        \| \theta -\theta^\star \|_2  \lesssim \frac{c_N\cdot \sigma_K}{\rho_1 \cdot K }
    \end{equation}
    for some constant $c_N\in(0,1)$.
    Then, for the $f_{\pi^\star}$ defined in \eqref{eqn:prf}, we have 
    \begin{equation}
        \frac{(1-c_N)\rho_1}{K^2} 
        \preceq \nabla^2_{\ell} {f}_{\pi^\star}(\theta) 
        \preceq \frac{7}{K}.
    \end{equation}
\end{lemma}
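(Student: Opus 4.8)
The plan is to exploit the fact that the regression target inside \eqref{eqn:prf} is frozen at $\theta^\star$. First I would use the successor-feature Bellman identity \eqref{eqn: sfdqn} together with Assumption \ref{ass1} to rewrite the target $\mathbb{E}_{\bfs'|(\bfs,a),a'}[\phi(\bfs,a,\bfs') + \gamma\,\psi(\theta^\star;\bfs',a')]$ as exactly $\psi(\theta^\star;\bfs,a)$, so that $f_{\pi^\star}(\theta) = \mathbb{E}_{(\bfs,a)\sim\pi^\star}\big(\psi(\theta;\bfs,a)-\psi(\theta^\star;\bfs,a)\big)^2$ and the residual vanishes at $\theta=\theta^\star$. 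Differentiating twice with respect to the $\ell$-th layer gives the Gauss--Newton form
\[
\nabla^2_\ell f_{\pi^\star}(\theta) = 2\,\mathbb{E}\big[\nabla_\ell\psi(\theta)\nabla_\ell\psi(\theta)^\top\big] + 2\,\mathbb{E}\big[(\psi(\theta)-\psi(\theta^\star))\,\nabla^2_\ell\psi(\theta)\big].
\]
Since $\sigma$ is ReLU, $\psi(\cdot\,;\bfs,a)$ is piecewise linear in each layer's weights, so $\nabla^2_\ell\psi = \bfzero$ off a measure-zero set of activation kinks; under the continuous $(\bfs,a)$ law the second expectation vanishes (or, if retained, is dominated by $|\psi(\theta)-\psi(\theta^\star)|\lesssim\|\theta-\theta^\star\|_2/\sqrt{K}$ and absorbed), leaving $\nabla^2_\ell f_{\pi^\star}(\theta)=2\,\mathbb{E}[\nabla_\ell\psi(\theta)\nabla_\ell\psi(\theta)^\top]$ as the matrix to sandwich.

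For the upper bound I would estimate the layer gradient from its closed form \eqref{eqn: gradient_multi}: the block $\partial\psi/\partial\theta_{\ell,k}=K_\ell^{-1}\,\cJ_{\ell,k}\,\sigma'(\theta_{\ell,k}^\top\bfh^{(\ell)})\,\bfh^{(\ell)}$ satisfies $\|\partial\psi/\partial\theta_{\ell,k}\|_2\lesssim 1/K$, because $\|\bfh^{(\ell)}\|_2=O(1)$ (from $|\bfx(\bfs,a)|\le 1$ and well-conditioned, constant-norm $\theta^\star$), $|\cJ_{\ell,k}|=O(1)$ from \eqref{eqn: defi_jc}, and $\sigma'\le 1$. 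Summing over the $K_\ell$ neurons gives $\|\nabla_\ell\psi(\theta)\|_2^2=\sum_k\|\partial\psi/\partial\theta_{\ell,k}\|_2^2\lesssim 1/K$, hence $\nabla^2_\ell f_{\pi^\star}(\theta)\preceq 2\,\mathbb{E}\|\nabla_\ell\psi(\theta)\|_2^2\,\bfI\preceq (7/K)\,\bfI$ once the absolute constants are tracked.

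For the lower bound I would anchor at $\theta^\star$ and perturb. By Weyl's inequality (Lemma \ref{Lemma: weyl}),
\[
\lambda_{\min}\!\big(\nabla^2_\ell f_{\pi^\star}(\theta)\big)\ \ge\ \lambda_{\min}\!\big(\nabla^2_\ell f_{\pi^\star}(\theta^\star)\big)-\big\|\nabla^2_\ell f_{\pi^\star}(\theta)-\nabla^2_\ell f_{\pi^\star}(\theta^\star)\big\|_2,
\]
where the anchor obeys $\nabla^2_\ell f_{\pi^\star}(\theta^\star)=2\,\mathbb{E}[\nabla_\ell\psi(\theta^\star)\nabla_\ell\psi(\theta^\star)^\top]\succeq (\rho_1/K^2)\,\bfI$ by the definition of $\rho_1$ as the smallest eigenvalue of the (appropriately $K$-normalized) gradient Gram matrix at $\theta^\star$. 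The crux is the perturbation term. Splitting $\nabla_\ell\psi(\theta)\nabla_\ell\psi(\theta)^\top-\nabla_\ell\psi(\theta^\star)\nabla_\ell\psi(\theta^\star)^\top$ into two rank-structured pieces and using $\|\nabla_\ell\psi\|_2\lesssim 1/\sqrt K$ reduces everything to controlling $\mathbb{E}\|\nabla_\ell\psi(\theta)-\nabla_\ell\psi(\theta^\star)\|_2$. Because the ReLU gradient is discontinuous, pointwise Lipschitzness is unavailable; instead the integrand is nonzero only on inputs whose activation pattern differs at $\theta$ and $\theta^\star$, and the probability of such a flip is $\lesssim \|\theta-\theta^\star\|_2/\sigma_K$, since a pre-activation $\theta^\star_{\ell,k}{}^\top\bfh^{(\ell)}$ (of scale governed by $\sigma_K$) must lie within $O(\|\theta-\theta^\star\|_2)$ of zero. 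This produces a perturbation bound of order $\tfrac{\rho_1}{K^2}\cdot\tfrac{\rho_1 K}{\sigma_K}\,\|\theta-\theta^\star\|_2$, which the hypothesis \eqref{eqn: p_initial} renders at most $c_N\rho_1/K^2$, yielding $\lambda_{\min}\ge(1-c_N)\rho_1/K^2$.

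The main obstacle is exactly the multi-layer bookkeeping flagged in the paper: making the activation-flip estimate rigorous for $L$ hidden layers requires showing that the propagation factors $\cJ_\ell$ in \eqref{eqn: defi_jc} and the feature norms $\|\bfh^{(\ell)}\|_2$ stay $O(1)$ across all layers, and that the total measure of inputs whose pattern flips in any layer remains linear in $\|\theta-\theta^\star\|_2$ with the $1/\sigma_K$ dependence. These deep-network Hessian and stability estimates, absent in the one-hidden-layer supervised analyses the argument is adapted from, are what the second-order-derivative machinery of the appendix must supply.
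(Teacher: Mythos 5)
Your proposal follows essentially the same route as the paper's proof: the Hessian is reduced to the gradient outer-product (Gauss--Newton) form, the lower bound is anchored at $\theta^\star$ via Weyl's inequality with $\rho_1$ as the minimum eigenvalue of the gradient Gram matrix there (the paper's Lemma \ref{Lemma: Zhong}), the perturbation $\|\nabla^2_\ell f(\theta)-\nabla^2_\ell f(\theta^\star)\|_2$ is controlled by exactly the activation-flip probability argument of Lemma \ref{Lemma: distance_Second_order_distance}, and the upper bound comes from the same $\|\nabla_\ell\psi\|_2^2\lesssim 1/K$ norm estimates. The only cosmetic differences are that you bound the upper eigenvalue directly at $\theta$ rather than at $\theta^\star$ plus perturbation, and you take the positivity of $\rho_1$ as definitional where the paper proves it via linear independence of $\{\bfh\,\sigma'(\theta_{\ell,j}^\top\bfh)\}_j$.
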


\begin{lemma}[Upper bound of the error gradient]\label{Lemma: first_order_derivative}
 Let $f_{\pi^\star}$ be the function defined in \eqref{eqn:prf}. Let $g_{t}$ be the function defined in \eqref{eqn: gradient_theta}. Then, with probability at least $1-q^{-K_{\ell-1}}$, we have
   \begin{equation}
        \begin{split}
        \left\|\nabla_\ell f_{\pi^\star}(\theta) - g_{\ell}(\theta^{(t)};\mathcal{D}_t) \right\|_2
       \lesssim& \frac{1}{K_\ell} \cdot \|\theta -\theta^\star \|_2 \cdot \sqrt{\frac{K_{\ell-1}\log q}{|\mathcal{D}_t|}} + \frac{\gamma}{K_\ell}\cdot \| \theta^{(t)} -\theta^\star\|_2\\
        &+ \frac{R_{\max}}{1-\gamma}\cdot (1+\gamma)\tau^\star\cdot \eta_{t-\tau^\star}\\
        &+ |\mathcal{A}|\cdot \frac{R_{\max}}{1-\gamma}\cdot (1+\log_\nu \lambda^{-1} + \frac{1}{1-\nu})\cdot C_t,
        \end{split}
        \end{equation}
    where $\tau^\star = \min\{t\mid \lambda\nu^t\le \eta_T \}$, and $\nu$ \& $\lambda$ are defined in Assumption \ref{ass2}.
\end{lemma}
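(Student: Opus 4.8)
The plan is to sandwich the pseudo-gradient $g_\ell(\theta^{(t)};\mathcal{D}_t)$ between the population gradient $\nabla_\ell f_{\pi^\star}(\theta)$ and a chain of intermediate gradients, applying the triangle inequality so that the four summands in the stated bound are isolated one source of error at a time. I would begin with the \textbf{bootstrapping bias}: the population gradient in \eqref{eqn:prf_gradient} evaluates the target at the ground truth, $\gamma\psi(\theta^\star;\bfs',a')$, whereas the pseudo-gradient in \eqref{eqn: gradient_theta} uses $\gamma\psi(\theta^{(t)};\bfs',a')$. Replacing $\theta^\star$ by $\theta^{(t)}$ in the target and combining the Lipschitz continuity of $\psi$ in its weights with the $O(1/K_\ell)$ bound on $\nabla_\ell\psi$ from \eqref{eqn: gradient_multi} produces the second term $\frac{\gamma}{K_\ell}\|\theta^{(t)}-\theta^\star\|_2$.

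Next I would treat the \textbf{policy and distribution mismatch}, which yields the fourth term. The population gradient is taken under $(\bfs,a)\sim\pi^\star$ with next action $a'\sim\pi^\star$, while the data in $\mathcal{D}_t$ are generated by the greedy behavior policy $\pi_t$. I would pass between them by (a) swapping the target action from $\pi^\star$ to the greedy choice and (b) swapping the state-action law from the invariant measure of $\pi^\star$ to that of $\pi_t$. In both sub-steps the integrand changes only on the event where the policies disagree; by Assumption \ref{ass3} the measure of that event is controlled by $C_t$, and on it the difference of the bounded $\psi$-values is at most $O(R_{\max}/(1-\gamma))$. Summing over $|\mathcal{A}|$ actions and converting the policy perturbation into an invariant-measure perturbation through Mitrophanov's bound for uniformly ergodic chains \cite{Yu05}, whose constant under Assumption \ref{ass2} is exactly $1+\log_\nu\lambda^{-1}+\frac{1}{1-\nu}$, gives $|\mathcal{A}|\cdot\frac{R_{\max}}{1-\gamma}(1+\log_\nu\lambda^{-1}+\frac{1}{1-\nu})\cdot C_t$. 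The \textbf{Markovian bias} (the third term) is handled separately: even under $\pi_t$, consecutive tuples are drawn along one trajectory rather than from $\mathcal{P}_t$. I would fix the lag $\tau^\star=\min\{t:\lambda\nu^t\le\eta_T\}$ so that, by Assumption \ref{ass2}, conditioning on the state $\tau^\star$ steps in the past brings the one-step law within $\eta_T$ of $\mathcal{P}_t$; the residual arises because the behavior policy drifts over these steps, and since the parameters move by at most the step sizes the drift is bounded by $\sum_{s=t-\tau^\star}^{t}\eta_s\le\tau^\star\eta_{t-\tau^\star}$, each mismatched tuple contributing $O(R_{\max}/(1-\gamma))$, which gives $\frac{R_{\max}}{1-\gamma}(1+\gamma)\tau^\star\eta_{t-\tau^\star}$.

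It then remains to bound the \textbf{concentration term}: after the reductions above, the finite average over $\mathcal{D}_t$ must be compared to its expectation under the near-stationary law. The summand is a product of the temporal-difference residual, which scales linearly with $\|\theta-\theta^\star\|_2$ near $\theta^\star$, and the gradient $\nabla_\ell\psi=O(1/K_\ell)$; these products are sub-exponential, so I would apply the matrix Bernstein inequality (Lemma \ref{prob}) together with the $\xi$-net covering argument (Lemmas \ref{Lemma: covering_set} and \ref{Lemma: spectral_norm_on_net}) over the unit ball in $\mathbb{R}^{K_{\ell-1}}$ entering the vectorized-norm representation \eqref{eqn: alpha_definition}. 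Since the net has cardinality $e^{O(K_{\ell-1})}$, a union bound over it delivers the tail probability at least $1-q^{-K_{\ell-1}}$ and the factor $\sqrt{K_{\ell-1}\log q/|\mathcal{D}_t|}$, yielding the first term $\frac{1}{K_\ell}\|\theta-\theta^\star\|_2\sqrt{K_{\ell-1}\log q/|\mathcal{D}_t|}$.

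I expect the main obstacle to be the joint treatment of the Markovian bias and the behavior-to-optimal distribution shift, because the behavior policy is itself data-dependent and time-varying. Decoupling the trajectory dependence requires the lag argument, while the slowly moving policy must be controlled simultaneously through the step sizes and through the perturbation bound of \cite{Yu05}; the delicate point is to verify that these two approximations compose without circularity, since the policy-drift bound ultimately rests on the same parameter error $\|\theta^{(t)}-\theta^\star\|_2$ that the overall estimate feeds back into.
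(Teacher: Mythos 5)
Your proposal matches the paper's own proof essentially step for step: the paper splits $g^{(t)}-\nabla_\ell f_{\pi^\star}$ into the same four terms ($\bfI_1$ concentration via a sub-exponential tail bound and an $\xi$-net, $\bfI_2$ Markovian bias via the lag $\tau^\star$ and step-size drift, $\bfI_3$ behavior-vs-optimal distribution shift via Assumption \ref{ass3} and Mitrophanov's perturbation bound from \cite{Yu05}, and $\bfI_4$ the bootstrapping bias from using $\psi(\theta^{(t)})$ in place of $\psi(\theta^\star)$ in the target), each bounded exactly as you describe. The only cosmetic difference is that the paper invokes a scalar Chernoff/sub-exponential bound rather than matrix Bernstein for the concentration term, which does not change the argument.
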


\begin{lemma}[Convergence of $\bfw^{(t)}$]\label{lemma:convergence_of_w}
    With probability at least $1-q^{-d}$, $\bfw$ enjoys a linear convergence rate to $\bfw^\star$ as 
    \begin{equation}
        \begin{split}
            \|\bfw^{(t+1)} -\bfw^\star \|_2 
            \le& \Big(1-\frac{\rho-c_N}{\phi_{\max}}\Big)\cdot \|\bfw^{(t)}-\bfw^\star\|_2.\\
        \end{split}
    \end{equation}
\end{lemma}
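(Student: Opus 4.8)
The plan is to exploit the fact that, unlike the successor-feature update, the reward-mapping update suffers no distribution-shift bias, so its error obeys an exact linear recursion. Because the reward is assumed to be \emph{exactly} linear, $r(\bfs,a,\bfs') = \boldsymbol{\phi}(\bfs,a,\bfs')^\top\bfw^\star$ by \eqref{eqn:ass_SF}, the per-sample residual factorizes pointwise as $\boldsymbol{\phi}^\top\bfw^{(t)} - r = \boldsymbol{\phi}^\top(\bfw^{(t)} - \bfw^\star)$, regardless of which behavior policy generated the sample. Substituting this into the batch gradient $l^{(t)}$ of \eqref{eqn: gradient_w} gives $l^{(t)}(\bfw^{(t)};\mathcal{D}_t) = \boldsymbol{A}_t(\bfw^{(t)} - \bfw^\star)$, where $\boldsymbol{A}_t := \sum_{m\in\mathcal{D}_t}\boldsymbol{\phi}_m\boldsymbol{\phi}_m^\top \succeq \boldsymbol{0}$. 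Plugging this into the update $\bfw^{(t+1)} = \bfw^{(t)} - \kappa_t\, l^{(t)}$ yields the clean error dynamics $\bfw^{(t+1)} - \bfw^\star = (\boldsymbol{I} - \kappa_t\boldsymbol{A}_t)(\bfw^{(t)} - \bfw^\star)$, so that $\|\bfw^{(t+1)} - \bfw^\star\|_2 \le \|\boldsymbol{I} - \kappa_t\boldsymbol{A}_t\|_2\,\|\bfw^{(t)} - \bfw^\star\|_2$. The whole lemma thus reduces to bounding the operator norm $\|\boldsymbol{I} - \kappa_t\boldsymbol{A}_t\|_2$.

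It then remains to show $\|\boldsymbol{I} - \kappa_t\boldsymbol{A}_t\|_2 \le 1 - (\rho_2 - c_N)/\phi_{\max}$, where $\rho_2$ is the $\rho$ in the statement, i.e.\ the smallest eigenvalue of the population matrix $\boldsymbol{\Sigma} := \mathbb{E}\,\boldsymbol{\phi}\boldsymbol{\phi}^\top$. First I would relate the empirical second-moment matrix $\tfrac{1}{|\mathcal{D}_t|}\boldsymbol{A}_t$ to $\boldsymbol{\Sigma}$ via matrix concentration: applying the matrix Bernstein inequality (Lemma \ref{prob}) with the almost-sure bound $\|\boldsymbol{\phi}_m\boldsymbol{\phi}_m^\top\|_2 \le \phi_{\max}^2$, a replay buffer of size $N = \Omega(c_N^{-2}\,d\log q)$ gives $\|\tfrac{1}{|\mathcal{D}_t|}\boldsymbol{A}_t - \boldsymbol{\Sigma}\|_2 \le c_N$ with probability at least $1 - q^{-d}$. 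By Weyl's inequality (Lemma \ref{Lemma: weyl}) this transfers the spectral bounds of $\boldsymbol{\Sigma}$ to $\tfrac{1}{|\mathcal{D}_t|}\boldsymbol{A}_t$, placing all its eigenvalues in $[\rho_2 - c_N,\ \lambda_{\max}(\boldsymbol{\Sigma}) + c_N]$, with $\lambda_{\max}(\boldsymbol{\Sigma})$ controlled deterministically by the feature bound $\|\boldsymbol{\phi}\|_2 \le \phi_{\max}$. Choosing the step size $\kappa_t$ of order $1/(|\mathcal{D}_t|\,\phi_{\max})$ renders $\kappa_t\boldsymbol{A}_t$ positive semidefinite with spectrum in $[(\rho_2 - c_N)/\phi_{\max},\ 1]$, whence $\|\boldsymbol{I} - \kappa_t\boldsymbol{A}_t\|_2 = 1 - \kappa_t\lambda_{\min}(\boldsymbol{A}_t) \le 1 - (\rho_2 - c_N)/\phi_{\max}$, which is the claimed contraction.

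The main obstacle is the concentration step, because the samples in $\mathcal{D}_t$ are drawn from a replay buffer populated by a Markovian behavior policy and are therefore \emph{not} independent, so Lemma \ref{prob} does not apply verbatim. I would handle this using the uniform ergodicity of Assumption \ref{ass2}: the total-variation mixing at rate $\lambda\nu^\tau$ allows one to pass to an effective number of near-independent samples—either by blocking the chain into weakly dependent sub-sequences or through the coupling/perturbation argument of \cite{Yu05}—so that matrix Bernstein still yields the spectral deviation bound with an effective sample size proportional to $N$. This is precisely what produces the scaling $c_N = \Theta(1/\sqrt{N})$ recorded in Table \ref{table:appendix}. By contrast, bounding $\lambda_{\max}(\boldsymbol{A}_t)$ to justify the step-size choice is purely bookkeeping, relying only on the deterministic feature bound and requiring no extra probabilistic argument.
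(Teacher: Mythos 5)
Your proposal follows essentially the same route as the paper's proof: exploit the exact linearity $r=\boldsymbol{\phi}^\top\bfw^\star$ to obtain the error recursion $\bfw^{(t+1)}-\bfw^\star=(\bfI-\kappa_t\sum_m\boldsymbol{\phi}_m\boldsymbol{\phi}_m^\top)(\bfw^{(t)}-\bfw^\star)$, concentrate the empirical second-moment matrix around $\mathbb{E}\,\boldsymbol{\phi}\boldsymbol{\phi}^\top$ using $N\gtrsim c_N^{-2}d\log q$, apply Weyl's inequality to lower-bound its smallest eigenvalue by $\rho_2-c_N$, and choose $\kappa_t\sim 1/\phi_{\max}$ to get the contraction. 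If anything, your version is slightly more careful than the paper's on two points it glosses over: the $1/|\mathcal{D}_t|$ normalization of the step size relative to the summed gradient, and the need to account for Markovian (non-i.i.d.) sampling in the concentration step.
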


\begin{proof}[Proof of Theorem  \ref{Thm1}]
    From Algorithm \ref{Alg}, the update of $\theta$  can be written as 
    \begin{equation}\label{eqn: Thm3_temp1}
    \begin{split}
       \W[t+1] 
       = &  \W[t] -\eta_t\cdot  g^{(t)}(\W[t];\mathcal{D}_t)\\
       = & \W[t] -\eta_t\cdot \nabla{f}(\W[t])
       + \eta_t \cdot \big( \nabla f(\W[t]) - g^{(t)}(\W[t];\mathcal{D}_t) \big).
    \end{split}
    \end{equation}
    Since $\nabla f$ is a smooth function and $\theta^*$ is a local (global) optimal to $f$, then we have 
    \begin{equation}\label{eqn: Thm3_temp2}
        \begin{split}
            \nabla{f}(\W[t])
            =& \nabla{f}(\W[t]) - \nabla{f}(\theta^\star)\\
            =&\int_{0}^1 \nabla^2 f\Big(\W[t] + u \cdot(\W[t]-\theta^\star) \Big)du \cdot(\W[t]-\theta^\star),
        \end{split}
    \end{equation}
    where the last equality comes from Mean Value Theory in Lemma \ref{Lemma: MVT}. 
   For notational convenience, we use $\bfA^{(t)}$ to denote the integration as
    \begin{equation}
        \bfA^{(t)} : = \int_{0}^1 \nabla^2 f\Big(\theta^{(t)} + u \cdot(\theta^{(t)}-\theta^\star) \Big)du.    
    \end{equation}
    
    Then, we have 
    \begin{equation}\label{thm1:key1}
    \begin{split}
        \|\W[t+1] - \theta^\star\|_2 \le& \|\bfI - \eta_t \bfA^{(t)}\|_2 \cdot 
        \| \theta^{(t)} -\theta^\star \|_2 +\eta_t \cdot \|\nabla f(\W[t]) - g^{(t)}(\W[t];\mathcal{D}_t) \|_2\\
        \le & \|\bfI - \eta_t \bfA^{(t)}\|_2 \cdot 
        \| \theta^{(t)} -\theta^\star \|_2 +\eta_t \cdot \sum_{\ell=1}^L\left\|\nabla_\ell f(\W[t]) - g^{(t)}(\theta^{(t)}_\ell;\mathcal{D}_t) \right\|_2.
    \end{split}
    \end{equation}
    From Lemma \ref{Lemma: second_order_derivative},  we have 
    \begin{equation}
        \|\bfI - \eta_t\bfA^{(t)}\|_2 \le  1-\eta_t\cdot \frac{(1-c_N)\cdot \rho_1}{K^2}.
    \end{equation}
    From Lemma \ref{Lemma: first_order_derivative}, we have 
     \begin{equation}
        \begin{split}
        \left\|\nabla_\ell f_{\pi^\star}(\theta^{(t)}) - g_{\ell}(\theta^{(t)};\mathcal{D}_t) \right\|_2
       \lesssim& \frac{1}{K_\ell} \cdot \|\theta^{(t)} -\theta^\star \|_2 \cdot \sqrt{\frac{K_{\ell-1} \log q}{|\mathcal{D}_t|}} + \frac{\gamma}{K_\ell}\cdot \| \theta^{(t)} -\theta^\star\|_2\\
        &+ \frac{R_{\max}}{1-\gamma}\cdot (1+\gamma)\tau^\star\cdot \eta_{t-\tau^\star}\\
        &+ |\mathcal{A}|\cdot \frac{R_{\max}}{1-\gamma}\cdot (1+\log_\nu \lambda^{-1} + \frac{1}{1-\nu})\cdot C_t.
        \end{split}
        \end{equation}
    With Assumption \ref{ass3}, we have 
    $$C_t \le C\cdot\big(\|\theta^{(t)}-\theta^\star\|_2 + \|\bfw^{(t)}-\bfw^\star\|_2\big).$$

    When we have a sufficiently large number of samples at iteration $t$ as 
    \begin{equation}
    \begin{split}
        |\mathcal{D}_t| \gtrsim& ~~c_N^{-2}\cdot \rho_1^{-1} \cdot\big(\sum_{\ell=1}^L K_{\ell} \sqrt{K_{\ell-1}}\big)^2 \cdot \log q,
    \end{split}
    \end{equation}
     \eqref{thm1:key1} can be simplified as 
    \begin{equation}\label{eqn: thm1_thm2}
        \|\W[t+1] - \theta^\star\|_2 \le (1-\eta_t \cdot \xi)\cdot \|\W[t]-\theta^\star\|_2 + \eta_t \cdot \Delta_t + \eta_t\cdot C^\star \|\bfw^{(t)}-\bfw^\star\|_2.
    \end{equation}
    where 
    \begin{equation}\label{defi:delta_t}
        \begin{gathered}
        C^\star = |\mathcal{A}|\cdot \frac{R_{\max}}{1-\gamma} \cdot  (1+\log_\nu \lambda^{-1}+\frac{1}{1-\nu})\cdot C\\
        \xi =~\frac{(1-\gamma-c_N)\rho_1}{K^2} - C^\star\\
        \Delta_t = \frac{R_{\max}}{1-\gamma}\cdot (1+\gamma)\tau^\star\cdot \eta_{t-\tau^\star}.
        \end{gathered}
    \end{equation}
    Let $\eta_t = \frac{1}{\xi \cdot (t+1)}$, we have 
    \begin{equation}
        (t+1)\cdot \|\theta^{(t+1)}-\theta^\star\|_2 \le t \cdot \|\W[t]-\theta^\star\|_2 + \xi^{-1}\cdot \Delta_t + \xi^{-1}\cdot C^\star \|\bfw^{(t)}-\bfw^\star\|_2.
    \end{equation}
    Next, we have 
    \begin{equation}\label{eqn: temp1.1}
    \begin{split}
        &\sum_{t=0}^{T-1}(t+1)\cdot \|\theta^{(t+1)}-\theta^\star\|_2 - t \cdot \|\W[t]-\theta^\star\|_2 \\
        \le & \sum_{t=0}^{T-1} \xi^{-1}\cdot (\Delta_t +  C^\star \|\bfw^{(t)}-\bfw^\star\|_2).\\
    \end{split}
    \end{equation}
    With the definition of $\Delta_t$ in \eqref{defi:delta_t}, we have
    \begin{equation}
        \begin{split}
        \sum_{t=0}^{T-1}  \Delta_t 
        \le & \sum_{t=0}^{\tau^\star}  \Delta_t + \sum_{t=\tau^\star}^{T-1} \lambda^{-1}\cdot \Delta_t\\
        \le &\sum_{t=0}^{\tau^\star} \tau^\star \cdot \frac{R_{\max}}{1-\gamma} + \sum_{t=\tau^\star}^{T-1}\cdot \frac{R_{\max} \cdot (1+\gamma)}{1-\gamma} \cdot \tau^\star \cdot \frac{1}{T-\tau^\star +1 }\\
        \lesssim& \frac{R_{\max}\cdot \log^2 T}{1-\gamma} + \frac{R_{\max} \cdot (1+\gamma)\cdot \log^2 T}{1-\gamma}.
        \end{split}
    \end{equation}
    With Lemma \ref{lemma:convergence_of_w} that $\bfw$ enjoys a geometric decay, we have 
    \begin{equation}
        \begin{split}
            \sum_{t=0}^{T-1}  \| \bfw^{(t)}-\bfw^\star\|_2 \lesssim  \|\bfw^{(0)} -\bfw^\star\|_2.
        \end{split}
    \end{equation}
    By multiplying $1/T$ on both sides of \eqref{eqn: temp1.1}, we have 
    \begin{equation}
    \begin{split}
        \|\theta^{(T)} -\theta^\star\|_2 
        \le \frac{(2+\gamma)\cdot R_{\max}\cdot \log^2 T + C^\star \|\bfw^{(0)}-\bfw^\star\|_2}{{(1-\gamma-c_N)\rho_1 }{K^{-2}}-  C^\star} \cdot  \frac{1}{T}.
    \end{split}
    \end{equation}

\end{proof}

\section{Proofs of Theorems \ref{Thm3} and \ref{Thm4}}\label{app: thm34}
\begin{lemma}\label{lemma: DQN_difference}
We have
    \begin{equation}
        \big|Q_i^{\pi_i^\star}(\bfs,a) - Q_i^{\pi_j^\star}(\bfs,a)\big| \le \frac{2\gamma}{1-\gamma} \cdot \max_{\bfs,a} |r_i(\bfs,a)- r_j(\bfs,a)|.
    \end{equation}
\end{lemma}
\begin{proof}[Proof of Theorem \ref{Thm3}]
We have
    \begin{equation}\label{eqn: temp10_9}
        \begin{split}
            Q^\star_{n+1}(\bfs,a) -Q_{n+1}^{\pi_{n+1}}(\bfs,a)
            =& ~\big(Q^\star_{n+1}(\bfs,a) -\max_j Q_{n+1}^{\pi_j^\star}(\bfs,a)\big)
            +\big( \max_j Q_{n+1}^{\pi_j^\star}(\bfs,a)-Q_{n+1}^{\pi_{n+1}}(\bfs,a)\big)\\
            \le & \big(Q^\star_{n+1}(\bfs,a) - Q_{n+1}^{\pi_j^\star}(\bfs,a)\big)
            +\big( \max_j Q_{n+1}^{\pi_j^\star}(\bfs,a)-Q_{n+1}^{\pi_{n+1}}(\bfs,a)\big)\\
        \end{split}
    \end{equation}

For any task $j\in[n]$, we have 
\begin{equation}\label{thm3:key}
    \begin{split}
         Q^{\pi_j^\star}_{n+1}(\bfs,a) - Q^{\pi_j}_{n+1}(\bfs,a)
        =~ \big(\psi_j(\Theta_j^\star) - \psi_j(\Theta_j^{(T)})\big)\cdot \bfw_{n+1}^\star.
    \end{split}
\end{equation}
According to Theorem \ref{Thm1}, we have
\begin{equation}
    \begin{split}
            \|\psi_j(\Theta_j^\star) - \psi_j(\Theta_j^{(T)})\|_2 
        \le  & \frac{(2+\gamma)\cdot R_{\max}\cdot \log^2 T + C^\star \|\bfw_j^{(0)}-\bfw_j^\star\|_2}{{(1-\gamma-c_N)\rho_1 }{K^{-2}}-  C^\star} \cdot  \frac{1}{T} := \frac{C_3}{T}.
    \end{split}
\end{equation}
Hence, we have 
\begin{equation}
    Q^{\pi_j^\star}_{n+1}(\bfs,a) - Q^{\pi_j}_{n+1}(\bfs,a)\le \frac{C_3 \|\bfw_{n+1}^\star\|}{T}.
\end{equation}

Then, we have 
\begin{equation}
    \begin{split}
        \mathcal{T}^{\pi_{n+1}} \Big[\max_j
    Q_{n+1}^{\pi_{j}}(s,a)\Big] 
        =& r_{n+1}(s,a)+\gamma \mathbb{E}_{s'\sim (s,a)} \max_j
    Q_{n+1}^{\pi_{j}}(s',\pi_{n+1}(s'))\\
     =& r_{n+1}(s,a)+\gamma \mathbb{E}_{s'\sim (s,a)} \max_{a'} \big(\max_{j}
    Q_{n+1}^{\pi_{j}}(s',a')\big)\\
    \ge& r_{n+1}(s,a)+\gamma \mathbb{E}_{s'\sim (s,a)} \max_{a'} \big(\max_{j}
    Q_{n+1}^{\pi_{j}^\star}(s',a')\big) -\gamma \frac{C_3 \|\bfw_{n+1}^\star\|}{T}\\
    \ge & r_{n+1}(s,a)+\gamma \mathbb{E}_{s'\sim (s,a)} \max_{a'} 
    Q_{n+1}^{\pi_{j}^\star}(s',a') -\gamma \frac{C_3 \|\bfw_{n+1}^\star\|}{T}\\
    \ge & r_{n+1}(s,a)+\gamma \mathbb{E}_{s'\sim (s,a)}  
    Q_{n+1}^{\pi_{j}^\star}(s',\pi_j^\star(s')) -\gamma \frac{C_3 \|\bfw_{n+1}^\star\|}{T}\\
    =& \mathcal{T}^{\pi_j^\star} Q_{n+1}^{\pi_j^\star}(s,a)  -\gamma \frac{C_3 \|\bfw_{n+1}^\star\|}{T}\\
    =&Q_{n+1}^{\pi_j^\star}(s,a)  -\gamma \frac{C_3 \|\bfw_{n+1}^\star\|}{T}\\
    \end{split}
\end{equation}

Since the previous inequality holds for all $j$, we have 
\begin{equation}
    \begin{split}
        \mathcal{T}^\pi\max_j Q_{n+1}^{\pi_j}(\bfs,a) 
        \ge \max_j Q_{n+1}^{\pi_j^\star}(\bfs,a) -\gamma\cdot \frac{C_3\|\bfw_{n+1}^\star\|_2}{T}.
    \end{split}
\end{equation}
Therefore, with the contraction property of the Bellman operator $\mathcal{T}^\pi$, we have 
\begin{equation}\label{eqn: temp10_2}
\begin{split}
    Q^{\pi_{n+1}}_{n+1}(\bfs,a) 
    = & ~ \lim_{k\rightarrow \infty}(\mathcal{T}^{\pi_{n+1}})^k \max_j Q_{n+1}^{\pi_j}(\bfs,a)\\
    \ge & ~ \lim_{k\rightarrow \infty}(\mathcal{T}^{\pi_{n+1}})^{k-1} \Big(\max_j Q_{n+1}^{\pi_j^\star}(\bfs,a) -\gamma \frac{C_3\|\bfw_{n+1}^\star\|_2}{T}\Big)\\
    \ge & \lim_{k\rightarrow \infty}(\mathcal{T}^{\pi_{n+1}})^{k-2} \cdot \mathcal{T}^{\pi_{n+1}}\Big(\max_j Q_{n+1}^{\pi_j}(\bfs,a) -(1+\gamma) \frac{C_3\|\bfw_{n+1}^\star\|_2}{T}\Big)\\
    \ge & \lim_{k\rightarrow \infty}(\mathcal{T}^{\pi_{n+1}})^{k-2} \cdot\Big(  \max_j Q_{n+1}^{\pi_j^\star}(\bfs,a) - \gamma(1+\gamma)\frac{C_3\|\bfw_{n+1}^\star\|_2}{T} -\gamma \frac{C_3\|\bfw_{n+1}^\star\|_2}{T}\Big)\\
    \ge & \lim_{k\rightarrow \infty}(\mathcal{T}^{\pi_{n+1}})^{k-2}  \Big( \max_j Q_{n+1}^{\pi_j}(\bfs,a) -\gamma(1+\gamma)\frac{C_3\|\bfw_{n+1}^\star\|_2}{T} - (1+\gamma)\frac{C_3\|\bfw_{n+1}^\star\|_2}{T}\Big)\\
    \ge & \max_j  Q_{n+1}^{\pi_j}(\bfs,a) - \sum_{k=1}^{\infty} \gamma^{k-1}(1+\gamma)\frac{C_3\|\bfw_{n+1}^\star\|_2}{T}\\
    = & \max_j Q_{n+1}^{\pi_j}(\bfs,a) - \frac{1+\gamma}{1-\gamma}\frac{C_3\|\bfw_{n+1}^\star\|_2}{T}\\
    \ge &\max_j  Q_{n+1}^{\pi_j^\star}(\bfs,a) - \frac{C_3\|\bfw_{n+1}^\star\|_2}{T} - \frac{1+\gamma}{1-\gamma}\frac{C_3\|\bfw_{n+1}^\star\|_2}{T}\\
    = & \max_j  Q_{n+1}^{\pi_j^\star}(\bfs,a) - \frac{1}{1-\gamma}\frac{C_3\|\bfw_{n+1}^\star\|_2}{T}.
\end{split}
\end{equation}

    From Lemma \ref{lemma: DQN_difference}, we have
    \begin{equation}
        Q^\star_{n+1}(\bfs,a) -Q_{n+1}^{\pi_j^\star}(\bfs,a) \le \frac{2\gamma}{1-\gamma}\cdot \max_{\bfs,a}|r_{n+1}(\bfs,a) - r_j(\bfs,a)|,
    \end{equation}
    and \eqref{eqn: temp10_2} suggests that 
    \begin{equation}
        \max_j Q_{n+1}^{\pi_j^\star}(\bfs,a)-Q_{n+1}^{\pi_{n+1}}(\bfs,a) \le \frac{{C_3\|\bfw_{n+1}^\star\|_2}}{(1-\gamma)T}.
    \end{equation}
    Therefore, \eqref{eqn: temp10_9} can be upper bounded as 
        \begin{equation}\label{eqn: temp10_1}
        \begin{split}
            Q^\star_{n+1}(\bfs,a) -Q_{n+1}^{\pi_{n+1}}(\bfs,a)
            \le & \big(Q^\star_{n+1}(\bfs,a) - Q_{n+1}^{\pi_j^\star}(\bfs,a)\big)
            +\big( \max_j Q_{n+1}^{\pi_j^\star}(\bfs,a)-Q_{n+1}^{\pi_{n+1}}(\bfs,a)\big)\\
            \le &~\frac{2\gamma}{1-\gamma}\cdot \max_{\bfs,a}|r_{n+1}(\bfs,a) - r_j(\bfs,a)| + \frac{{C_3\|\bfw_{n+1}^\star\|_2}}{(1-\gamma)T}\\
            \le &~\frac{2\gamma\cdot \phi_{\max}}{1-\gamma} \| \bfw_{n+1}-\bfw_j \|_2 + \frac{{C_3\|\bfw_{n+1}^\star\|_2}}{(1-\gamma)T}.
        \end{split}
    \end{equation}
    
    Since \eqref{eqn: temp10_1} holds for any $j$, we have 
    \begin{equation}\label{eqn: temp10_3}
        |Q^\star_{n+1}(\bfs,a) -Q_{n+1}^{\pi_{n+1}}(\bfs,a)| \le 
        \frac{2\gamma\cdot \phi_{\max}}{1-\gamma} \min_{j\in[n]}\| \bfw_{n+1}-\bfw_j \|_2 + \frac{{C_3\|\bfw_{n+1}^\star\|_2}}{(1-\gamma)T}.
    \end{equation}
\end{proof}

\begin{proof}[Proof of Theorem \ref{Thm4}]
    Let $\pi^\prime_{n+1}$ be generalized policy with DQN via GPI, 
    we have 
    \begin{equation}
        \begin{split}
            Q^\star_{n+1}(\bfs,a) -Q_{n+1}^{\pi_{n+1}^\prime}(\bfs,a)
            =& ~\big(Q^\star_{n+1}(\bfs,a) -\max_j Q_{n+1}^{\pi_j^\star}(\bfs,a)\big)
            +\big( \max_j Q_{n+1}^{\pi_j^\star}(\bfs,a)-Q_{n+1}^{\pi_{n+1}^\prime}(\bfs,a)\big)\\
            \le & \big(Q^\star_{n+1}(\bfs,a) - Q_{n+1}^{\pi_j^\star}(\bfs,a)\big)
            +\big( \max_j Q_{n+1}^{\pi_j^\star}(\bfs,a)-Q_{n+1}^{\pi_{n+1}^\prime}(\bfs,a)\big)\\
        \end{split}
    \end{equation}
    Similar to \eqref{thm3:key}, we have 
    \begin{equation}
    \begin{split}
        &~ Q^{\pi_j^\star}_{n+1}(\bfs,a) - Q^{\pi_j^\prime}_{n+1}(\bfs,a)\\
        =&~ \psi_j(\Theta_j^\star)\bfw_{n+1}^\star - \psi_j(\Theta_j^{(T)}) \bfw_{j}^{(t)}\\
        \approx&~ \psi_j(\Theta_j^\star)\bfw_{n+1}^\star - \psi_j(\Theta_j^{(T)}) \bfw_{j}^\star\\
        =&~ \psi_j(\Theta_j^\star)\bfw_{n+1}^\star - \psi_j(\Theta_j^{(T)})\bfw_{n+1}^\star + \psi_j(\Theta_j^{(T)})\bfw_{n+1}^\star- \psi_j(\Theta_j^{(T)}) \bfw_{j}^\star\\
        \ge & - \| \Theta_j^\star -\Theta_j^{(T)}\|\cdot \|\bfw_{n+1}^\star \|_2 - \frac{1}{1-\gamma}\phi_{\max} \cdot \|\bfw_{n+1}^\star - \bfw_j^\star\|_2.
    \end{split}
\end{equation}
Following similar steps in \eqref{eqn: temp10_2}, we have  
    \begin{equation}
        \max_j Q_{n+1}^{\pi_j^\star}(\bfs,a)-Q_{n+1}^{\pi_{n+1}^\prime}(\bfs,a) \le \frac{{C_3\|\bfw_{n+1}^\star\|_2}}{(1-\gamma)T} + \phi_{\max} \min_{j\in [n]}\|\bfw^\star_{n+1}-\bfw_j^\star\|_2.
    \end{equation}
    Therefore, we have 
\begin{equation}
\begin{split}
     |Q^\star_{n+1}(\bfs,a) -Q_{n+1}^{\pi_{n+1}^\prime}(\bfs,a)|\
     \le& 
        \frac{2\gamma\cdot \phi_{\max}}{1-\gamma} \min_{j\in[n]}\| \bfw_{n+1}-\bfw_j \|_2 + \frac{{C_3\|\bfw_{n+1}^\star\|_2}}{(1-\gamma)T}\\
        &+ \phi_{\max} \cdot \min_{j\in[n]}\|\bfw_{n+1}^\star - \bfw_j^\star\|_2\\
        \le & \frac{2\cdot \phi_{\max}}{1-\gamma} \min_{j\in[n]}\| \bfw_{n+1}-\bfw_j \|_2 + \frac{{C_3\|\bfw_{n+1}^\star\|_2}}{(1-\gamma)T}.\\
\end{split}    
    \end{equation}
\end{proof}

\section{Proof of Theorem \ref{Thm2}}\label{app:thm2}

\begin{proof}[Proof of Theorem \ref{Thm2}]

For task $i$, let $\pi_j$ be the policy derived from $\psi_j(\Theta_j^{(T)})\bfw_i^\star$ with $1\le j\le i$, where $\Theta_j^{(T)}$ is the returned neuron weights for the successor feature of task $j$.

Similar to \eqref{eqn: temp10_3}, we have
\begin{equation}
    Q^\star_{i}(\bfs,a) -Q_{i}^{\pi_{j}}(\bfs,a) \le 
        \frac{2\gamma\cdot \phi_{\max}}{1-\gamma} \| \bfw_{j}-\bfw_i \|_2 + \frac{{C_3\|\bfw_{i}^\star\|_2}}{(1-\gamma)T}.
\end{equation}
    Let $\pi^{\prime}$ be the policy derived from $\psi_i(\Theta_i^{(t)})\bfw_i^\star$ at iteration $t$ for task $i$, we have 
    \begin{equation}
        Q_i^\star(\bfs,a ) - Q_i^{\pi^\prime} \le  \|\Theta^{(t)}_i-\Theta_i^\star\|_2
    \cdot\|\bfw_i^\star\|_2.
    \end{equation}
    Therefore, at iteration $t$ for task $i$, we have 
    \begin{equation}\label{eqn: thm2_ct}
        \begin{split}
            C_t = 
            &|Q^\star_{i}(\bfs,a) - Q_i^{\pi_i^{(t)}}|\\
             \le& \min\Big\{ \frac{2\gamma\cdot \phi_{\max}}{1-\gamma} \min_{1\le j \le i}\| \bfw_{j}-\bfw_i \|_2 + \frac{{C_3\|\bfw_{i}^\star\|_2}}{(1-\gamma)T}, \|\Theta^{(t)}_i-\Theta_i^\star\|_2
    \cdot\|\bfw_i^\star\|_2 \Big\}\\
    \lesssim & \min\Big\{ \frac{2\gamma\cdot \phi_{\max}}{1-\gamma} \min_{1\le j \le i}\| \bfw_{j}-\bfw_i \|_2, \|\Theta^{(t)}_i-\Theta_i^\star\|_2  
    \cdot\|\bfw_i^\star\|_2 \Big\} ~ (\text{As $T$ is sufficiently large})\\
    = & \min\{q_t,1\}\cdot \|\Theta^{(t)}_i-\Theta_i^\star\|_2, 
        \end{split}
    \end{equation}
    where 
    \begin{equation}\label{eqn: q_t}
        q_t = \frac{2\gamma \cdot R_{\max}}{1-\gamma}\cdot \frac{ \min_{1\le i\le j-1}~\|\bfw_i^\star - \bfw_j^\star\|_2  }{\|\Theta_j^{(t)}-\Theta_j^\star\|_2}.
    \end{equation}
    Following similar steps in \eqref{eqn: thm1_thm2} in the proof of Theorem \ref{Thm1}, with $C_t$ satisfying \eqref{eqn: thm2_ct}, we have 
     \begin{equation}
    \begin{split}
        \|\theta^{(T)} -\theta^\star\|_2 
        \le \frac{1}{T}\sum_{t=1}^{T-1}\frac{(2+\gamma)\cdot R_{\max}\cdot \log^2 T + C^\star \|\bfw^{(0)}-\bfw^\star\|_2}{{(1-\gamma-c_N)\rho_1 }{K^{-2}}-  \min\{1,q_t\}\cdot C^\star} \cdot  \frac{1}{T}.
    \end{split}
    \end{equation}
\end{proof}

\section{Additional numerical experiments}\label{app:experiment}
In this section we empirically validate the theoretical results obtained in the previous section, using synthetic and real-world RL benchmarks.

\subsection{Synthetic data settings}

{Here, we define an MDP that contains two tasks with shared state transition dynamics. The MDP consists of a state space with $|\mathcal{S}|=10,000$, an action space with $|\mathcal{A}|=4$.
For the first task, its successor feature is parameterized by a deep neural network with the randomly generated neuron weights $\Theta^\star_1$, and $\bfw^\star_1$ are randomly generated as the corresponding reward mapping. We then generate $\phi$ based on \eqref{eqn: sfdqn} with $\psi(\Theta_1^\star)$.
Since $\phi$ is shared across all tasks, for Task 2, we randomly generate the reward mapping $\bfw_2^\star$ and then calculate $\psi_2^\star$ accordingly.}

\subsection{Additional experiments on synthetic RL benchmarks}

\paragraph{Comparison for transfer from multiple source tasks.} In addition to the single source task case discussed in Section \ref{sec:experiments}, we also investigate the transfer performance of SFDQN (with and without GPI) and DQN (GPI) agents when trained on multiple source tasks. For this purpose, we generate $\phi$ as described in the previous section, and generate additional source tasks and a target task by pertubing $\bfw^\star_1$. Thus, we obtain $\bfw^\star_2$, $\bfw^\star_2$, $\bfw^\star_2$, the reward vectors for three additional source tasks. The norm of all weight vectors is set to 1 to make sure the reward scales are similar across multiple source tasks. Then we train each learning agent on the four source tasks and apply transfer using GPI. Note that while we also test the case for SFDQN without GPI (thus no transfer), this agent leverages the similarity of source tasks and the target task. The results are shown in Figure \ref{fig:reacher-4-source}. It can be seen that the SFDQN agent performs the best, which can leverage the task closeness due to the proximity of source task weight vectors to that of the target task, and the transition dynamics information via GPI. SFDQN agent without GPI on the other hand can only leverage the task closeness due to the proximity of source weight vectors. DQN-GPI agent cannot leverage the task closeness information or the transition dynamic information explicitly as the SFDN agent, and hence performs worse.

\paragraph{Effect of $\|w^\star_1 - w^\star_2\|$ on knowledge transfer.} We investigate the effect of the distance between $\bfw^\star_1$ to $\bfw^\star_2$, on the transfer performance of the SFDQN. For this purpose, we assume SF-DQN agents have access to optimal reward mappings when training on Tasks 1 and 2. 
After obtaining $\phi$ as described earlier, we initialize and train $\Theta_2$ using $\phi$ and $\bfw^\star_2$, with GPI.  Reward defined by $\phi \cdot w^\star_2$ is used to obtain the average reward for Task 2. We repeat the process for different choices of $\bfw^\star_2$, and the results are shown in Figure \ref{fig:w-star-gap}. It can be seen that, when the task similarity is low (i.e. $\|w^\star_1 - w^\star_2\|$ is large), the performance of the SF-DQN agent with GPI is poor. On the other hand, when the task similarity is high, the performance becomes significantly better.

\begin{figure}[ht]
  \centering
\begin{subfigure}[b]{0.48\textwidth}
 \centering
 \includegraphics[width=0.99\textwidth]{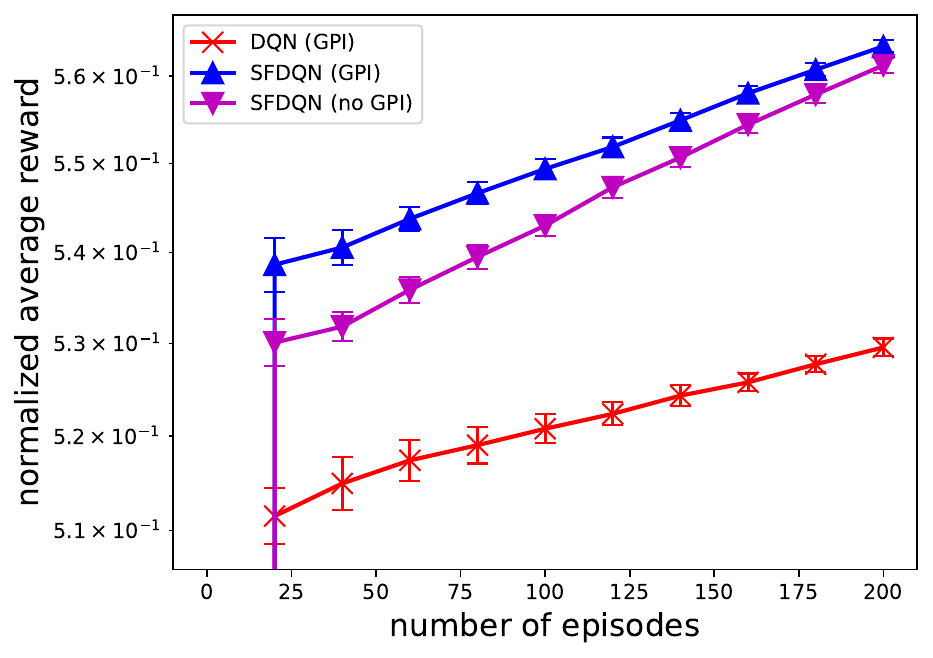}
 \caption{Comparison of four different source task transfer performance for SF-DQN (with and without GPI) and DQN with GPI.}
 \label{fig:reacher-4-source}
\end{subfigure}
\hspace{0.2cm}
  \begin{subfigure}[b]{0.48\textwidth}
 \centering
 \includegraphics[width=0.99\textwidth]{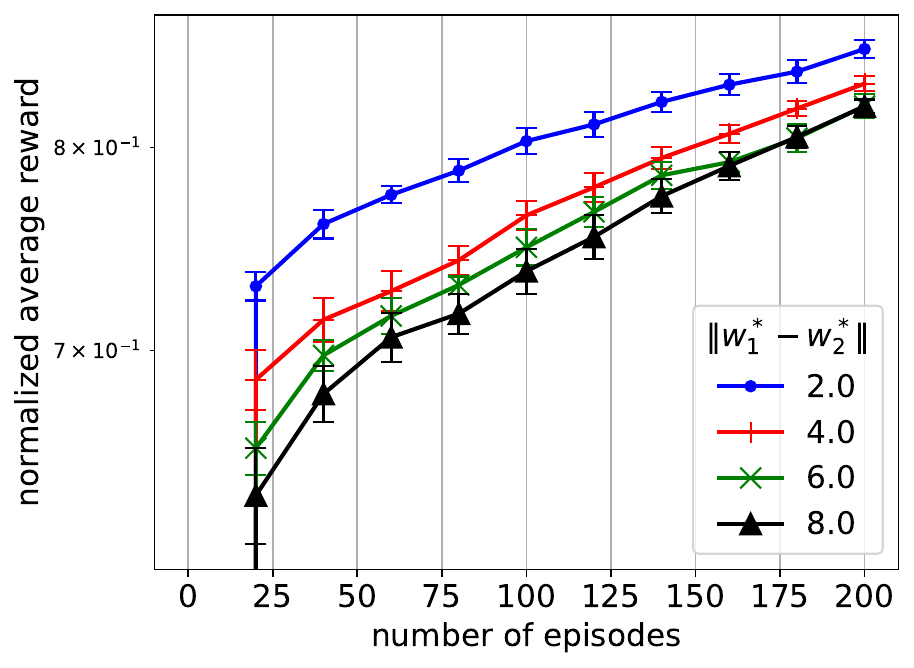}
 \caption{Effect  $\|w^\star_1 - w^\star_2 \|$ on the convergence of SF-DQN agent when training on task 2 with GPI}\label{fig:w-star-gap}
\end{subfigure}
\caption{Additional experiments on synthetic environment }
\label{fig:add-exp-synthetic}
\end{figure} 

\subsection{Real Data: Reacher environment}

The reacher environment is a robotic arm manipulation task consisting of a robotic arm with two joint torque controls. The state space is continuous, and the state features consist of angular displacement and angular velocity of the two joints. The actual action space for the robot arm is continuous consists of the torques applied to the two joints, and is discretized for 3 values (for each joint torque). Thus, the total discretized action space consists of 9 actions ($|\mathcal{A}|=9$). The discount factor used is $\gamma=0.9$. Multiple tasks in this environment are defined by goal locations, and the objective of each task is to move the tip of the robotic arm towards the goal location. 

The reward of each task is defined by the distance $\delta$, measured from the tip of the robotic arm to the corresponding goal location. Specifically, a reward of $1-\delta$ is given to the agent at each time step. There are 12 predefined tasks and $\phi$ for a given state (common to all 12 tasks) is defined by stacking the reward for each of the 12 tasks for a given state as a vector. The corresponding reward weights $\bfw^\star_i$ for $i=1, \dots, 12$ are defined by one hot vectors, where the $i^{th}$ element of $\bfw^\star_i$ is 1 and other elements are 0. Thus, the inner product $\phi^\top \bfw^\star_i$ naturally recovers the reward for the $i^{th}$ task. For running experiments with this task, we use the open source code base \url{https://github.com/mike-gimelfarb/deep-successor-features-for-transfer.git}.
\begin{figure}[ht]
  \centering
  \begin{subfigure}[b]{0.48\textwidth}
 \centering
 \includegraphics[width=0.99\textwidth]{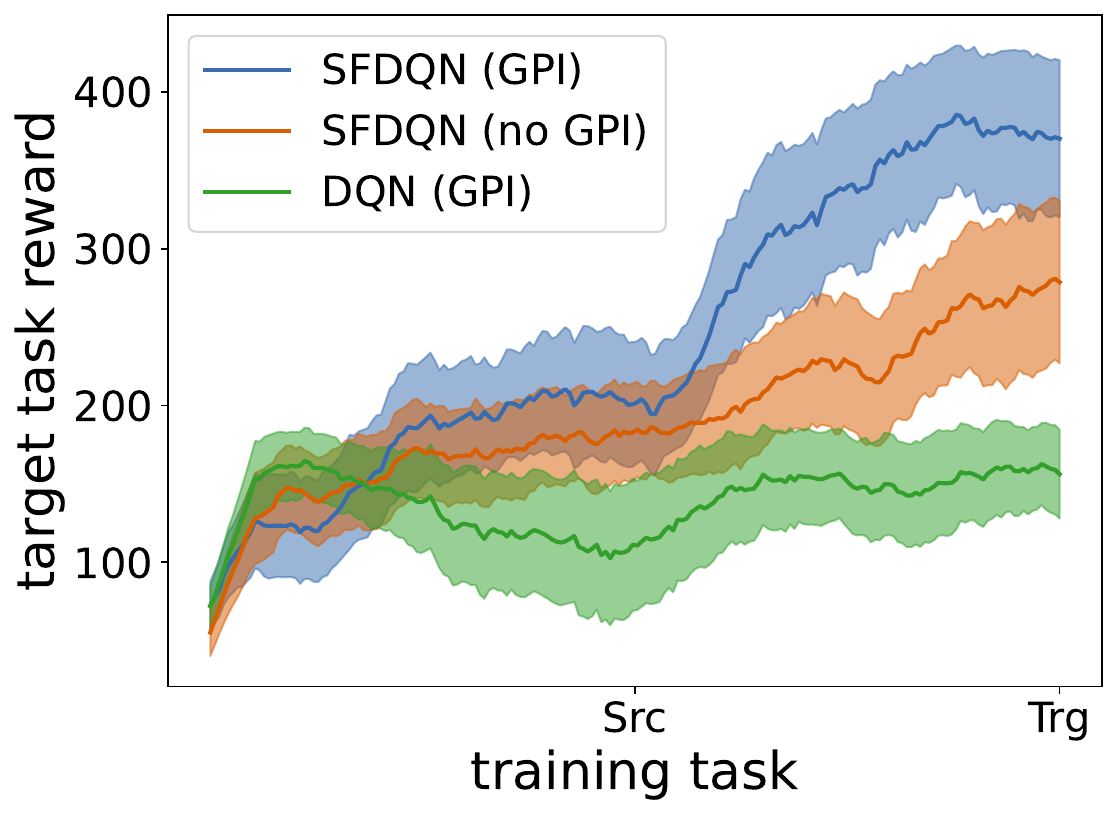}
 \caption{Comparison of DQN (GPI) and SF-DQN (with and without GPI)}\label{fig:reacher-sfdn-dqn}
\end{subfigure}
\hspace{0.2cm}
\begin{subfigure}[b]{0.48\textwidth}
 \centering
 \includegraphics[width=0.99\textwidth]{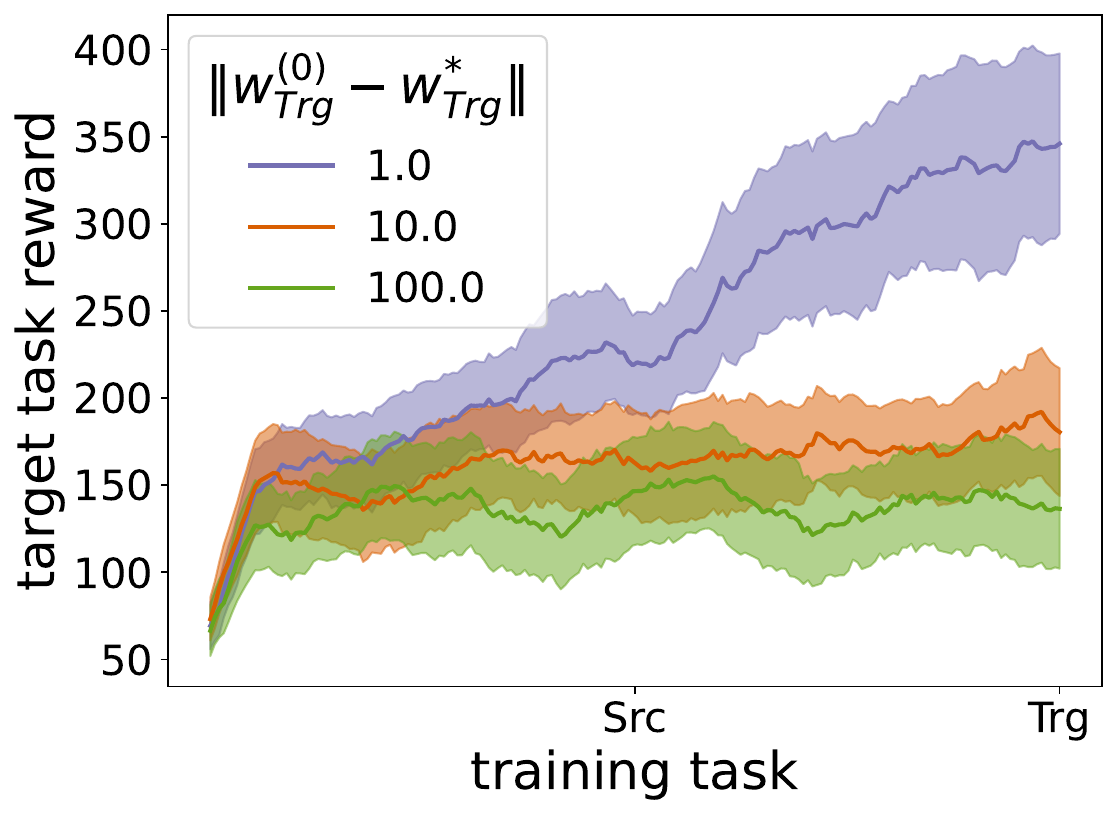}
 \caption{Comparison of different initializations for target task reward mapping}
 \label{fig:reacher-w-init}
\end{subfigure}

\caption{Single source to single target task transfer experiments on Reacher environment }
\label{fig:reacher-perf-uniform-source}
\end{figure} 

\paragraph{Comparison of SF-DQN (with and without GPI) and DQN (GPI).} We first provide a comparison of the performance of SF-DQN with GPI, SF-DQN without GPI, and DQN with GPI, in Figure \ref{fig:reacher-sfdn-dqn}. Here we consider the average transfer performance for four tasks, after training on a source task. It can be seen that SFDQN with GPI performs better compared to its no GPI counterpart. Both of these agents perform significantly better compared to DQN with GPI. hence, this result validates our theoretical results for the performance of these three methods.

\paragraph{Effect of $\|w^{(0)}_{Trg} - w^\star_{Trg}\|$.} Next, we investigate the performance of the SFDQN agent when the target task reward mappings are not known and learned simultaneously with successor features. We consider varying distances from the initial target task reward mapping to the true target task reward mapping. The results are shown in Figure \ref{fig:reacher-w-init}. It can be seen that when the reward mappings are initialized far away from the true reward mappings, the convergence of the SF-DQN agent is slower compared to that is initialized closer to the true reward mappings. This aligns with our convergence analysis for the SF-DQN agent with GPI.

\paragraph{Single source task to multiple target tasks transfer learning.}Next we compare the performance of SFDQN and DQN with GPI for transferring knowledge from single source tasks to multiple target tasks. The results are given in Figures \ref{fig:reacher-source-to-target-1}, \ref{fig:reacher-source-to-target-2}, and \ref{fig:reacher-source-to-target-3}. It can be seen that SFDQN outperforms DQN significantly for most target tasks, which shows the efficacy of knowledge transfer in SFDQN with GPI. The gap of performance seems to be different for different target tasks, suggesting that the performance gain for SFDQN with GPI can vary depending on the source and target task relationship.

\paragraph{Multiple source tasks to single target task transfer learning.} Next we investigate the effect of GPI for transferring knowledge from multiple source tasks to single target task. The results are given in Figure \ref{fig:reacher-4-source}. It can be seen that SFDQN outperforms DQN significantly, which shows the efficacy of knowledge transfer in SFDQN with GPI.

\begin{figure}[ht]
  \centering
  \begin{subfigure}[b]{0.23\textwidth}
 \centering
 \includegraphics[width=0.99\textwidth]{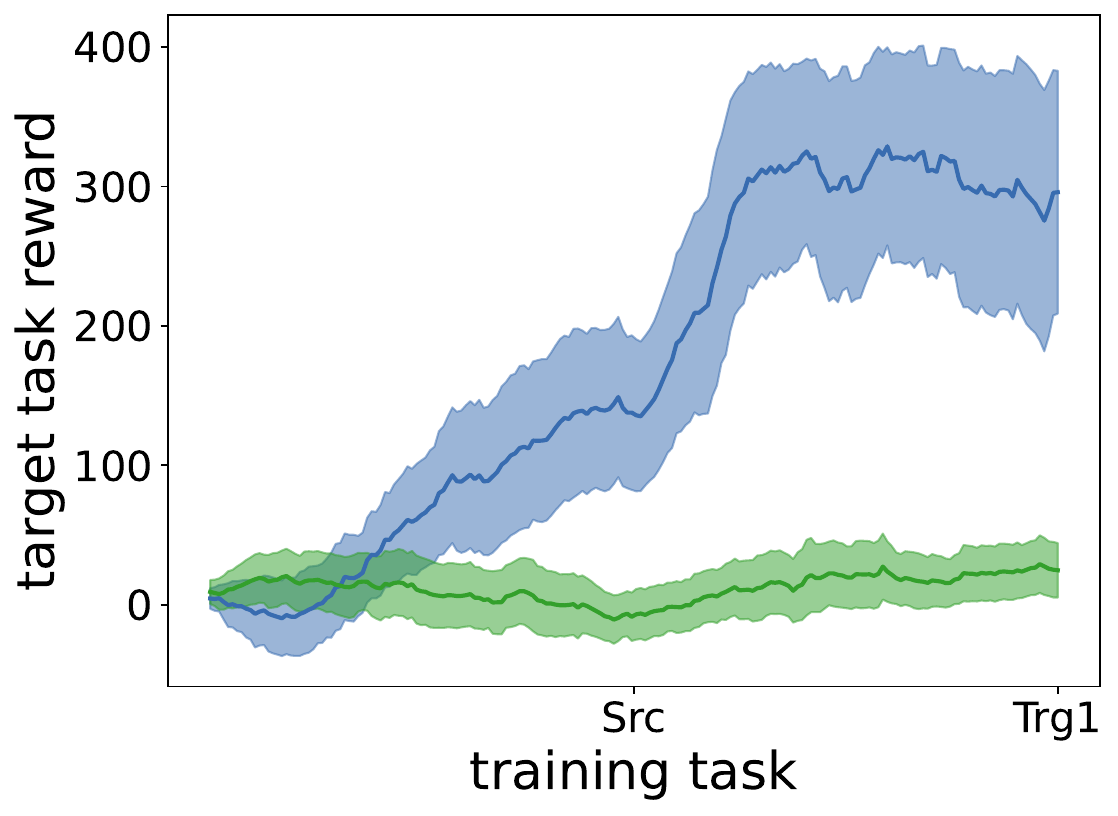}
 \caption{Source to target task 1 transfer performance }\label{fig:reacher-source-to-target-1}
\end{subfigure}
  \begin{subfigure}[b]{0.23\textwidth}
 \centering
 \includegraphics[width=0.99\textwidth]{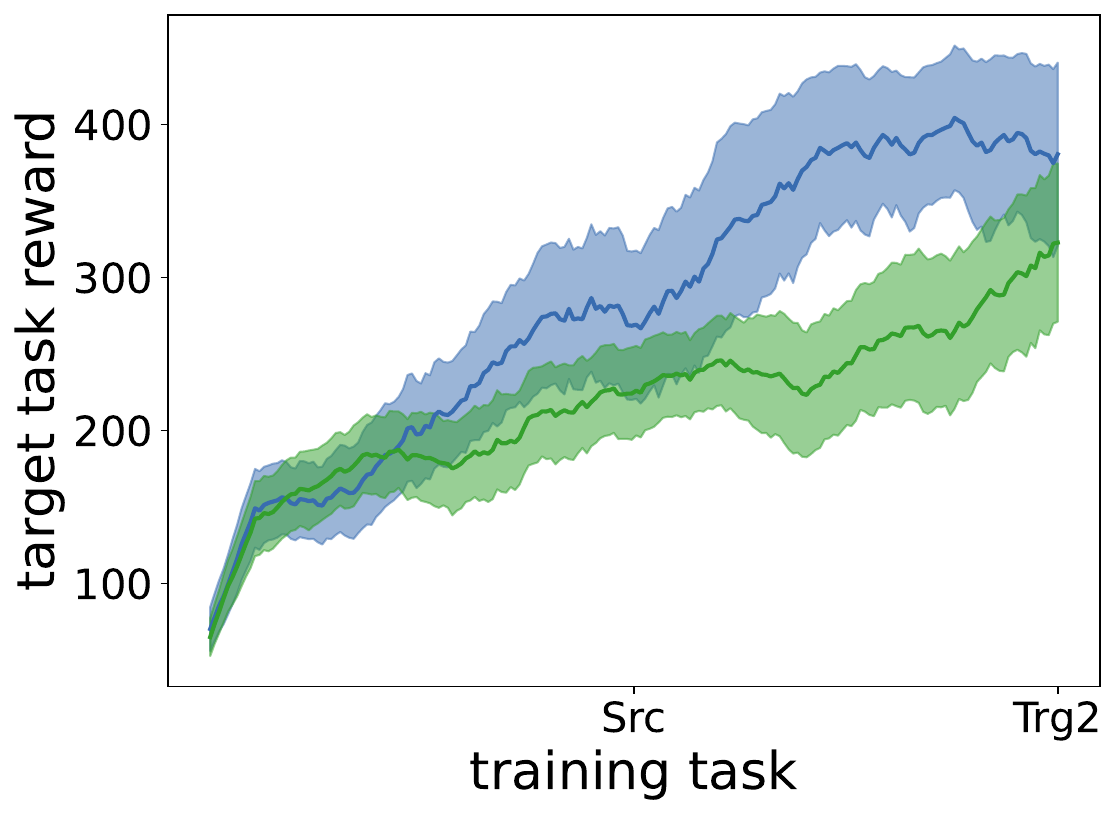}
 \caption{Source to target task 2 transfer performance }\label{fig:reacher-source-to-target-2}
\end{subfigure}
  \begin{subfigure}[b]{0.23\textwidth}
 \centering
 \includegraphics[width=0.99\textwidth]{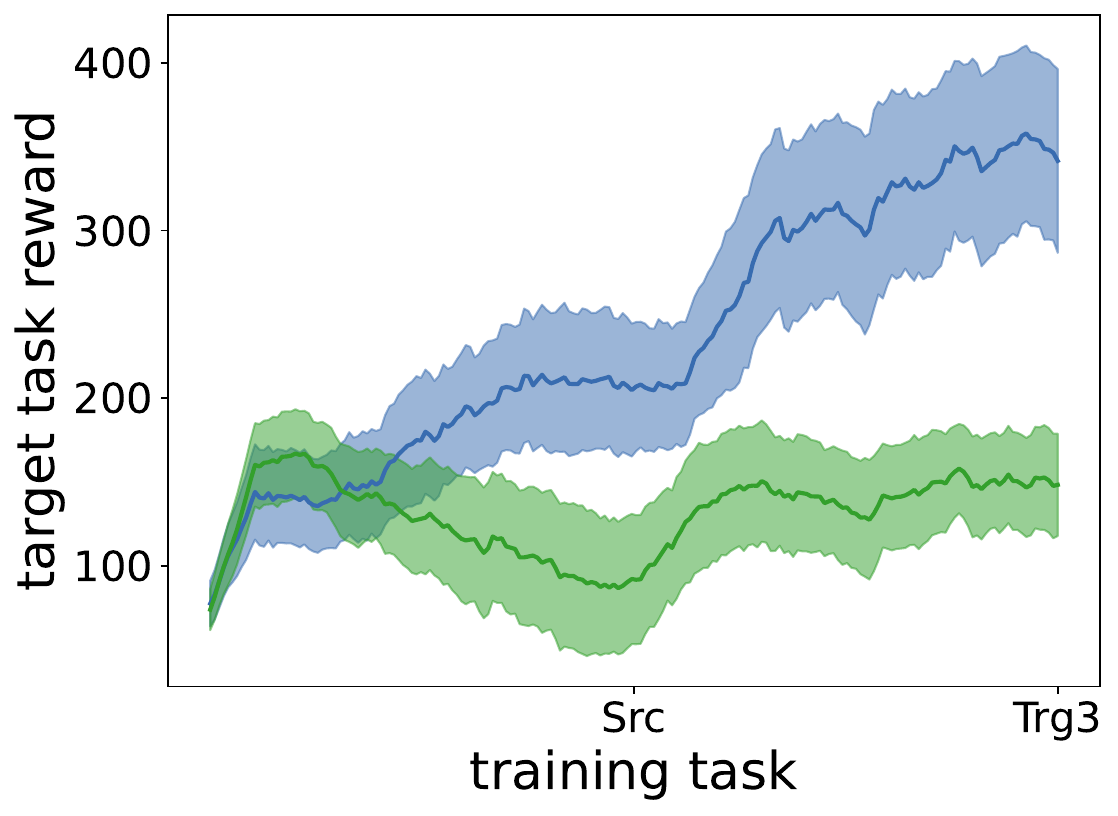}
 \caption{Source to target task 3 transfer performance }\label{fig:reacher-source-to-target-3}
\end{subfigure}
\begin{subfigure}[b]{0.23\textwidth}
 \centering
 \includegraphics[width=0.99\textwidth]{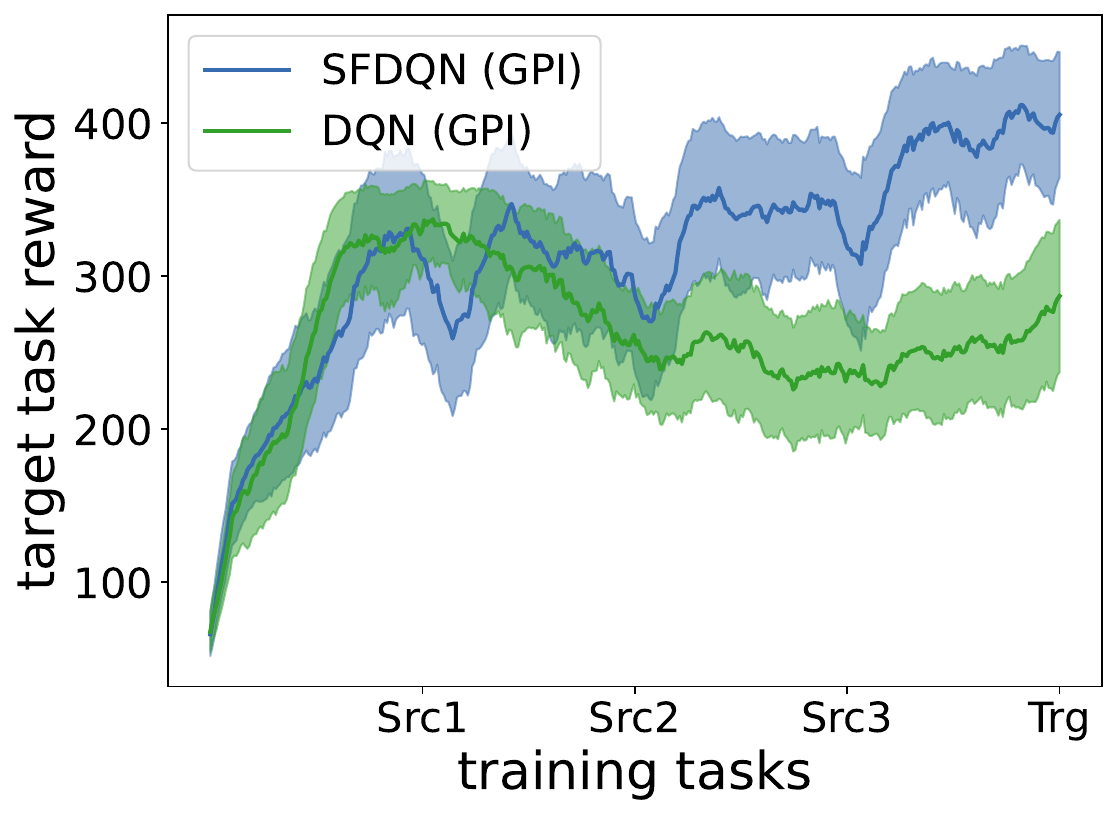}
 \caption{Multiple source to target task transfer performance}
 \label{fig:reacher-multi-source-to-target}
\end{subfigure}
\caption{Multple source/target tasks transfer experiments on Reacher environment}
\end{figure}

\section{Proof of lemmas in Appendix \ref{app:thm1}}\label{app:proof_of_lemma_1}
\subsection{Proof of Lemma \ref{Lemma: second_order_derivative}} \label{sec: Proof: second_order_derivative}
Lemma \ref{Lemma: second_order_derivative} provides the lower and upper bounds for the eigenvalues of the Hessian matrix of population risk function in \eqref{eqn:prf}.
According to Weyl's inequality in Lemma \ref{Lemma: weyl}, the eigenvalues of $\nabla^2_\ell  f(\cdot)$ at any fixed point $\theta$ can be bounded in the form of \eqref{eqn: thm1_main}.
Therefore, we first provide the lower and upper bounds for $\nabla^2_\ell  f$ at the desired ground truth $\theta^\star$. Then, the bounds for $\nabla^2_\ell  f$ at any other point $\theta$ is bounded through \eqref{eqn:prf} by utilizing the conclusion in Lemma \ref{Lemma: distance_Second_order_distance}.
Lemma \ref{Lemma: distance_Second_order_distance} illustrates the distance between the Hessian matrix of $f$ at $\theta$ and $\theta^*$.
Lemma \ref{Lemma: Zhong} provides the lower bound of  $\mathbb{E}_{\bfx}\big(\sum_{j=1}^K \bfal_j^\top\frac{\partial \psi}{\partial \theta_{\ell,k}}(\theta^\star) \big)^2$ when $\bfx$ belongs to sub-Gaussian distribution, which is used in proving the lower bound of the Hessian matrix in \eqref{eqn: lower}.

\begin{lemma}\label{Lemma: distance_Second_order_distance}
   Let $f(\theta)$ be the population risk function defined in \eqref{eqn:prf}. If $\theta$ is close to $\theta^\star$ such that 
   \begin{equation}\label{eqn: lemma2_initial}
        \|\theta-\theta^\star\|_2 \lesssim \frac{\rho_1}{K}    
   \end{equation}
   we have 
   \begin{equation}
       \| \nabla^2_\ell  f(\theta)- \nabla^2_\ell  f(\theta^\star) \|_2 \lesssim \frac{1}{K}\cdot\|\theta-\theta^\star\|_2.
   \end{equation}

\end{lemma}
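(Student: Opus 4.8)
The plan is to expand the layer-$\ell$ block Hessian of the population risk and reduce the claim to a Lipschitz estimate on the layer-$\ell$ gradient of the network. Writing $y(\bfs,a) := \mathbb{E}_{\bfs'|(\bfs,a)}\big[\phi(\bfs,a,\bfs') + \gamma\,\psi(\theta^\star;\bfs',a')\big]$ for the \emph{parameter-independent} target appearing in \eqref{eqn:prf}, differentiating twice in the $\theta_\ell$ block gives
\begin{equation*}
\nabla^2_\ell f(\theta) = 2\,\mathbb{E}\big[\nabla_\ell\psi(\theta)\,\nabla_\ell\psi(\theta)^\top\big] + 2\,\mathbb{E}\big[(\psi(\theta)-y)\,\nabla^2_\ell\psi(\theta)\big].
\end{equation*}
By Assumption~\ref{ass1}, $f(\theta^\star)=0$ forces $\psi(\theta^\star;\bfs,a)=y(\bfs,a)$ almost surely, so the residual term vanishes at $\theta^\star$ and $\nabla^2_\ell f(\theta^\star) = 2\,\mathbb{E}[\nabla_\ell\psi(\theta^\star)\nabla_\ell\psi(\theta^\star)^\top]$. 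Subtracting, and using $y=\psi(\theta^\star)$ a.s., splits the target quantity into a Gram difference and a residual piece.

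First I would handle the Gram difference via $\|AA^\top - BB^\top\|_2 \le (\|A\|_2+\|B\|_2)\,\|A-B\|_2$ with $A=\nabla_\ell\psi(\theta)$, $B=\nabla_\ell\psi(\theta^\star)$. The uniform bound $\|\nabla_\ell\psi\|_2 \lesssim 1/\sqrt{K}$ follows from the chain-rule form \eqref{eqn: gradient_multi}: the $1/K_\ell$ normalization, boundedness of $\cJ_\ell$ (the well-conditioned singular values of $\theta^\star$ are treated as constants), $\sigma'(\cdot)\in\{0,1\}$, and $\|\bfh^{(\ell)}\|_2=O(1)$ since $\|\bfx\|\le 1$. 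The Lipschitz estimate $\|\nabla_\ell\psi(\theta)-\nabla_\ell\psi(\theta^\star)\|_2 \lesssim \frac{1}{\sqrt K}\|\theta-\theta^\star\|_2$ is then obtained factor by factor, since $\cJ_\ell$ and $\bfh^{(\ell)}$ are smooth compositions of $\sigma$ and hence Lipschitz in $\theta$; combining the two bounds yields a Gram contribution of order $\frac1K\|\theta-\theta^\star\|_2$. For the residual piece, $|\psi(\theta)-\psi(\theta^\star)|\lesssim \frac{1}{\sqrt K}\|\theta-\theta^\star\|_2$ by Lipschitz continuity of the network, while the same-layer block $\nabla^2_\ell\psi$ is almost-everywhere zero (the only $\theta_\ell$-dependence enters through the piecewise-constant $\sigma'$), so this term contributes only a boundary correction of the same order.

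The hard part will be the non-smoothness of ReLU: both contributions ultimately hinge on how the discontinuous factors $\sigma'(\theta_{\ell,k}^\top\bfh^{(\ell)})$ move with $\theta$. The resolution is an anti-concentration argument. Under the state-action occupancy induced by $\pi^\star$, which is sub-Gaussian in the sense of Definition~\ref{Def: sub-gaussian}, together with the well-conditioned weights, the probability that a pre-activation $\theta_{\ell,k}^\top\bfh^{(\ell)}$ falls within $\epsilon$ of its activation boundary scales linearly in $\epsilon$. Using the neighborhood bound \eqref{eqn: lemma2_initial} to keep $\bfh^{(\ell)}(\theta)$ and $\cJ_\ell(\theta)$ comparable to their values at $\theta^\star$, a displacement of size $\|\theta-\theta^\star\|_2$ flips a neuron's activation only for inputs within $O(\|\theta-\theta^\star\|_2)$ of its boundary, so the expected measure of flipped neurons is $O(\|\theta-\theta^\star\|_2)$.

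Combining this measure bound with the $1/K$ magnitude bounds on $\nabla_\ell\psi$ and $\nabla^2_\ell\psi$ closes the estimate at the claimed $\frac{1}{K}\|\theta-\theta^\star\|_2$ rate. The genuinely new work relative to the one-hidden-layer analyses of \cite{ZSJB17,ZWLCX22} is the propagation of these Lipschitz and anti-concentration constants through the $L$ layers via the composite functions $\cJ_\ell$ and $\bfh^{(\ell)}$ of \eqref{eqn: defi_jc} and \eqref{eqn: defi_h}, which is exactly the multi-layer extension flagged in the main text; the matrix-concentration tools (Lemmas~\ref{prob}, \ref{Lemma: covering_set}, \ref{Lemma: spectral_norm_on_net}) then convert the per-$\alpha$ bounds in \eqref{eqn: alpha_definition} into the spectral-norm bound stated in the lemma.
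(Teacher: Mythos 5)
Your proposal is correct and follows essentially the same route as the paper: both reduce the claim to the difference of the Gram matrices of $\nabla_\ell\psi$ (the second-order residual being almost-everywhere zero for ReLU), telescope the product of factors using the Lipschitz bounds on $\bfh^{(\ell)}$ and $\cJ_\ell$ (the paper's Lemma \ref{Lemma: h_bound}), and control the discontinuous factor $\sigma'(\theta_{\ell,k}^\top\bfh^{(\ell)})$ by exactly the anti-concentration argument you describe, i.e., the probability that a pre-activation flips sign is proportional to the angle between $\theta_{\ell,k}$ and $\theta_{\ell,k}^\star$ under the sub-Gaussian feature distribution. The only cosmetic difference is that the final spectral-norm bound is a deterministic supremum over unit vectors $\boldsymbol{\alpha}$ of a population quantity, so the matrix-concentration and covering-net lemmas you invoke at the end are not actually needed.
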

\begin{lemma}\label{Lemma: Zhong}
Suppose the following assumptions hold:
\begin{enumerate}
    \item $\{\theta_{j}\}_{j=1}^{K}\in \mathbb{R}^{K_\ell}$ are linear independent,
    \item Let $p(\bfh): \mathbb{R}^{K_\ell}\longrightarrow 
        [~0~~1~]$ be the probability density for $\bfh$ such that $\mathbb{E}_{\bfh} \|\bfh\|_2^2\le +\infty$.
\end{enumerate}   
Let $\bfal \in \mathbb{R}^{K_\ell K_{\ell-1}}$ be the unit vector defined in \eqref{eqn: alpha_definition},
we have
   \begin{equation}\label{def: rho}
       \rho_1:=\min_{\|\bfal\|_2=1}\int_{\mathcal{R}} \Big(\sum_{j=1}^K \bfal^\top\bfh \phi^{\prime}(\theta_{\ell,j}^{\top}\bfh) \Big)^2 p_{H}(\bfh)\cdot  d \bfh >0,
   \end{equation}
    where $\mathcal{R}\subset \mathbb{R}^{K_\ell}$ with $\int_{\mathcal{R}} f_H(\bfh)>0$.
    Moreover, if further assuming $\bfh$ belongs to Gaussian distribution, we have $\rho_1 >0.091$.
\end{lemma}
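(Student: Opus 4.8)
The plan is to establish the two claims separately: first the strict positivity $\rho_1>0$ under linear independence for a general density, and then the explicit numerical bound $\rho_1>0.091$ in the Gaussian case. Throughout I read the integrand in \eqref{def: rho} as $\big(\sum_{j=1}^K (\bfal_j^\top\bfh)\,\sigma'(\theta_{\ell,j}^\top\bfh)\big)^2$ with $\bfal=[\bfal_1^\top,\dots,\bfal_K^\top]^\top$ and $\sigma'(z)=\mathds{1}[z>0]$ the ReLU derivative, consistent with \eqref{eqn: gradient_multi} and \eqref{eqn: alpha_definition}. Since the map $\bfal\mapsto \int_{\mathcal{R}}(\cdots)^2 p_H\,d\bfh$ is a continuous quadratic form and the unit sphere $\{\|\bfal\|_2=1\}$ is compact, the minimum defining $\rho_1$ is attained at some $\bfal^\star$, and because the integrand is nonnegative we immediately get $\rho_1\ge 0$. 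It therefore suffices to rule out $\rho_1=0$.

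Suppose for contradiction that $\rho_1=0$. Then the nonnegative integrand must vanish $p_H$-almost everywhere on $\mathcal{R}$, and since $p_H>0$ on a positive-measure subset of $\mathcal{R}$, the piecewise-linear function $g(\bfh):=\sum_{j=1}^K(\bfal_j^{\star\top}\bfh)\,\mathds{1}[\theta_{\ell,j}^\top\bfh>0]$ vanishes a.e.\ on a region straddling the hyperplanes $H_j=\{\theta_{\ell,j}^\top\bfh=0\}$. The key is the discontinuity structure of $g$: as $\bfh$ crosses a generic point of $H_{j}$ (one lying on no other $H_{j'}$), only the $j$-th summand switches on, so the one-sided limits of $g$ differ by exactly $\bfal_j^{\star\top}\bfh$. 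Both limits being zero forces $\bfal_j^{\star\top}\bfh=0$ on a dense subset of $H_j$, hence $\bfal_j^\star\parallel\theta_{\ell,j}$, i.e.\ $\bfal_j^\star=c_j\theta_{\ell,j}$. Substituting back gives $\sum_j c_j\,\sigma(\theta_{\ell,j}^\top\bfh)=0$ a.e.; since linearly independent weight vectors yield ReLU features with distinct kink hyperplanes, the family $\{\sigma(\theta_{\ell,j}^\top\cdot)\}_{j=1}^K$ is linearly independent (compare the singular support of the second directional derivative along each $H_j$), forcing all $c_j=0$ and thus $\bfal^\star=\bfzero$, contradicting $\|\bfal^\star\|_2=1$. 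This proves $\rho_1>0$.

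For the Gaussian case I would compute $\rho_1=\lambda_{\min}(\bfM)$ for the block matrix $\bfM$ with blocks $\bfM_{ij}=\mathbb{E}_{\bfh\sim\mathcal{N}(\bfzero,\bfI)}\big[\bfh\bfh^\top\,\mathds{1}[\theta_{\ell,i}^\top\bfh>0]\,\mathds{1}[\theta_{\ell,j}^\top\bfh>0]\big]$, where WLOG $\|\theta_{\ell,j}\|_2=1$. Decomposing $\bfh$ along and orthogonal to each $\theta_{\ell,j}$ and using rotational invariance of the Gaussian reduces every block to a closed-form expression depending only on the pairwise angles $\angle(\theta_{\ell,i},\theta_{\ell,j})$; in particular each diagonal block contributes $\tfrac12\bfI$. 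Lower-bounding $\lambda_{\min}(\bfM)$ then amounts to controlling how much the angle-dependent off-diagonal blocks can erode the diagonal, which is precisely the computation carried out for ReLU under Gaussian input in \cite{ZSJB17}; evaluating those integrals yields the stated constant $\rho_1>0.091$.

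I expect the main obstacle to be the Gaussian step rather than the positivity step. The positivity argument is a clean consequence of the ReLU kink structure together with linear independence, but pinning down the explicit constant $0.091$ requires the full angle-dependent evaluation of the blocks $\bfM_{ij}$ and a careful worst-case analysis of $\lambda_{\min}(\bfM)$ over admissible weight configurations; the arithmetic there is where the real work lies, and I would lean on the closed forms in \cite{ZSJB17} to keep it manageable.
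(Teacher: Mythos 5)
Your positivity argument follows the same core route as the paper's: pick a point of the hyperplane $H_k=\{\bfh:\theta_{\ell,k}^\top\bfh=0\}$ that lies on no other $H_j$ (possible because the finite union of the intersections $H_k\cap H_j$ has Lebesgue measure zero inside $H_k$), and compare the two one-sided limits of the sum across that hyperplane. Where you differ --- and actually improve on the paper --- is in how the jump condition is exploited. The paper's proof only establishes \emph{scalar} linear independence of the vector-valued functions $\psi_j(\bfh)=\bfh\,\sigma'(\theta_{\ell,j}^\top\bfh)$, i.e.\ $\sum_j\alpha_j\psi_j\equiv 0\Rightarrow\alpha_j=0$ for scalars $\alpha_j$, and then asserts the positivity of the quadratic form; but the quadratic form in \eqref{def: rho} involves independent vector coefficients $\bfal_j$ contracted with $\bfh$, for which scalar independence of the $\psi_j$ is not by itself sufficient. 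Your version closes this gap: the jump across $H_j$ equals $\bfal_j^\top\bfh$, forcing $\bfal_j^\top\bfz=0$ on a relatively open subset of $H_j$, hence $\bfal_j=c_j\theta_{\ell,j}$; the residual continuous identity $\sum_j c_j\sigma(\theta_{\ell,j}^\top\bfh)=0$ is then killed by linear independence of the ReLU features (distinct kink hyperplanes), so all $c_j=0$. On the numerical claim, note that the paper's proof never establishes $\rho_1>0.091$ --- it stops at strict positivity --- so your plan to evaluate the Gaussian block matrix $\bfM_{ij}=\mathbb{E}[\bfh\bfh^\top\mathds{1}[\theta_{\ell,i}^\top\bfh>0]\mathds{1}[\theta_{\ell,j}^\top\bfh>0]]$ via the angle formulas of \cite{ZSJB17} is doing work the paper simply delegates to that reference; to make the constant rigorous you would still have to carry out the worst-case $\lambda_{\min}(\bfM)$ computation, which is the only genuinely open part of your proposal.
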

\begin{lemma}\label{Lemma: h_bound}
Let $\bfh^{(\ell)}(\theta)$ be the function defined in \eqref{eqn: defi_h}. When $\theta$ is sufficiently close to $\theta^\star$, i.e., $\|\theta -\theta^\star\|_2$ is smaller than some positive constant $c<1$, we have 
\begin{equation}
\begin{gathered}
    \|\bfh^{(\ell)}(\theta)\|_2 \lesssim \|\bfx\|_2,\\
        \|\bfh^{(\ell)}(\theta) -\bfh^{(\ell)}(\theta^\star)\|_2\lesssim \|\theta-\theta^\star\|_2\cdot \|\bfx\|_2.
\end{gathered}
\end{equation}
\end{lemma}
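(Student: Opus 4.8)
The plan is to prove both inequalities simultaneously by induction on the layer index $\ell$, exploiting the fact that the ReLU activation $\sigma(z)=\max\{0,z\}$ is $1$-Lipschitz and satisfies $\sigma(0)=0$. Applied entrywise, these two facts give $\|\sigma(\bfz)\|_2\le\|\bfz\|_2$ and $\|\sigma(\bfz_1)-\sigma(\bfz_2)\|_2\le\|\bfz_1-\bfz_2\|_2$ for any vectors. The base case $\ell=1$ is immediate, since $\bfh^{(1)}(\theta)=\bfx(\bfs,a)$ does not depend on $\theta$: the first bound holds with equality, and the second bound has left-hand side equal to zero. The recursion driving the induction is $\bfh^{(\ell)}(\theta)=\sigma\big(\theta_{\ell-1}^\top \bfh^{(\ell-1)}(\theta)\big)$ from \eqref{eqn: defi_h}.

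For the first bound, I would combine the contraction property of $\sigma$ with submultiplicativity of the operator norm to write $\|\bfh^{(\ell)}(\theta)\|_2\le\|\theta_{\ell-1}\|_2\cdot\|\bfh^{(\ell-1)}(\theta)\|_2$. The key observation is that $\|\theta_{\ell-1}\|_2$ is controlled by an absolute constant: since $\|\theta-\theta^\star\|_2\le c<1$ and $\theta^\star$ is well-conditioned (so its largest singular value $\Sigma_1(\ell-1)$ is $O(1)$), the triangle inequality gives $\|\theta_{\ell-1}\|_2\le\Sigma_1(\ell-1)+c\lesssim 1$. Feeding the inductive hypothesis $\|\bfh^{(\ell-1)}(\theta)\|_2\lesssim\|\bfx\|_2$ through this inequality closes the induction.

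For the second bound, I would add and subtract $\theta_{\ell-1}^\top\bfh^{(\ell-1)}(\theta^\star)$ inside the activation argument and apply Lipschitzness of $\sigma$, obtaining
\[
\|\bfh^{(\ell)}(\theta)-\bfh^{(\ell)}(\theta^\star)\|_2
\le \|\theta_{\ell-1}\|_2\,\|\bfh^{(\ell-1)}(\theta)-\bfh^{(\ell-1)}(\theta^\star)\|_2
+\|\theta_{\ell-1}-\theta_{\ell-1}^\star\|_2\,\|\bfh^{(\ell-1)}(\theta^\star)\|_2.
\]
The first summand is handled by the inductive hypothesis together with the constant bound $\|\theta_{\ell-1}\|_2\lesssim 1$; the second summand uses $\|\theta_{\ell-1}-\theta_{\ell-1}^\star\|_2\le\|\theta-\theta^\star\|_2$ together with the already-established first bound evaluated at $\theta^\star$, giving $\|\bfh^{(\ell-1)}(\theta^\star)\|_2\lesssim\|\bfx\|_2$. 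Adding the two contributions yields the desired $\|\theta-\theta^\star\|_2\cdot\|\bfx\|_2$ scaling and completes the induction.

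The only delicate point — and the place where care is needed rather than cleverness — is the accumulation of the per-layer constant across depth: each inductive step multiplies the prevailing constant by $\|\theta_{\ell-1}\|_2\lesssim 1$, so the final constant is of order $\big(\max_\ell\|\theta_\ell\|_2\big)^{L}$. Because $\|\theta-\theta^\star\|_2\le c<1$ keeps every factor bounded independently of $\theta$, and because $L$ and the conditioning of $\theta^\star$ are treated as constants under the paper's order-wise conventions, this product is an absolute constant that is absorbed into the $\lesssim$ notation. I would simply verify explicitly that the restriction $\|\theta-\theta^\star\|_2\le c<1$ is what prevents this factor from depending on $\theta$.
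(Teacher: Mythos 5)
Your proposal is correct and follows essentially the same route as the paper's proof: a layer-wise Lipschitz/telescoping argument using the $1$-Lipschitzness of ReLU, the bound $\|\theta_{\ell-1}\|_2\le\|\theta_{\ell-1}^\star\|_2+\|\theta-\theta^\star\|_2\lesssim 1$, and an add-and-subtract inside the activation (you insert $\theta_{\ell-1}^\top\bfh^{(\ell-1)}(\theta^\star)$ while the paper inserts $\theta_{\ell-1}^{\star\top}\bfh^{(\ell-1)}(\theta)$, a symmetric and immaterial choice). If anything, you are slightly more careful than the paper in tracking the per-layer operator-norm factors and their accumulation across depth, which the paper absorbs silently into the $\lesssim$ notation.
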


\begin{proof}[Proof of Lemma \ref{Lemma: second_order_derivative}]
    Let $\lambda_{\max}(\theta)$ and $\lambda_{\min}(\theta)$ denote the largest and smallest eigenvalues of $\nabla^2_\ell  f(\theta)$ at $\theta$, respectively. Then, from Lemma \ref{Lemma: weyl}, we have 
    \begin{equation}\label{eqn: thm1_main}
    \begin{gathered}
        \lambda_{\max}(\theta) \le \lambda_{\max}(\theta^\star) + \| \nabla^2_\ell  f (\theta) - \nabla^2_\ell  f (\theta^\star) \|_2,\\
        \lambda_{\min}(\theta) \ge \lambda_{\min}(\theta^\star)  - \| \nabla^2_\ell  f (\theta) - \nabla^2_\ell  f (\theta^\star) \|_2.
    \end{gathered}
    \end{equation}
     Then, we provide the lower bound of the Hessian matrix of the population function at $\theta^\star$. Let $\mathcal{P}$ be the distribution for $\bfh^{(\ell)}(\theta)$ when $\bfx \sim \mu^\star$ with probability density function denoted as $p_H$. For any $\bfal\in\mathbb{R}^{K_\ell K}$ with $\|\bfal\|_2 =1$, we have 
    \begin{equation}\label{eqn: lower}
    \begin{split}
        &\min_{\|\boldsymbol{\alpha}\|_2=1} \bfal^\top \nabla^2_\ell  f(\theta^\star) \bfal\\
        =~&\frac{1}{K^2} \min_{\|\bfal\|_2=1 } \mathbb{E}_{\bfh\sim \mathcal{P}}\Big(\sum_{j=1}^K \bfal_j^\top\bfh^{(\ell)} \cJ_{\ell,k}\phi^{\prime}(\theta_{\ell,j}^{\star \top}\bfh^{(\ell)}) \Big)^2
         \\
        =~&\frac{1}{K^2} \min_{\|\bfal\|_2=1 } \int_{\mathbb{R}^{K_{\ell}-1}}\Big(\sum_{j=1}^K \bfal_j^\top\bfh^{(\ell)} \cJ_{\ell,k}\phi^{\prime}(\theta_{\ell,j}^{\star \top}\bfh^{(\ell)}) \Big)^2 p_H(\bfh^{(\ell)}) \cdot d \bf\bfh^{(\ell)}
         \\
        =~&\frac{1}{K^2} \min_{\|\bfal\|_2=1 } \int_{ \{\bfh^{(\ell)}\mid \cJ_{\ell,k}\neq 0 \} }\Big(\sum_{j=1}^K \bfal_j^\top\bfh^{(\ell)} \phi^{\prime}(\theta_{\ell,j}^{\star \top}\bfh^{(\ell)}) \Big)^2 p_H(\bfh^{(\ell)}) \cdot d \bf\bfh^{(\ell)}\\
        \gtrsim~&\frac{\rho_1}{K^2},
        \end{split}
    \end{equation}
    where the last inequality comes from Lemma \ref{Lemma: Zhong}, and Lemma \ref{Lemma: Zhong} holds since $\bfh^{(\ell)}$ belongs to sub-Gaussian distribution and $\theta_\ell$ is full rank.
    
    Next, the upper bound of $\nabla^2_\ell f$ can be bounded as 
    \begin{equation}
        \begin{split}
            &\max_{\|\boldsymbol{\alpha}\|_2=1} \bfal^\top \nabla^2_\ell  f(\theta^\star) \bfal\\
        =& \frac{1}{K^2} \max_{\|\bfal\|_2=1 }  \mathbb{E}_{\bfx}\Big(\sum_{j=1}^K \bfal_j^\top\bfh^{(\ell)} \cdot \cJ_{\ell,k}\phi^{\prime}(\theta_{\ell,j}^{\star\top}\bfh^{(\ell)}) \Big)^2
         \\
        =& \frac{1}{K^2} \max_{\|\bfal\|_2=1 } \mathbb{E}_{\bfx}\sum_{j_1=1}^K \sum_{j_2=1}^K \bfal_{j_1}^\top\bfh^{(\ell)} \cdot \cJ_{\ell,k}\phi^{\prime}(\theta_{\ell, j_1}^{\star \top}\bfh^{(\ell)}) \cdot  \bfal_{j_2}^\top\bfh^{(\ell)} \cdot  \cJ_{\ell,k}\phi^{\prime}(\theta_{\ell, j_2}^{\star \top}\bfh^{(\ell)})\\
        =&\frac{1}{K^2} \sum_{j_1=1}^K \sum_{j_2=1}^K \mathbb{E}_{\bfx} \bfal_{j_1}^\top\bfh^{(\ell)} \cdot \cJ_{\ell,k}\phi^{\prime}(\theta_{\ell,j_1}^{\star T}\bfh^{(\ell)}) \cdot \bfal_{j_2}^\top\bfh^{(\ell)} \cdot \cJ_{\ell,k}\phi^{\prime}(\theta_{\ell,j_2}^{\star\top}\bfh^{(\ell)})\\
        \le & \frac{1}{K^2} \max_{\|\bfal\|_2=1 }  \sum_{j_1=1}^K \sum_{j_2=1}^K 
            \Big[\mathbb{E}_{\bfx} (\bfal_{j_1}^\top\bfh^{(\ell)})^4 \cdot \mathbb{E}(\phi^{\prime}(\theta_{\ell,j_1}^{\star\top}\bfh^{(\ell)}))^4
            \cdot 
            \mathbb{E}_{\bfx} (\bfal_{j_2}^\top\bfh^{(\ell)})^4 
            \cdot
            \mathbb{E}_{\bfx} (\phi^{\prime}(\theta_{\ell,j_2}^{\star \top}\bfh^{(\ell)}))^4\Big]^{1/4}\\
        \le &\frac{1}{K^2} \max_{\|\bfal\|_2=1 }  \sum_{j_1=1}^K \sum_{j_2=1}^K 
            \Big[\mathbb{E}_{\bfx} 
            (\bfal_{j_1}^\top\bfx)^4 
            \cdot 
            \mathbb{E}_{\bfx} (\bfal_{j_2}^\top\bfx)^4\Big]^{1/4} \\
        \le & \frac{3}{K^2} \sum_{j_1=1}^K \sum_{j_2=1}^K 
        \|\bfal_{j_1}\|_2\cdot  \|\bfal_{j_2}\|_2
        \le \frac{6}{K^2}\sum_{j_1=1}^K \sum_{j_2=1}^K \frac{1}{2}\Big(\|\bfal_{j_1}\|_2^2+ \|\bfal_{j_2}\|_2^2\Big)\\
        =&\frac{6}{K}.
        \end{split}
    \end{equation}

    Therefore, we have 
        \begin{equation}
        \begin{split}
            \lambda_{\max}(\theta^\star)
            = 
            \max_{\|\boldsymbol{\alpha}\|_2=1} \bfal^\top \nabla^2_\ell  f(\theta^\star;p) \bfal
        \le& \frac{6}{K}.
        \end{split}
    \end{equation}
    Then, given \eqref{eqn: lemma2_initial}, we have 
    \begin{equation}\label{eqn: thm11_temp}
        \|\theta - \theta^\star \|_2 \lesssim \frac{2\rho_1}{K}.
    \end{equation}
    Combining \eqref{eqn: thm11_temp} and Lemma \ref{Lemma: distance_Second_order_distance}, we have 
    \begin{equation}\label{eqn: thm11_temp2}
        \| \nabla^2_\ell  f (\theta) - \nabla^2_\ell  f (\theta^\star) \|_2\lesssim \frac{\rho_1}{{ K^2}}.
    \end{equation}
    Therefore, from \eqref{eqn: thm11_temp2} and \eqref{eqn: thm1_main}, we have 
    \begin{equation}
        \begin{gathered}
            \lambda_{\max}(\theta) \le \lambda_{\max}(\theta^\star) + \| \nabla^2_\ell  f (\theta) - \nabla^2_\ell  f (\theta^\star) \|_2\le \frac{6}{K} + \frac{\rho_1}{2 K^2}\le \frac{7}{K},\\
        \lambda_{\min}(\theta) \ge \lambda_{\min}(\theta^\star)  - \| \nabla^2_\ell  f (\theta) - \nabla^2_\ell  f (\theta^\star) \|_2\ge \frac{\rho_1}{K^2} - \frac{\rho_1}{{2 K^2}} = \frac{\rho_1}{2 K^2},
        \end{gathered}
    \end{equation}
    which completes the proof. 
\end{proof}

\subsection{Proof of Lemma \ref{Lemma: first_order_derivative}}
The error bound between $\|\nabla_\ell f -g_t \|_2$ is divided into bounding $\bfI_1$, $\bfI_2$, $\bfI_3$, and $I_4$ as shown in \eqref{eqn:I}. 
$\bfI_1$  represents the deviation of the gradient of $\mathcal{D}_t$ to their expectation, which can be bounded through concentration inequality. 
$\bfI_2$ is derived from the distribution shift between the trajectory and its stationary distribution, which can be bounded with assumption \ref{ass2}.
$\bfI_3$  come from the data distribution shift between the behavior policy and optimal policy.
$\bfI_4$  comes from the inconsistency of the "noisy" label and the "ground truth" label in the population risk function \eqref{eqn:prf}. To ensure a smooth presentation, we will defer the proof of $I_1 - I_4$ until we have completed the main proof of Lemma \ref{Lemma: first_order_derivative}.
\begin{proof}[Proof of Lemma \ref{Lemma: first_order_derivative}]
    From \eqref{eqn: gradient_theta}, we know that 
    \begin{equation}\label{eqn_2:1}
        \begin{split}
        &g^{(t)}(\theta^{(t)}_{\ell,k};\mathcal{X}_m)\\
        =& \sum_{m\in\mathcal{D}_t}\big(\psi(\theta^{(t)};\bfs_m,a_m) -y_m^{(t)}\big)\cdot \frac{\partial \psi(\theta^{(t)};\mathcal{X}_m)}{\partial \theta_{\ell,k}}\\ 
        = &\sum_{m\in\mathcal{D}_t}\Big( \psi(\theta^{(t)};\bfs_m,a_m) - \phi(\theta^\star;\bfs_m,a_m) 
         - \gamma \cdot \psi(\bfs_m^\prime,a_m^\prime;\theta^{(t)})  \Big)
        \cdot \frac{\partial \psi(\theta^{(t)};\mathcal{X}_m)}{\partial \theta_{\ell,k}}\\
        = &\sum_{m\in\mathcal{D}_t}\Big( \psi(\theta^{(t,n)};\bfs_m,a_m) - \psi(\theta^\star;\bfs_m,a_m) + \gamma \cdot \max_{a^\prime}\psi(\bfs_m^\prime,a^\prime;\theta^\star)\\
        &\qquad \qquad - \gamma \cdot\psi(\bfs_m^\prime,a_m^\prime;\theta^{(t)})  \Big)
        \cdot \frac{\partial \psi(\theta^{(t,n)};\mathcal{X}_m)}{\partial \theta_{\ell,k}}\\
        =& \sum_{m\in\mathcal{D}_t}\Big( \psi(\theta^{(t)};\bfs_m,a_m) - \psi(\theta^\star;\bfs_m,a_m) \Big)\cdot \frac{\partial \psi(\theta^{(t)};\mathcal{X}_m)}{\partial \theta_{\ell,k}} \\
         & +\gamma \cdot \Big(\max_{a^\prime}\psi(\bfs_m^\prime,a^\prime;\theta^\star) -  \psi(\bfs_m^\prime,a_m^\prime;\theta^{(t)}) \Big)\cdot \frac{\partial \psi(\theta^{(t)};\mathcal{X}_m)}{\partial \theta_{\ell,k}}\\
        :=& \sum_{m\in\mathcal{D}_t} b^{(t)}_{\ell,k}(\theta^{(t)};\mathcal{X}_m) +\Delta b^{(t)}_{\ell,k}(\theta^{(t)};\mathcal{X}_m),
        \end{split}
    \end{equation}
    where we have 
    \begin{equation}\label{eqn: h}
        b^{(t)}_{\ell,k}(\theta^{(t)};\mathcal{X}_m)
        =\Big( \psi(\theta^{(t)};\bfs_m,a_m) - \psi(\theta^\star;\bfs_m,a_m) \Big)\cdot \frac{\partial \psi(\theta^{(t)};\mathcal{X}_m)}{\partial \theta_{\ell,k}} 
    \end{equation}
    and
    \begin{equation}
        \Delta b^{(t)}_{\ell,k}(\theta^{(t)};\mathcal{X}_m)
        = \Big(\max_{a^\prime}\psi(\theta^\star;\bfs_m^\prime,a^\prime) -  \psi(\theta^{(t-1)};\bfs_m^\prime,a_m^\prime )\Big)\cdot \frac{\partial \psi(\theta^{(t)};\mathcal{X}_m)}{\partial \theta_{\ell,k}}.
    \end{equation}
    Then, let us define $\bar{b}^{(t)}_{\ell,k}$ as 
\begin{equation}\label{eqn: h_bar}
\begin{split}
    &\bar{b}^{(t)}_{\ell,k}(\theta;\mathcal{X}) 
    = \mathbb{E}_{(\bfs,a)\sim \mu_t}  \Big(\psi(\theta;\bfs,a) - \psi(\theta^\star;\bfs,a)\Big)\cdot \nabla_{\theta}\psi(\theta;\bfs,a).
\end{split}
\end{equation}
    
    From \eqref{eqn:prf}, we know that
    \begin{equation}\label{eqn_2:2}
        \begin{split}
        \frac{\partial f_{\pi^\star}}{\partial \theta_{\ell,k}}(\theta^{(t)}) 
        = \mathbb{E}_{(\bfs,a)\sim \mu^{\star}} \Big( \phi(\theta^{(t)};\bfs,a) - \phi(\theta^\star;\bfs,a) \Big)\cdot \frac{\partial \phi(\theta^{(t)};\bfs,a)}{\partial \theta_{\ell,k}}.
        \end{split}
    \end{equation}
    Then, from \eqref{eqn_2:1} and \eqref{eqn_2:2}, we have 
    \begin{equation}\label{eqn:I}
        \begin{split}
            &g^{(t)}(\theta^{(t)}_{\ell,k};\mathcal{X}_m) - \frac{\partial {f}_{\pi^\star} }{ \partial \theta_{\ell,k}} (\theta^{(t)};\mathcal{X}_m)\\
            =& \sum_{m\in\mathcal{D}_t} b^{(t)}_{\ell,k}(\theta^{(t)};\mathcal{X}_m) +\Delta b^{(t)}_{\ell,k}(\theta^{(t)};\mathcal{X}_m) - \frac{\partial {f}_{\pi^\star} }{ \partial \theta_{\ell,k}} (\theta^{(t)};\mathcal{X}_m)\\
            = &~\bigg[b^{(t)}_{\ell,k}(\theta^{(t)}_{\ell,k};\mathcal{X}_m) - \mathbb{E}_{\mathcal{X}_m\sim \mathcal{D}_t}~b^{(t)}_{\ell,k}(\theta^{(t)}_{\ell,k};\mathcal{X}_m)\bigg] + \bigg[ 
            \mathbb{E}_{\mathcal{X}_m\sim \mathcal{D}_t}~\hkt(\theta^{(t)};\mathcal{X}_m) - \bar{b}^{(t)}_{\ell,k}(\theta^{(t)};\mathcal{X}_m) \bigg]\\
            &+ \bigg[\bar{b}^{(t)}_{\ell,k}(\theta^{(t)}) - \frac{\partial {f}_{\pi^\star} }{ \partial \theta_{\ell,k}}(\theta^{(t)})\bigg] + \mathbb{E}_{\mathcal{X}_m\sim \mathcal{D}_t}\Delta \hkt(\theta^{(t)};\mathcal{X}_m)\\
            :=& \bfI_1 + \bfI_2 +\bfI_3 +\bfI_4.
        \end{split}
    \end{equation}
    Therefore, we have 
    \begin{equation}
        \begin{split}
            &\Big\|g^{(t)}(\theta_{\ell,k}^{(t)};\mathcal{X}_m) - \frac{\partial {f}_{\pi^\star} }{ \partial \theta_{\ell,k}} (\theta^{(t)})\Big\|_2 \le \|\bfI_1\|_2 + \|\bfI_2\|_2 + \|\bfI_3\|_2 +\|\bfI_4\|_2.\\
        \end{split}
    \end{equation}

    Next, we first provide the bound for $\|\bfI_1\|_2$, $\|\bfI_2\|_2$, $\|\bfI_3\|_2$, and $\|\bfI_4\|_2$ as
    \begin{equation}
        \begin{gathered}
            \|\bfI_1\|_2 \le \frac{1}{K_\ell} \cdot \|\theta -\theta^\star \|_2 \cdot \sqrt{\frac{d\log q}{|\mathcal{D}_t|}},\\
            \|\bfI_2\|_2 \le \frac{R_{\max}}{1-\gamma}\cdot (1+\gamma)\tau^\star\cdot \eta_{t-\tau^\star},\\
            \|\bfI_3\|_2 \le |\mathcal{A}|\cdot \frac{R_{\max}}{1-\gamma}\cdot (1+\log_\nu \lambda^{-1} + \frac{1}{1-\nu})\cdot C_t,\\
            \|\bfI_4\|_2 \le \frac{\gamma}{K_\ell}\cdot \| \theta^{(t} -\theta^\star\|_2,
        \end{gathered}
    \end{equation}
    where $|\mathcal{A}|$ is the size of action space.
    The details for the derivation of $I_1$- $I_4$ can be found after the proof.

    Let $\boldsymbol{\alpha}\in \mathbb{R}^{Kd}$ and  $\boldsymbol{\alpha}_{j}\in \mathbb{R}^d$ with $\boldsymbol{\alpha} =[\boldsymbol{\alpha}_{1}^T, \boldsymbol{\alpha}_{2}^T, \cdots, \boldsymbol{\alpha}_{K}^T]^T$, with probability at least $1-q^{-d}$, we have 
    \begin{equation}
        \begin{split}
            \|g^{(t)}(\theta_\ell;\theta) -\nabla_\ell {f}_{\pi^\star}(\theta) \|_2^2
            = & \Big| \boldsymbol{\alpha}^T \big( g^{(t)}(\theta) -\nabla {f}_{\pi^\star}(\theta) \big)  \Big|^2\\
            \le & \sum_{k=1}^K\Big| \boldsymbol{\alpha}_k^T \big(g^{(t)}(\theta_{\ell,k};\theta) - \frac{\partial {f}_{\pi^\star} }{ \partial \theta_{\ell,k} } (\theta)\big) \Big|^2\\
            \le & \sum_{k=1}^K\Big\|g^{(t)}(\theta_{\ell,k};\theta) - \frac{\partial {f}_{\pi^\star} }{ \partial \theta_{\ell,k} } (\theta)\Big\|_2^2\cdot \|\boldsymbol{\alpha}_k\|_2^2\\
            \le & \max_k \Big\|g^{(t)}(\theta_{\ell,k};\theta) - \frac{\partial {f}_{\pi^\star} }{ \partial \theta_{\ell,k} } (\theta)\Big\|_2^2.
        \end{split}
    \end{equation}
    In conclusion, we have 
    \begin{equation}
        \begin{split}
            &\|g^{(t)}(\theta_\ell;\theta) -\nabla_\ell {f}_{\pi^\star}(\theta) \|_2\\ 
        \le& \max_k \Big\|g^{(t)}(\theta_{\ell,k};\theta) - \frac{\partial {f}_{\pi^\star} }{ \partial \theta_{\ell,k} } (\theta)\Big\|_2^2\\
        \le & \max_k \|\bfI_1(k) \|_2 + \|\bfI_2(k) \|_2 +\|\bfI_3(k) \|_2 +\|\bfI_4(k) \|_2\\
        \le & \frac{1}{K_\ell} \cdot \|\theta -\theta^\star \|_2 \cdot \sqrt{\frac{d\log q}{|\mathcal{D}_t|}} 
        + \frac{R_{\max}}{1-\gamma}\cdot (1+\gamma)\tau^\star\cdot \eta_{t-\tau^\star}\\
        &+ |\mathcal{A}|\cdot \frac{R_{\max}}{1-\gamma}\cdot (1+\log_\nu \lambda^{-1} + \frac{1}{1-\nu})\cdot C_t
        + \frac{\gamma}{K_\ell}\cdot \| \theta^{(t)} -\theta^\star\|_2,
        \end{split}
        \end{equation}
    where $\tau^\star = \min\{t\mid \lambda\nu^t\le \eta_T \}$
\end{proof}
\subsubsection{Proof of upper bound of $I_1$}
\begin{proof}
           We define a random variable 
    $$Z^{(\ell)}(k) = \big( \psi(\theta;\bfs,a) - \psi(\theta^\star;\bfs,a)\big)\cdot \cJ_{\ell,k}\cdot \boldsymbol{\alpha}^T\bfh^{(\ell)}(\theta)$$ with $(\bfs,a)\sim \mathcal{D}_{t}$ and 
    $$Z_{m}^{(\ell)}(k) = \big( Q(\bfx_m;\theta) - Q(\bfx_m;\theta^\star)\big)\cdot \cJ_{\ell,k}\cdot \boldsymbol{\alpha}^T\bfh_n^{(\ell)}(\theta)$$ as the realization of $Z^{(\ell)}$ for $m\in\mathcal{D}_t$, where $\boldsymbol{\alpha}$ is any fixed unit vector.

    According to the definition of $I_1$ in \eqref{eqn:I}, we can rewrite $\bfI_1$ as 
    \begin{equation}\label{eqn:rewrite_I_1}
            \bfI_1 = \frac{1}{K_{\ell}} \Big[\sum_{m\in\mathcal{D}_t} Z_m^{(\ell)}(k) - \mathbb{E}_{(\bfs,a)\sim \mathcal{D}_t} Z^{(\ell)}(k) \Big].
    \end{equation}
    
    Then, for any $p\in \mathbb{N}^+$, we have 
    \begin{equation}\label{eqn: temppp}
        \begin{split}
        \big(\mathbb{E}|Z^{(\ell)}|^p\big)^{1/p}
        =& \Big( \mathbb{E}_{\mathcal{X}\sim\mathcal{D}_{t}} |\psi(\theta;\bfs,a) - \psi(\theta^\star;\bfs,a)|^p \cdot 
        |\cJ_{\ell,k}\sigma^\prime(\bfw_{\ell,k}^\top\bfx)| \cdot|\boldsymbol{\alpha}^T\bfh^{(\ell)}|^p \Big)^{1/p}\\
        \le& \Big( \mathbb{E}_{\mathcal{X}\sim\mathcal{D}_{t,1}} |\psi(\theta;\bfs,a) - \psi(\theta^\star;\bfs,a)|^p \cdot 
        |\boldsymbol{\alpha}^T\bfh^{(\ell)}|^p \Big)^{1/p}\\
        \le & \Big( \mathbb{E}_{\mathcal{X}\sim\mathcal{D}_{t}} \Big| \|\theta-\theta^\star\|_2\cdot \|\bfx(\bfs,a)\|_2\Big|^p \cdot 
        \big|\boldsymbol{\alpha}^T\bfx(\bfs,a)\big|^p \Big)^{1/p}\\
        \lesssim& \cdot \|\theta-\theta^\star\|_2 \cdot p.
        \end{split}
    \end{equation}
    From Definition \ref{Def: sub-exponential}, we know that $Z^{(\ell)}$ belongs to sub-exponential distribution with $\|Z^{(\ell)}\|_{\psi_1}\lesssim \| \theta-\theta^\star \|_2$.
    Therefore, by Chernoff inequality, for any $s\in\mathbb{R}$, we have 
    \begin{equation}
        \mathbb{P}\bigg\{ \Big| \frac{1}{|\mathcal{D}_t|}\sum_{m\in\mathcal{D}_t} Z_m^{(\ell)}(k) -\mathbb{E}Z^{(\ell)}(k) \Big| < t \bigg\} \le 1- \frac{e^{-(\|\theta-\theta^\star\|_2)^2\cdot |\mathcal{D}_t|\cdot s^2}}{e^{|\mathcal{D}_t|\cdot st}}.
    \end{equation}
    
    Let $t= \|\theta-\theta^\star\|_2\sqrt{\frac{d\log q}{N}}$ and $s = \frac{2}{\|\theta-\theta^\star\|_2}\cdot t$ for some large constant $q>0$. Then,    with probability at least $1-q^{-d}$, we have
    \begin{equation}\label{eqn: tempppp}
        \begin{split}
                \Big| \frac{1}{|\mathcal{D}_t|}\sum_{m\in\mathcal{D}_t} Z_m^{(\ell)}(k) -\mathbb{E}Z^{(\ell)}(k) \Big| \lesssim \|\theta-\theta^\star\|_2 \cdot \sqrt{\frac{d\log q}{|\mathcal{D}_t|}}.
        \end{split}
    \end{equation}

    From Lemma \ref{Lemma: spectral_norm_on_net} and \eqref{eqn:rewrite_I_1}, with probability at least $1-|\mathcal{S}_{\frac{1}{2}}(d)|\cdot q^{-d}$, we have 
    \begin{equation}\label{eqn: 2_temp1}
        \begin{split}
            \|\bfI_1\|_2 
            \le 2\cdot \frac{1}{K_\ell}
            \Bigg| \frac{1}{|\mathcal{D}_t|}\sum_{m\in\mathcal{D}_{t}} Z_m^{(\ell)} -\mathbb{E}Z^{(\ell)} \Bigg|
            \lesssim   \frac{1}{K_\ell}  \|\theta-\theta^\star\|_2\cdot \sqrt{\frac{d\log q}{|\mathcal{D}_t|}}.
        \end{split}
    \end{equation}

    From Lemma \ref{Lemma: covering_set}, we know that $|\mathcal{S}_{\frac{1}{2}}(d)|\le 5^d$. Therefore, the probability for \eqref{eqn: 2_temp1} holds is at least $1-\big(\frac{q}{5}\big)^{-d}$. Because $q\gg 5$, we denote the probability as $1-q^{-d}$ for convenience.
\end{proof}

\subsubsection{Proof of upper bound of $I_2$}
 \begin{proof}
 $\bfI_2$ is the bias of the data because the data $(\bfs,a)$ at iteration $t$ depends on the neural network parameters $\theta^{(t)}$. 
    Recall the definition of $\hkt$ and $\bar{b}^{(t)}_{\ell,k}$, we define
    \begin{equation}
        \Delta_t = \hkt(\theta^{(t)};\mathcal{X}_m) - \bar{b}^{(t)}_{\ell,k}(\theta^{(t)};\mathcal{X}_m).
    \end{equation}
It is easy to verify that 
\begin{equation}
\begin{gathered}
    \|\hkt(\theta;\mathcal{X}_m) - \hkt(\tilde{\theta};\mathcal{X}_m) \|_2\le (1+\gamma)\cdot \|\theta-\tilde{\theta}\|_2,\\
    \|\bar{b}^{(t)}_{\ell,k}(\theta;\mathcal{X}_m) - \bar{b}^{(t)}_{\ell,k}(\tilde{\theta};\mathcal{X}_m) \|_2\le (1+\gamma)\cdot \|\theta-\tilde{\theta}\|_2,\\
   \textit{and}\qquad  \|\hkt\|\lesssim \frac{R_{\max}}{1-\gamma}.
\end{gathered}
\end{equation}
Then, we have 
\begin{equation}
    \Delta_t(\theta)- \Delta_t(\tilde{\theta}) \lesssim (1+\gamma)\cdot \|\theta-\tilde{\theta}\|_2.
\end{equation}
Therefore, we have 
\begin{equation}
    \Delta_t(\theta^{(t)})\le \Delta_{t}(\theta^{(t-\tau)}) + \frac{1+\gamma}{1-\gamma}\cdot R_{\max}\cdot \sum_{i=t-\tau}^{t-1}\eta_i. 
\end{equation}
Then, we need to bound $\delta_t(\theta^{(t-\tau)})$.

Let us define the observed tuple $O_t(\bfs, a, s^\prime)$ as the collection of the state, action, and the next state at the $t$-th iteration. Note that 
\begin{equation}
    \theta^{(t-\tau)} \longrightarrow \bfs_{t-\tau} \longrightarrow \bfs_{t} \longrightarrow O_{t} 
\end{equation}
forms a Markov chain introduced by the policy $\pi_{t}$.

Let $\tilde{\theta}^{(t-\tau,0)}$ and $\widetilde{O}_t$ be independently drawn from the marginal distributions of ${\theta}^{(t-\tau,0)}$ and $O_t$, respectively.

With Lemma 9 in \cite{BRS18}, we have 
\begin{equation}
    \mathbb{E}~\Delta_t(\theta^{(t-\tau)},O_t) - \mathbb{E}~\Delta_t(\tilde{\theta}^{(t-\tau)},\tilde{O}_t) \lesssim~ 2\sup_{\theta, O}|\Delta_t(\theta,O)|\cdot \lambda\cdot \nu^\tau.
\end{equation}
By definition, we have $\mathbb{E}~\Delta_m(\tilde{\theta}^{(t-\tau)},\widetilde{O}_t) = 0$ and 
\begin{equation}
    |\Delta_t(\theta,O)| \le \frac{2~R_{\max}}{1-\gamma}.
\end{equation}
Therefore, we have 
\begin{equation}
\begin{split}
    \mathbb{E}\Delta_t(\theta^{(t)})
    \le& \mathbb{E}\Delta_{t}(\theta^{(t-\tau)}) + \frac{1+\gamma}{1-\gamma}\cdot R_{\max}\cdot \sum_{i=t-\tau}^{t-1}\eta_i\\
    \le & \frac{R_{\max}}{1-\gamma}\Big(
    \lambda\cdot \nu^\tau + (1+\gamma)\cdot \tau \cdot \eta_{t-\tau}
 \Big),
\end{split}
\end{equation}
where the last inequality comes from the fact that the step size $\eta_m$ is non-increasing. 

Choose $\tau^\star = \min\big\{ t=0, 1, 2, \cdots\mid\lambda \nu^\tau \le \eta_T \big\}$. When $t\le \tau^\star$, we choose $\tau =t$ and have
\begin{equation}\label{eqn: iid_1}
    \mathbb{E}\Delta_t(\theta^{(t)}) \le \frac{R_{\max}}{1-\gamma}\cdot \tau^\star\cdot \eta_{0}.
\end{equation}
When $n>\tau^\star$, we can choose $\tau = \tau^\star$ and obtain 
\begin{equation}\label{eqn: iid_2}
    \mathbb{E}\Delta_t(\theta^{(t)}) \le \frac{R_{\max}}{1-\gamma}\cdot (1+\gamma)\tau^\star\cdot \eta_{t-\tau^\star}.
\end{equation}
Combining \eqref{eqn: iid_1} and \eqref{eqn: iid_2}, we have 
\begin{equation}
    |\bfI_2| \le \frac{R_{\max}}{1-\gamma}\cdot (1+\gamma)\tau^\star\cdot \eta_{\max\{0,t-\tau^\star\}},
\end{equation}
where $\tau^\star = \min\{t\mid \lambda\nu^t\le \eta_T \}$.   
 \end{proof}  

 \subsubsection{Proof of bound of $I_3$}
\begin{proof}
    We have 
    \begin{equation}
    \begin{split}
        \bfI_3
        = & \bar{b}^{(t)}_{\ell,k}(\theta^{(t)}) - \frac{\partial {f}_{\pi^\star} }{ \partial \theta_{\ell,k}}(\theta^{(t)})\\
        = & \mathbb{E}_{(\bfs,a)\sim \mu_t} \Big( \psi(\theta;\bfs,a) - \psi(\theta^\star;\bfs,a) \Big)\cdot \frac{\partial \psi(\theta;\bfs,a)}{\partial \theta_{\ell,k}}\\
        &\qquad -\mathbb{E}_{(\bfs,a)\sim \mu^{\star}} \Big( \psi(\theta;\bfs,a) - \psi(\theta^\star;\bfs,a) \Big)\cdot \frac{\partial \psi(\theta;\bfs,a)}{\partial \theta_{\ell,k}}\\
        = & \mathbb{E}_{(\bfs,a)\sim \mu_t} \Big( \psi(\theta;\bfs,a) - r(\bfs,a) -\gamma\cdot \mathbb{E}_{\bfs'\sim p_{\bfs,\bfs'}^a} \max_{a'}\psi(\theta^\star;\bfs',a')  \Big)\cdot \frac{\partial \psi(\theta;\bfs,a)}{\partial \theta_{\ell,k}}\\
        &-\mathbb{E}_{(\bfs,a)\sim \mu^{\star}} \Big( \psi(\theta;\bfs,a) - r(\bfs,a) -\gamma\cdot \mathbb{E}_{\bfs'\sim p_{\bfs,\bfs'}^a} \max_{a'}\psi(\theta^\star;\bfs',a') \Big)\cdot \frac{\partial \psi(\theta;\bfs,a)}{\partial \theta_{\ell,k}}\\
        = &\mathbb{E}_{(\bfs,a)\sim \mu_t, \bfs'\sim p_{\bfs,\bfs'}^a}\Big( \psi(\theta;\bfs,a) - r(\bfs,a) -\gamma\cdot \max_{a'}\psi(\theta^\star;\bfs',a') \Big)\cdot \frac{\partial \psi(\theta;\bfs,a)}{\partial \theta_{\ell,k}}\\
        &- \mathbb{E}_{(\bfs,a)\sim \mu^\star, \bfs'\sim p_{\bfs,\bfs'}^a} \Big( \psi(\theta;\bfs,a) - r(\bfs,a) -\gamma\cdot  \max_{a'}\psi(\theta^\star;\bfs',a') \Big)\cdot \frac{\partial \psi(\theta;\bfs,a)}{\partial \theta_{\ell,k}}
    \end{split}    
    \end{equation}
    Then, we have 
    \begin{equation}
        \begin{split}
        &\Big|\int_{(\bfs,a)}\int_{\bfs'}\big(\mu^\star(d\bfs,da) \mathcal{P}(d\bfs'|\bfs,a) - \mu_{t}(d\bfs,da) \mathcal{P}(d\bfs'|\bfs,a) \big)\Big|\\
            = & \Big|\int_{(\bfs,a)}\int_{\bfs'}\big(\mathcal{P}^\star(d\bfs)\pi^\star(da|\bfs) \mathcal{P}(d\bfs'|\bfs,a) - \mathcal{P}_{t}(d\bfs)\pi_{t}(da|d\bfs) \mathcal{P}(d\bfs'|\bfs,a) \big)\Big|\\
            \le & \Big|\int_{(\bfs,a)}\int_{\bfs'}\big(\mathcal{P}^\star(d\bfs) -\mathcal{P}_{t}(d\bfs)\big)\pi^\star(da|\bfs) \mathcal{P}(d\bfs'|\bfs,a)\Big|\\
            &+\Big|\int_{(\bfs,a)}\int_{\bfs'} \mathcal{P}_{t}(d\bfs)\big(\pi_{t}(da|d\bfs) - \pi^\star(da|d\bfs)\big) \mathcal{P}(d\bfs'|\bfs,a)\Big|.\\
        \end{split}
    \end{equation}
     From Theorem 3.1 in \cite{Yu05}, we know that 
    \begin{equation}
    \begin{gathered}       \Big|\int_{(\bfs,a)}\big(\mathcal{P}^\star(d\bfs) -\mathcal{P}_{t}(d\bfs)\big)\Big| \le |\mathcal{A}|(\log_{\nu}\lambda^{-1}+\frac{1}{1-\nu})C_t\\
        \text{and}\qquad \big\|\pi_{t}(da|d\bfs) - \pi^\star(da|d\bfs)\big\| \le C_t.
    \end{gathered}
    \end{equation}
   Therefore, the bound of $\bfI_3$ can be found as
    \begin{equation}
    \begin{split}
        \|\bfI_3\|_2 \le&~\frac{R_{\max}}{1-\gamma} \cdot |\mathcal{A}|\cdot C_t \cdot (1 + \log_\nu \lambda^{-1} +\frac{1}{1-\nu})\\
        = & |\mathcal{A}|\cdot \frac{R_{\max}}{1-\gamma} \cdot  (1+\log_\nu \lambda^{-1}+\frac{1}{1-\nu})\cdot C_t.
    \end{split}
    \end{equation}
    
\end{proof}

 \subsubsection{Proof of bound of $I_4$}
 \begin{proof}
   We have 
    \begin{equation}
    \begin{split}
        \|\bfI_4\| 
        =&  \|\Delta \hkt(\theta^{(t)};\mathcal{X}_m)\|_2\\
        =&\max_{\bfs,a} \gamma \cdot \Big(\max_{a^\prime}\psi(\bfs_m^\prime,a^\prime;\theta^\star) -  \psi(\bfs_m^\prime,a_m^\prime;\theta^{(t)}) \Big)\cdot \Big\|\frac{\partial \psi(\theta^{(t)};\mathcal{X}_m)}{\partial \theta_{\ell,k}}\Big\|_2\\
        \le & \max_{\bfs,a} \gamma \cdot \Big(\max_{a^\prime}\psi(\bfs_m^\prime,a^\prime;\theta^\star) -  \max_{a^\prime}\psi(\bfs_m^\prime,a^\prime;\theta^{(t)}) \Big)\cdot \Big\|\frac{\partial \psi(\theta^{(t)};\mathcal{X}_m)}{\partial \theta_{\ell,k}}\Big\|_2\\
        \le &\gamma \cdot  \max_{s,a,a'} \Big|\psi(\bfs_m^\prime,a^\prime;\theta^\star) -  \psi(\bfs_m^\prime,a^\prime;\theta^{(t)}) \Big|\cdot \Big\|\frac{\partial \psi(\theta^{(t)};\mathcal{X}_m)}{\partial \theta_{\ell,k}}\Big\|_2\\
        \lesssim & \gamma \cdot \|\theta^{(t)} -\theta^\star\|_2 \cdot \frac{1}{K_\ell}\\
        \le & \frac{\gamma}{K_\ell} \|\theta^{(t)} -\theta^\star\|_2.
    \end{split}
    \end{equation}
\end{proof}

\subsection{Proof of Leamma \ref{lemma:convergence_of_w}}
\begin{proof}[Proof of Lemma \ref{lemma:convergence_of_w}]
    From the update rule of $\bfw$ in Algorithm \ref{Alg}, we have 
    \begin{equation}
    \begin{split}
        \bfw^{(t+1)} -\bfw^\star 
        = & \bfw^{(t)} -\bfw^\star - \kappa_t \cdot \sum_{m\in\mathcal{D}_t}(\phi_m^\top \bfw^{(t)} -r_m) \cdot \phi_m\\
        =& \bfw^{(t)} -\bfw^\star - \kappa_t \cdot \sum_{m\in\mathcal{D}_t}(\phi_m^\top \bfw^{(t)} -\phi_m\bfw^\star) \cdot \phi_m\\
        =&\Big(\bfI- \kappa_t \sum_{m\in\mathcal{D}_m} \phi_m^\top\phi_m \Big)\cdot (\bfw^{(t)}-\bfw^\star).
    \end{split}
    \end{equation}
    For any unit vector $\alpha \in dim(\bfw)$, we have 
    \begin{equation}
        \begin{gathered}
            |\bfal^\top \mathbb{E}_{\mathcal{D}_t} \phi^\top\phi \bfal|\le \max_{\|\phi\|_2}|\bfal^\top\phi|^2 \le \phi_{\max}^2,\\
            |\bfal^\top \mathbb{E}_{\mathcal{D}_t} \phi^\top\phi \bfal|\ge |\bfal^\top\phi_{\min}|^2 \ge 0.
        \end{gathered}
    \end{equation}
    Also, it is easy to verify that $|\bfal^\top \mathbb{E}_{\mathcal{D}_t} \phi^\top\phi \bfal| =0$ if only and if $\phi_m$ are all parallel to each other. As $\phi_m$ does not parallel to each other, let $\rho_2>0$ denote the minimal eigenvalue of $\mathbb{E}_{\mathcal{D}_t} \phi^\top\phi$.

    Given $\phi$ is bounded, $\phi$ belongs to the sub-Gaussian distribution. Similar to \eqref{eqn: tempppp}, with Chebyshev's inequality, we have 
    \begin{equation}
        \left\|\sum_{m\in\mathcal{D}_m} \phi_m^\top\phi_m - \mathbb{E}_{\mathcal{D}_t}\phi^\top \phi \right\|_2\le \sqrt{\frac{d\log q}{|\mathcal{D}_t|}}
    \end{equation}
    with probability at least $1-d^{-q}$.
    Let $N \ge c_N^{-2} d\log q$,  according to Lemma \ref{Lemma: weyl}, we have 
    \begin{equation}
        \lambda_{\min}(\sum_{m\in\mathcal{D}_m} \phi_m^\top\phi_m) \le \lambda_{\min} (\mathbb{E}_{\mathcal{D}_t}\phi^\top \phi) - c_N \le \rho_2-c_N.
    \end{equation}
    
    When we choose $\kappa_t=\frac{1}{\phi_{\max}}$, we have 
    \begin{equation}
        \begin{split}
            \|\bfw^{(t+1)} -\bfw^\star \|_2 
            \le& \Big(1-\frac{\rho_2 -c_N}{\phi_{\max}}\Big)\cdot \|\bfw^{(t)}-\bfw^\star\|_2.\\
        \end{split}
    \end{equation}
\end{proof}

\section{Proof of lemmas in Appendix \ref{app: thm34}}\label{app: GPI}

\begin{proof}[Proof of Lemma \ref{lemma: DQN_difference}]
$\big|Q_i^{\pi_i^\star}(\bfs,a) - Q_i^{\pi_j^\star}(\bfs,a)\big|$ can be upper bounded as
    \begin{equation}
        \begin{split}
            &\big|Q_i^{\pi_i^\star}(\bfs,a) - Q_i^{\pi_j^\star}(\bfs,a)\big|\\
            = & \Big| r_i +\gamma \cdot \sum_{\bfs'} p_{\bfs,\bfs'}^a Q_{i}^{\pi_i^\star}\big(\bfs',\pi_i^\star(\bfs')\big) - \Big( r_i +\gamma \cdot \sum_{\bfs'} p_{\bfs,\bfs'}^a Q_{i}^{\pi_j^\star}\big(\bfs',\pi_j^\star(\bfs')\big) \Big)\Big|\\
             = & \gamma\cdot \Big| \sum_{\bfs'} p_{\bfs,\bfs'}^a Q_{i}^{\pi_i^\star}\big(\bfs',\pi_i^\star(\bfs')\big)  
            -
            \sum_{\bfs'} p_{\bfs,\bfs'}^a Q_{i}^{\pi_j^\star}\big(\bfs',\pi_j^\star(\bfs')\big)\Big|\\
            \le &\gamma\cdot \sum_{\bfs'} p_{\bfs,\bfs'}^a \cdot 
            \Big| Q_{i}^{\pi_i^\star}\big(\bfs',\pi_i^\star(\bfs')\big)- Q_{i}^{\pi_j^\star}\big(\bfs',\pi_j^\star(\bfs')\big)\Big|\\
             \le &\gamma\cdot \sum_{\bfs'} p_{\bfs,\bfs'}^a \cdot 
             \Big[\Big| Q_{i}^{\pi_i^\star}\big(\bfs',\pi_i^\star(\bfs')\big)- Q_{j}^{\pi_j^\star}\big(\bfs',\pi_j^\star(\bfs')\big)\Big|
             +\Big| Q_{j}^{\pi_j^\star}\big(\bfs',\pi_j^\star(\bfs')\big)- Q_{i}^{\pi_j^\star}\big(\bfs',\pi_j^\star(\bfs')\big)\Big|\Big]\\
              = &\gamma\cdot \sum_{\bfs'} p_{\bfs,\bfs'}^a \cdot\Big[ 
             \Big| \max_{a'}Q_{i}^{\pi_i^\star}\big(\bfs',a'\big)- \max_{a'}Q_{j}^{\pi_j^\star}\big(\bfs',a'\big)\Big|
             +\Big| Q_{j}^{\pi_j^\star}\big(\bfs',\pi_j^\star(\bfs')\big)- Q_{i}^{\pi_j^\star}\big(\bfs',\pi_j^\star(\bfs')\big)\Big|\Big]\\
             \le &\gamma\cdot \sum_{\bfs'} p_{\bfs,\bfs'}^a \cdot \Big[ 
             \max_{a'} \Big| Q_{i}^{\pi_i^\star}\big(\bfs',a'\big)- Q_{j}^{\pi_j^\star}\big(\bfs',a'\big)\Big|
             +\Big| Q_{j}^{\pi_j^\star}\big(\bfs',\pi_j^\star(\bfs')\big)- Q_{i}^{\pi_j^\star}\big(\bfs',\pi_j^\star(\bfs')\big)\Big|\Big]\\
             \le &\gamma\cdot \sum_{\bfs'} p_{\bfs,\bfs'}^a \cdot \Big[ 
             \max_{\bfs',a'} \Big| Q_{i}^{\pi_i^\star}\big(\bfs',a'\big)- Q_{j}^{\pi_j^\star}\big(\bfs',a'\big)\Big|
             +\max_{\bfs'}\Big| Q_{j}^{\pi_j^\star}\big(\bfs',\pi_j^\star(\bfs')\big)- Q_{i}^{\pi_j^\star}\big(\bfs',\pi_j^\star(\bfs')\big)\Big|\Big]\\
        \end{split}
    \end{equation}    
    Let $$I_5 = \max_{\bfs,a} \Big| Q_{i}^{\pi_i^\star}\big(\bfs,a\big)- Q_{j}^{\pi_j^\star}\big(\bfs,a\big)\Big|$$ and $$I_6 =\max_{\bfs,a}\Big| Q_{j}^{\pi_j^\star}\big(\bfs,a\big)- Q_{i}^{\pi_j^\star}\big(\bfs,a\big)\Big| \ge \max_{\bfs}\Big| Q_{j}^{\pi_j^\star}\big(\bfs,\pi_j^\star(\bfs)\big)- Q_{i}^{\pi_j^\star}\big(\bfs,\pi_j^\star(\bfs)\big)\Big|.$$
    Then, we have 
    \begin{equation}\label{eqn:temp8_1}
    \begin{split}
        I_5 
        =& \max_{\bfs,a} \Big|  r_i + \gamma\cdot \sum_{\bfs'} p_{\bfs,\bfs'}^a \max_{a'} Q_i^{\pi_i^\star}(\bfs',a') - r_j - \gamma\cdot \sum_{\bfs'} p_{\bfs,\bfs'}^a \max_{a'} Q_j^{\pi_j^\star}(\bfs',a')  \Big|\\
        \le & \max_{\bfs,a} |r_i(\bfs,a)-r_j(\bfs,a)| + \gamma \max_{\bfs,a}\sum_{\bfs'} p_{\bfs,\bfs'}^a\cdot \max_{a'}|Q_i^{\pi_i^\star}(\bfs',a')-Q_j^{\pi_j^\star}(\bfs',a')|\\
        \le & \max_{\bfs,a} |r_i(\bfs,a)-r_j(\bfs,a)| + \gamma \cdot I_5.
    \end{split}
    \end{equation}
    Therefore, we have 
    \begin{equation}
        I_5\le \frac{1}{1-\gamma}  \max_{\bfs,a} |r_i(\bfs,a)-r_j(\bfs,a)|.
    \end{equation}
    Similar to \eqref{eqn:temp8_1}, we have 
    \begin{equation}
    \begin{split}
        I_6 
        \le & \max_{\bfs,a} |r_i(\bfs,a)-r_j(\bfs,a)| + \gamma \max_{\bfs,a}\sum_{\bfs'} p_{\bfs,\bfs'}^a\cdot |Q_j^{\pi_j^\star}(\bfs',\pi_j^\star(\bfs'))-Q_i^{\pi_j^\star}(\bfs',\pi_j^\star(\bfs'))|\\
        \le & \max_{\bfs,a} |r_i(\bfs,a)-r_j(\bfs,a)| + \gamma \cdot I_6.  
    \end{split}
    \end{equation}
    Therefore, we have \begin{equation}
        I_6\le \frac{1}{1-\gamma}  \max_{\bfs,a} |r_i(\bfs,a)-r_j(\bfs,a)|.
    \end{equation}
    Therefore, we have 
    \begin{equation}
        \big|Q_i^{\pi_i^\star}(\bfs,a) - Q_j^{\pi_i^\star}(\bfs,a)\big| \le \gamma(I_5+I_6) \le \frac{2\gamma}{1-\gamma} \cdot  \max_{\bfs,a} |r_i(\bfs,a)-r_j(\bfs,a)|.
    \end{equation}
\end{proof}


\section{Additional proof of the lemmas}\label{app: proof}

\subsection{Proof of Lemma \ref{Lemma: distance_Second_order_distance}}
The distance of the second order derivatives of the population risk function $f(\cdot)$ at point $\theta$ and $\theta^\star$ can be converted into bounding  $\bfP_1$, $\bfP_2$, which are defined in \eqref{eqn: Lemma11_main}. The major idea in proving $\bfP_1$ is to connect the error bound to the angle between $\theta$ and $\theta^\star$ given $\bfh^{(\ell)}$ belongs to the sub-Gaussian distribution.
\begin{proof}[Proof of Lemma \ref{Lemma: distance_Second_order_distance}]
	From the definition of $f$ in \eqref{eqn:prf}, we have 
	\begin{equation}
		\begin{gathered}
		\frac{\partial^2 f}{\partial {\theta}_{\ell, j_1}\partial {\theta}_{\ell, j_2}}(\theta^\star)
		=\frac{1}{K^2} \mathbb{E}_{\bfx} \cJ_{\ell,k}\sigma^{\prime} (\theta^{\star \top}_{j_1}\bfh) \cdot \cJ_{\ell,k}\sigma^{\prime} (\theta^{\star \top}_{j_2}\bfh)\cdot \bfh^\star \bfh^{\star\top},\\
	\text{and \quad}  \frac{\partial^2 f}{\partial {\theta}_{\ell, j_1}\partial {\theta}_{\ell, j_2}}(\theta)
	=\frac{1}{K^2} \mathbb{E}_{\bfx} 
 \sigma^{\prime} \cJ_{\ell,k}^\star({\theta}_{\ell, j_1}^\top\bfh)
 \cdot \cJ_{\ell,k}^\star\sigma^{\prime} ({\theta}_{\ell, j_2}^\top\bfh) 
 \cdot \bfh \bfh^{\top},
	\end{gathered}
	\end{equation}
	where $\bfh =\bfh^{(\ell)}(\theta)$ and $\bfh^\star =\bfh^{(\ell)}(\theta^\star)$.
 
	Then, we have 
	\begin{equation}\label{eqn: Lemma11_main}
	\begin{split}
	&\frac{\partial^2 f}{\partial {\theta}_{\ell, j_1}\partial {\theta}_{\ell, j_2}}(\theta^*)
	-
	\frac{\partial^2 f}{\partial {\theta}_{\ell, j_1}\partial {\theta}_{\ell, j_2}}(\theta)\\
	=&\frac{1}{K^2} \mathbb{E}_{\bfx} \big[
	{\cJ_{\ell,k}^\star\sigma^{\prime} (\theta^{\star T}_{\ell,j_1}\bfh^\star)}\cJ_{\ell,k}^\star\sigma^{\prime} (\theta^{\star T}_{\ell,j_2}\bfh^\star)\bfh^\star \bfh^{\star\top}\\
	&\qquad \qquad -
	\cJ_{\ell,k}\sigma^{\prime} ({\theta}_{\ell, j_1}^\top\bfh)\cJ_{\ell,k}\cJ_{\ell,k}\sigma^{\prime} ({\theta}_{\ell, j_2}^\top\bfh)\bfh \bfh^\top\big] \\
	=&\frac{1}{K^2}\mathbb{E}_{\bfx} 
	\big[ \cJ_{\ell,k}^\star\sigma^{\prime} (\theta^{\star T}_{\ell,j_1}\bfh^\star)
	\big(\cJ_{\ell,k}^\star\sigma^{\prime} (\theta^{\star T}_{\ell,j_2}\bfh^\star)\bfh^\star\bfh^{\star\top}
	- \cJ_{\ell,k}\sigma^{\prime} ({\theta}_{\ell, j_2}^\top\bfh)\bfh\bfh^\top
	\big)\\
	&\qquad +
	\cJ_{\ell,k}\sigma^{\prime} ({\theta}_{\ell, j_2}^\top\bfh)
	\big(\cJ_{\ell,k}^\star\sigma^{\prime} (\theta^{\star T}_{\ell,j_1}\bfh)\bfh^\star\bfh^{\star\top}
	- \cJ_{\ell,k}\sigma^{\prime} ({\theta}_{\ell, j_1}^\top\bfh)\bfh\bfh^{\top}
	\big)
	\big]\\
	:=& \frac{1}{K^2} (\bfP_1 + \bfP_2).
	\end{split}
	\end{equation}
	For any $\bfa\in\mathbb{R}^{K_{\ell}}$ with {$\|\bfa\|_2=1$}, we have 
	\begin{equation}
	\begin{split}
	\bfa^\top\bfP_1 \bfa 
	=&\mathbb{E}_{\bfx} 
	\cJ_{\ell,k}^\star\sigma^{\prime} (\theta^{\star T}_{\ell,j_1}\bfh^\star)
	\Big(\cJ_{\ell,k}^\star\sigma^{\prime} (\theta^{\star T}_{\ell,j_2}\bfh^\star) (\bfa^\top\bfh^\star)^2
	- \cJ_{\ell,k}\sigma^{\prime} ({\theta}_{\ell, j_2}^\top\bfh)(\bfa^\top\bfh)^2
	\Big).
	\end{split}
	\end{equation}
    Then, we have 
    \begin{equation}
        \begin{split}
            |\bfa^\top\bfP_1 \bfa| 
	=&\Big|\mathbb{E}_{\bfx} 
	\cJ_{\ell,k}^\star\sigma^{\prime} (\theta^{\star T}_{\ell,j_1}\bfh^\star)
	\Big(\cJ_{\ell,k}^\star\sigma^{\prime} (\theta^{\star T}_{\ell,j_2}\bfh^\star) (\bfa^\top\bfh^\star)^2
	- \cJ_{\ell,k}\sigma^{\prime} ({\theta}_{\ell, j_2}^\top\bfh)(\bfa^\top\bfh)^2
	\Big)\Big|\\
 \le & \mathbb{E}_{\bfx}
 \Big|\cJ_{\ell,k}^\star\sigma^{\prime} (\theta^{\star T}_{\ell,j_2}\bfh^\star) (\bfa^\top\bfh^\star)^2
	- \cJ_{\ell,k}\sigma^{\prime} ({\theta}_{\ell, j_2}^\top\bfh)(\bfa^\top\bfh)^2\Big|\\
 \le& \mathbb{E}_{\bfx}
 \Big|\cJ_{\ell,k}^\star\sigma^{\prime} (\theta^{\star T}_{\ell,j_2}\bfh^\star) (\bfa^\top\bfh^\star)^2
	- \cJ_{\ell,k}^\star\sigma^{\prime} ({\theta}_{\ell, j_2}^{\star\top}\bfh^\star)(\bfa^\top\bfh)^2\Big|\\
  &+ \mathbb{E}_{\bfx}
 \Big|\cJ_{\ell,k}^\star\sigma^{\prime} ({\theta}_{\ell, j_2}^{\star\top}\bfh^\star)(\bfa^\top\bfh)^2
	- \cJ_{\ell,k}\sigma^{\prime} ({\theta}_{\ell, j_2}^{\star\top}\bfh)(\bfa^\top\bfh)^2\Big|\\
 &+ \mathbb{E}_{\bfx}
 \Big| \cJ_{\ell,k}\sigma^{\prime} ({\theta}_{\ell, j_2}^{\star\top}\bfh)(\bfa^\top\bfh)^2- \cJ_{\ell,k}\sigma^{\prime} ({\theta}_{\ell, j_2}^\top\bfh)(\bfa^\top\bfh)^2\Big|\\
 \lesssim& \|\theta-\theta^\star\|_2 + \|\theta-\theta^\star\|_2
 +\mathbb{E}_\bfx \Big|\big(\sigma^{\prime} ({\theta}_{\ell, j_2}^{\star\top}\bfh)
	- \sigma^{\prime} ({\theta}_{\ell, j_2}^{\star\top}\bfh)\big)\cdot(\bfa^\top\bfh)^2\Big|\\
 \lesssim& \|\theta-\theta^\star\|_2 +\mathbb{E}_\bfx \Big|\big(\sigma^{\prime} ({\theta}_{\ell, j_2}^{\star\top}\bfh)
	- \sigma^{\prime} ({\theta}_{\ell, j_2}^{\star\top}\bfh)\big)\cdot(\bfa^\top\bfh)^2\Big|.
        \end{split}
    \end{equation}
    Utilizing the Gram-Schmidt process, we can demonstrate the existence of a set of normalized orthonormal vectors denoted as $\mathcal{B}=\{\bfa, \bfb, \bfc, \bfa_4^{\perp}, \cdots, \bfa_d^{\perp}\} \in \mathbb{R}^{d}$. This set forms an orthogonal and normalized basis for $\mathbb{R}^{d}$, wherein the subspace spanned by ${\bfa, \bfb, \bfc }$ includes $\bfa, \theta_{\ell, j_2}$, and $\theta_{\ell, j_2}^*$. 
    Then, for any $\bfx\in\mathbb{R}^{d}$, we have a unique $\bfz=[
		z_1, ~ z_2,~\cdots, ~ z_d]^\top$ such that 
		\begin{equation*}
		\bfh = z_1\bfa + z_2 \bfb +z_3\bfc + \cdots + z_d\bfa^{\perp}_d.
		\end{equation*}
        Because (i) $\bfa, \theta_{\ell, j_2}$, and $\theta_{\ell, j_2}^*$ belongs to the subspace spanned by vectors $\{\bfa,\bfb,\bfc\}$ and (ii) $\bfa_4^\perp,\cdots, \bfa_d^\perp,\cdots$ are orthogonal to $\bfa,\bfb,$ and $\bfc$. Then, we know that 
        \begin{equation}\label{eqn: temp7_l}
        \begin{split}
            \theta^{\star\top}_{\ell,j_2}\bfh
            = & \theta^{\star\top}_{\ell,j_2}(z_1\bfa + z_2 \bfb +z_3\bfc + \cdots + z_d\bfa^{\perp}_d)\\
            =&  z_1\theta^{\star\top}_{\ell,j_2}\bfa + z_2\theta^{\star\top}_{\ell,j_2} \bfb +z_3\theta^{\star\top}_{\ell,j_2}\bfc + \cdots + z_d\theta^{\star\top}_{\ell,j_2}\bfa^{\perp}_d\\
            =& z_1\theta^{\star\top}_{\ell,j_2}\bfa + z_2\theta^{\star\top}_{\ell,j_2} \bfb +z_3\theta^{\star\top}_{\ell,j_2}\bfc + 0\\
            =&\theta^{\star\top}_{\ell,j_2}(z_1\bfa + z_2 \bfb +z_3\bfc)\\
            : =& \theta^{\star\top}_{\ell,j_2} \widetilde{\bfh}.
        \end{split}
        \end{equation}
        where $\widetilde{\bfh}= z_1\bfa + z_2\bfb + z_3\bfc.$ 
        Similar to \eqref{eqn: temp7_l}, we have $\theta^{\top}_{\ell,j_2}\bfh = \theta^{\top}_{\ell,j_2}\widetilde{\bfh}$ and $\bfa^\top\bfh = \bfa^\top\widetilde{\bfh}$.
        
        {Then, we define $I_4$ as
		\begin{equation}
		\begin{split}
		I_4
		:=& \mathbb{E}_{\bfh} \Big|\big(\sigma^\prime(\theta^{\star\top}_{\ell,j_2}\bfh) - \sigma^\prime(\theta^{\top}_{\ell,j_2}\bfh)\big) \cdot \big(\bfa^\top\bfh\big)\Big|\\
        =& \int_{\mathcal{R}_\bfh}|\sigma^{\prime}\big(\theta_{\ell, j_2}^\top{\bfh}\big)-\sigma^{\prime}\big({\theta^{\star T}_{\ell,j_2}}{\bfh}\big)|\cdot 
		|\bfa^\top{\bfh}|^2 \cdot f_H(\bfh)d\bfh\\
        =&\int_{\mathcal{R}_\bfz}|\sigma^{\prime}\big(\theta_{\ell, j_2}^\top{\bfh}\big)-\sigma^{\prime}\big({\theta^{\star T}_{\ell,j_2}}{\bfh}\big)|\cdot 
		|\bfa^\top{\bfh}|^2 \cdot f_Z(\bfz)\cdot |\bfJ_{\bfh}(\bfz)|d\bfz\\
		\end{split}
		\end{equation}
        where $|\bfJ_{\bfh}(\bfz)|$ is the determinant of the Jacobian matrix $\frac{\partial \bfh}{\partial \bfz}$. Since $\bfz$ is a representation of $\bfh$ based on an orthogonal and normalized basis, we have $|\bfJ_{\bfh}(\bfz)|=1$.
        According to \eqref{eqn: temp7_l}, $I_4$ can be rewritten as
        \begin{equation}
        \begin{split}
            I_4
            =&\int_{\mathcal{R}_z}|\sigma^{\prime}\big(\theta_{\ell, j_2}^\top\widetilde{\bfh}\big)-\sigma^{\prime}\big({\theta^{\star T}_{\ell,j_2}}\widetilde{\bfh}\big)|\cdot 
		|\bfa^\top\widetilde{\bfh}|^2 \cdot f_Z(\bfz)d\bfz\\
        =& \int_{\mathcal{R}_z}|\sigma^{\prime}\big(\theta_{\ell, j_2}^\top\widetilde{\bfh}\big)-\sigma^{\prime}\big({\theta^{\star T}_{\ell,j_2}}\widetilde{\bfh}\big)|\cdot 
		|\bfa^\top\widetilde{\bfh}|^2 \cdot f_Z(z_1, z_2, z_3)dz_1 dz_2 dz_3
        \end{split}
        \end{equation}
        where in the last equality we abuse $f_Z(z_1, z_2,z_3)$ to represent the probability density function of $(z_1, z_2, z_3)$ defined in region $\mathcal{R}_z$.}

        Next, we show that $\bfz$ is rotational invariant over $\mathcal{R}_z$.
        Let $\bfR = [\bfa~\bfb~\bfc~\cdots~\bfa_d^\perp]$, we have 
            $\bfh = \bfR \bfz$.  For any $\bfz^{(1)}$ and $\bfz^{(2)}$ with $\|\bfz^{(1)}\|_2 =\|\bfz^{(2)}\|_2$. We define  $\bfh^{(1)} = R\bfz^{(1)}$ and $\bfh^{(2)} = R\bfz^{(2)}$. Since $\bfx$ is rotational invariant and $\|\bfh^{(1)}\|_2 = \|\bfh^{(2)}\|_2 =\|\bfz^{(1)}\|_2 = \|\bfz^{(2)}\|_2$, then we know $\bfh^{(1)}$ and $\bfh^{(2)}$ has the same distribution density. Then, $\bfz^{(1)}$  and $\bfz^{(2)}$ has the same distribution density as well. Therefore, $\bfz$ is rotational invariant over $\mathcal{R}_z$.
  
        Then, we consider spherical coordinates with $z_1= Rcos\sigma_1, z_2 = Rsin\sigma_1sin\sigma_2, {z_3= Rsin\sigma_1cos\sigma_2}$. 
        Hence, we have
		\begin{equation}
		\begin{split}
		I_4
		=&\int|\sigma^{\prime}\big(\theta_{\ell, j_2}^\top{\widetilde{\bfh}}\big)-\sigma^{\prime}\big(\theta^{\star \top}_{\ell,j_2}{\widetilde{\bfh}}\big)|\cdot |R\cos\sigma_1|^2
		\cdot f_Z(R, \sigma_1, \sigma_2)\cdot R^2\sin \sigma_1 \cdot dR d\sigma_1 d \sigma_2.
		\end{split}
		\end{equation}
        Since $\bfz$ is rotational invariant, we have that 
        \begin{equation}
            f_Z(R, \sigma_1, \sigma_2) = f_Z(R).    
        \end{equation}
  
        Then, we have 
		\begin{equation}\label{eqn: ean111}
		\begin{split}
		I_4
		=&\int|\sigma^{\prime}\big(\theta_{\ell, j_2}^\top({\widetilde{\bfh}}/R)\big)-\sigma^{\prime}\big({\theta^{\star T}_{\ell,j_2}}({\widetilde{\bfh}}/R)\big)|
		\cdot |R\cos\sigma_1|^2
		\cdot f_Z(R)R^2\sin \sigma_1 dRd\sigma_1d\sigma_2\\
		=& \int_0^{\infty} R^4f_z(R)dR\int_{0}^{\psi_1(R)}\int_{0}^{\psi_2(R)}|\cos \sigma_1|^2\cdot \sin\sigma_1\\
		&\cdot|\sigma^{\prime}\big(\theta_{\ell, j_2}^\top({\widetilde{\bfh}}/R)\big)-\sigma^{\prime}\big({\theta^{\star T}_{\ell,j_2}}({\widetilde{\bfh}}/R)\big)|d\sigma_1d\sigma_2\\
        \le & \int_0^{\infty} R^4f_z(R)dR\int_{0}^{\pi}\int_{0}^{2\pi}\sin\sigma_1
		\cdot|\sigma^{\prime}\big(\theta_{\ell, j_2}^\top\bar{\bfx}\big)-\sigma^{\prime}\big({\theta^{\star T}_{\ell,j_2}}\bar{\bfx}\big)|d\sigma_1d\sigma_2,
		\end{split}
		\end{equation}
    where 
    the first equality holds because $\sigma^{\prime}\big(\theta_{i,, j_2}^\top{\bfh}\big)$ only depends on the direction of ${\bfh}$, and $\bar{\bfx}:= {\bfh}/{R}= (\cos\sigma_1, \sin\sigma_1\sin\sigma_2, \sin\sigma_1\cos\sigma_2)$ in the last inequality. 
    
    Because $\bfz$ belongs to the sub-Gaussian distribution, we have $F_z(R)\ge 1-2 e^{-\frac{R^2}{\sigma^2}}$ for some constant $\sigma>0$. Then, the integration of $R$ can be represented as
    \begin{equation}
    \begin{split}
        \int_{0}^{\infty} R^4 f_Z(R) dR
         =&  \int_{0}^{\infty}  R^4d\big(1-F_z(R)\big)\\
         \le&  \int_{0}^\infty 4R^3\big(1-F_z(R)\big)dR\\
         \le & \int_{0}^\infty 8R^3e^{-\frac{R^2}{\sigma^2}} dR\\
         \le & \frac{32}{\sqrt{2\pi}}\sigma \int_{0}^\infty R^2 e^{-\frac{R^2}{\sigma^2}} dR\\
         = & 32\sigma^2 \int_{0}^\infty R^2 \frac{1}{\sqrt{2\pi\sigma^2}}e^{-\frac{R^2}{\sigma^2}} dR,
    \end{split}
    \end{equation}
    where the last inequality comes from the calculation that 
    \begin{equation}
    \begin{gathered}
        \int_{0}^\infty 2R^2e^{-\frac{R^2}{\sigma^2}} dR = \sqrt{2\pi}\sigma^3,\\
        \int_{0}^\infty 2R^3e^{-\frac{R^2}{\sigma^2}} dR = 4\sigma^4.
    \end{gathered}
    \end{equation}
    Then, we define $\widetilde{\bfx}\in\mathbb{R}^{K_\ell}$ belongs to Gaussian distribution as $\widetilde{\bfx} \sim \mathcal{N}(\bfzero, \sigma^2\bfI)$.  Therefore, we have 
    \begin{equation}
    \begin{split}
        I_4
		\le&~32\sigma^2 \cdot \int_0^{\infty} R^2 \frac{1}{\sqrt{2\pi\sigma^2}}e^{-\frac{R^2}{\sigma^2}} dR\int_{0}^{\pi}\int_{0}^{2\pi} \sin\sigma_1 
		\cdot|\sigma^{\prime}\big(\theta_{\ell, j_2}^\top\bar{\bfx}\big)-\sigma^{\prime}\big({\theta^{\star \top}_{\ell,j_2}}\bar{\bfx}\big)|d\sigma_1d\sigma_2\\
		= &~32\sigma^2\cdot\mathbb{E}_{z_1, z_2, z_3}\big|\sigma^{\prime}\big(\theta_{\ell, j_2}^\top\widetilde{\bfx}\big)-\sigma^{\prime}\big({\theta^{\star \top}_{\ell,j_2}}\widetilde{\bfx}\big)|\\
		\eqsim &~\mathbb{E}_{\widetilde{\bfx}}\big|\sigma^{\prime}\big(\theta_{\ell, j_2}^\top\widetilde{\bfx}\big)-\sigma^{\prime}\big({\theta^{\star T}_{\ell,j_2}}\widetilde{\bfx}\big)|,
    \end{split}
    \end{equation}
    where $\widetilde{\bfx}$ belongs to Gaussian distribution.

    Therefore,  the inequality bound  over a sub-Gaussian distribution is bounded by the one over a Gaussian distribution. In the following contexts, we provide the upper bound of $\mathbb{E}_{\widetilde{\bfx}}\big|\sigma^{\prime}\big(\theta_{\ell, j_2}^\top\widetilde{\bfx}\big)-\sigma^{\prime}\big({\theta^{\star T}_{\ell,j_2}}\widetilde{\bfx}\big)|$.
    
	Define a set $\mathcal{A}_1=\{\bfx| ({\theta^{\star \top}_{\ell,j_2}}\widetilde{\bfx})(\theta_{\ell, j_2}^\top\widetilde{\bfx})<0 \}$. If $\widetilde{\bfx}\in\mathcal{A}_1$, then ${\theta^{\star \top}_{\ell,j_2}}\widetilde{\bfx}$ and $\theta_{\ell, j_2}^\top\widetilde{\bfx}$ have different signs, which means the value of $\sigma^{\prime}(\theta_{\ell, j_2}^\top\widetilde{\bfx})$ and $\sigma^{\prime}({\theta^{\star \top}_{\ell,j_2}}\widetilde{\bfx})$ are different. This is equivalent to say that 
	\begin{equation}\label{eqn:I_1_sub1}
	|\sigma^{\prime}(\theta_{\ell, j_2}^\top\widetilde{\bfx})-\sigma^{\prime}({\theta^{\star \top}_{\ell,j_2}}\widetilde{\bfx})|=
	\begin{cases}
	&1, \text{ if $\widetilde{\bfx}\in\mathcal{A}_1$}\\
	&0, \text{ if $\widetilde{\bfx}\in\mathcal{A}_1^c$}
	\end{cases}.
	\end{equation}
	Moreover, if $\widetilde{\bfx}\in\mathcal{A}_1$, then we have 
	\begin{equation}
	\begin{split}
	|{\theta^{\star T}_{\ell,j_2}}\widetilde{\bfx}|
	\le&|{\theta^{\star T}_{\ell,j_2}}\widetilde{\bfx}-\theta_{\ell, j_2}^\top\widetilde{\bfx}|
	\le\|{\theta^{\star }_{\ell,j_2}}-{\theta_{\ell, j_2}}\|_2\cdot\|\widetilde{\bfx}\|_2.
	\end{split}
	\end{equation}
	Let us define a set $\mathcal{A}_2$ such that
	\begin{equation}
	\begin{split}
	\mathcal{A}_2
	=&\Big\{\widetilde{\bfx}\Big|\frac{|{\theta^{\star T}_{\ell,j_2}}\widetilde{\bfx}|}{\|\theta_{\ell, j_2}^*\|_2\|\widetilde{\bfx}\|_2}\le\frac{\|{\theta_{\ell, j_2}^*}-{\theta_{\ell, j_2}}\|_2}{\|\theta_{\ell, j_2}^*\|_2}   \Big\}\\
	=&\Big\{\theta_{\widetilde{\bfx},\theta_{\ell, j_2}^*}\Big||\cos\theta_{\widetilde{\bfx},\theta^{\star}_{\ell, j_2}}|\le\frac{\|{\theta^{\star}_{\ell, j_2}}-{\theta_{\ell, j_2}}\|_2}{\|\theta^{\star}_{\ell, j_2}\|_2}   \Big\}.
	\end{split}
	\end{equation}	
	Hence, we have that
	\begin{equation}\label{eqn:I_1_sub2}
	\begin{split}
	\mathbb{E}_{\widetilde{\bfx}}
	|\sigma^{\prime}(\theta_{\ell, j_2}^\top\widetilde{\bfx})-\sigma^{\prime}({\theta^{\star T}_{\ell, j_2}}\widetilde{\bfx})|^2
	=& \mathbb{E}_{\widetilde{\bfx}}
	|\sigma^{\prime}(\theta_{\ell, j_2}^\top\widetilde{\bfx})-\sigma^{\prime}({\theta^{\star T}_{\ell,j_2}}\widetilde{\bfx})|\\
	=& \text{Prob}(\widetilde{\bfx}\in\mathcal{A}_1)\\
	\le& \text{Prob}(\widetilde{\bfx}\in\mathcal{A}_2).
	\end{split}
	\end{equation}
	Since $\widetilde{\bfx}\sim \mathcal{N}({\bf0},\|\bfa\|_2^2 \bfI)$, $\theta_{\widetilde{\bfx},\theta^{\star}_{\ell, j_2}}$ belongs to the uniform distribution on $[-\pi, \pi]$, we have
	\begin{equation}\label{eqn:I_1_sub3}
	\begin{split}
	\text{Prob}(\widetilde{\bfx}\in\mathcal{A}_2)
	=\frac{\pi- \arccos\frac{\|{\theta^{\star}_{\ell, j_2}}-{\theta_{\ell, j_2}}\|_2}{\|\theta^{\star}_{\ell, j_2}\|_2} }{\pi}
	\le&\frac{1}{\pi}\tan(\pi- \arccos\frac{\|{\theta^{\star}_{\ell, j_2}}-{\theta_{\ell, j_2}}\|_2}{\|\theta^{\star}_{\ell, j_2}\|_2})\\
	=&\frac{1}{\pi}\cot(\arccos\frac{\|{\theta^{\star}_{\ell, j_2}}-{\theta_{\ell, j_2}}\|_2}{\|\theta^{\star}_{\ell, j_2}\|_2})\\
	\le&\frac{2}{\pi}\frac{\|{\theta^{\star}_{\ell, j_2}}-{\theta_{\ell, j_2}}\|_2}{\|\theta^{\star}_{\ell, j_2}\|_2}\\
    \le & \|\theta^\star_\ell -\theta_\ell \|_2
	\end{split}
	\end{equation}

	Hence,  \eqref{eqn: ean111} and \eqref{eqn:I_1_sub3} suggest that
	\begin{equation}\label{eqn:I_1_bound}
    \begin{gathered}
        I_4\lesssim{\|\theta_{i} -\theta^{\star}_{i}\|_2} \cdot\|\bfa\|_2^2,\\
        \text{and}\qquad  \|\bfP_1\|_2\le \|\theta-\theta^\star\|_2 +I_4\lesssim {\|\theta -\theta^{\star}\|_2},
    \end{gathered}	
	\end{equation}
	The same bound that is shown in \eqref{eqn:I_1_bound} holds for $\bfP_2$ as well.  
	
	Therefore, we have 
	\begin{equation}
	    \begin{split}
	    \| \nabla^2_\ell f(\theta^\star) - \nabla^2_\ell f(\theta)\|_2
	    =&\max_{\|\boldsymbol{\alpha}\|_2\le 1} \Big|\boldsymbol{\alpha}^\top\Big(\nabla^2_\ell f(\theta^\star) - \nabla^2_\ell f(\theta)\Big)\boldsymbol{\alpha}\Big|\\
	    \le &\frac{1}{K^2}\sum_{j_1=1}^K\sum_{j_2=1}^K \|\bfP_1+\bfP_2\|_2 \cdot\|\boldsymbol{\alpha}_{j_1}\|_2 \cdot \|\boldsymbol{\alpha}_{j_2}\|_2\\
	    \lesssim& \frac{1}{K^2}\cdot \sum_{j_1=1}^K\sum_{j_2=1}^K\|\theta -\theta^{\star}\|_2\cdot \|\boldsymbol{\alpha}_{j_1}\|_2 \|\boldsymbol{\alpha}_{j_2}\|_2\\
        \lesssim& \frac{1}{K^2}\cdot \sum_{j_1=1}^K\sum_{j_2=1}^K\|\theta -\theta^{\star}\|_2\cdot \Big(\frac{\|\boldsymbol{\alpha}_{j_1}\|_2^2 +\|\boldsymbol{\alpha}_{j_2}\|_2^2}{2}\Big)\\
	    \lesssim & \frac{1}{K}\cdot \|\theta^\star-\theta\|_2,
	    \end{split}
	\end{equation}
	where $\boldsymbol{\alpha}\in \mathbb{R}^{Kd}$ and  $\boldsymbol{\alpha}_{j}\in \mathbb{R}^{K_\ell}$ with $\boldsymbol{\alpha} =[\boldsymbol{\alpha}_{1}^\top, \boldsymbol{\alpha}_{2}^\top, \cdots, \boldsymbol{\alpha}_{K}^\top]^\top$.
\end{proof}

\subsection{Proof of Lemma \ref{Lemma: Zhong}}\label{sec: proof_of_lemma_zhong}
We aim to prove that  $\int_{\mathcal{R}} \Big(\sum_{j=1}^K \bfal^\top\bfh \sigma^{\prime}(\theta_{\ell,j}^{\top}\bfh) \Big)^2 p_{H}(\bfh)\cdot  d \bfh$ is strictly greater than zero for any $\bfal$. Therefore, the $\rho_1$ in \eqref{Lemma: second_order_derivative} is strictly greater than zero. 
The proof is inspired by Theorem 3.1 in \cite{DZPS19}. It is obviously that $(\sum_{j=1}^K \bfal^\top\bfh \sigma^{\prime}(\theta_{\ell,j}^{\top}\bfh) )^2$ is greater or equal to zero. Given $(\sum_{j=1}^K \bfal^\top\bfh \sigma^{\prime}(\theta_{\ell,j}^{\top}\bfh))^2$ is continuous, we only need to show that $\alpha$ such that $\sum_{j=1}^K \bfal^\top\bfh \sigma^{\prime}(\theta_{\ell,j}^{\top}\bfh) \neq 0$ for any $\alpha$, namely, $\{\bfh \sigma^{\prime}(\theta_{\ell,j}^{\top}\bfh)\}_{j=1}^K$ are linear independent.                                           
\begin{proof}[Proof of Lemma \ref{Lemma: Zhong}]
    Let $\mathcal{H}$ be a Hilbert space on $\mathbb{R}^{K_\ell}$, and the inner product of $\mathcal{H}$ is defined as
    \begin{equation}
        \langle f , g \rangle = \int_{\mathcal{R}}f(\bfh)^\top g(\bfh) f_H(\bfh) \cdot d\bfh, \quad \forall f , g \in \mathcal{H},
    \end{equation} 
    where the Lebesgue measure of $\mathcal{R}$ over $\mathbb{R}^{K_\ell}$ is non-zero. 
    Instead of directly proving $\int_{\mathcal{R}} \Big(\sum_{k=1}^K \bfal^\top\bfh \sigma^{\prime}(\theta_k^{\top}\bfh) \Big)^2 f_H(\bfh) \cdot d\bfh>0$ for any $\bfal$, we note that it is sufficient to prove that $\{\bfh\sigma^\prime(\theta_k^\top\bfh)\}_{k\in[K]}$ are linear independent over the Hilbert space $\mathcal{H}$. Namely, if $\{\bfh\sigma^\prime(\theta_k^\top\bfh)\}_{k\in[K]}$ are linear independent, we have 
    \begin{equation}
        \bfal^\top\bfh \sigma^{\prime}(\theta_k^{\top}\bfh) \neq 0 \quad\textit{almost everywhere}.
    \end{equation}
    Therefore, we can know that  $\int_{\mathcal{R}} \Big(\sum_{j=1}^K \bfal^\top\bfh \sigma^{\prime}(\theta_{\ell,j}^{\top}\bfh) \Big)^2 p_{H}(\bfh)\cdot  d \bfh$ is strictly greater than zero. 

    Next, we provide the whole proof for that $\{x\sigma^\prime(\theta_k^\top\bfh)\}_{k\in[K]}$ are linear independent over the Hilbert space $\mathcal{H}$. 

    We define a group of functions $\{\psi_j(\bfh)\}_{j=1}^K$, where $\psi_j(\bfh)= \bfh \sigma^\prime (\theta_j^\top\bfh)$.  From the assumption in Lemma \ref{Lemma: Zhong}, we can justify that $\mathbb{E}_{\bfh\sim\mathcal{D}} |\psi_j(\bfh)|^2\le \mathbb{E}_{\bfh\sim\mathcal{D}} |\bfh|^2< \infty$.

    Let $\mathcal{X}_i=\{\bfh\mid \theta_i^\top\bfh =0 \}$ for any $i\in [K]$.
    For any fixed $k$, we can justify that $\mathcal{X}_k$ cannot be covered by other sets $\{\mathcal{X}_k\}_{j\neq k}$ as long as $\theta_k$ does not parallel to any other weights $\theta_j$ with $j\neq k$. Namely, $\mathcal{X}_k \not\subset \cup_{j\neq k}\mathcal{X}_j$. The idea of proving the claim above is that the intersection of $\mathcal{X}_j$ and $\mathcal{X}_k$ is only a hyperplane in $\mathcal{X}_k$. The union of finite many hyperplanes is not even a measurable space and thus cannot cover the original space. Formally, we provide the formal proof for this claim as follows.

    Let $\lambda$ be the Lebesgue measure on $\mathcal{X}_k$, then $\lambda(\mathcal{X}_k)>0$. When $\theta_j$ does not  parallel to $\theta_k$, $\mathcal{X}_k \cap \mathcal{X}_j$ is only a hyperplane in $\mathcal{X}_k$ for $j\neq k$. Hence, we have $\lambda(\mathcal{X}_j\cap \mathcal{X}_k)=0$. Next, we have 
    \begin{equation}
        \lambda\big(\mathcal{X}_k \cap (\cup_{j\neq k} \mathcal{X}_k)\big) 
        \le \sum_{j\neq k} \lambda(\mathcal{X}_k \cap \mathcal{X}_j) =0.
    \end{equation}
    Therefore, we have 
    \begin{equation}
        \lambda\big(\mathcal{X}_k /(\cup_{j\neq k} \mathcal{X}_k)\big)
        =\lambda(\mathcal{X}_k) - \lambda\big(\mathcal{X}_k \cap(\cup_{j\neq k} \mathcal{X}_k)\big) = \lambda(\mathcal{X}_k)>0.
    \end{equation}
    Therefore, we have $\mathcal{X}_k /(\cup_{j\neq k} \mathcal{X}_j)$ is not empty, which means that $\mathcal{X}_k \not\subset \cup_{j\neq k} \mathcal{X}_j$.

    Next, Since $\mathcal{X}_k /(\cup_{j\neq k} \mathcal{X}_j)$ is not an empty set, there exists a point $\bfz_k\in \mathcal{X}_k /(\cup_{j\neq k} \mathcal{X}_j)$ and $r_0>0$ such that 
    \begin{equation}
        \mathcal{B}(\bfz_k, r) \cap \mathcal{D}_j = \emptyset \quad \textit{with} \quad \forall r\le r_0 \textit{~and~} j\neq k,
    \end{equation}
    where $\mathcal{B}(\bfz_k, r)$ stands for a ball centered at $\bfz_k$ with a radius of $r$. Then, we divide $\mathcal{B}(\bfz_k,r)$ into two disjoint subsets such that 
    \begin{equation}
    \begin{gathered}
        \mathcal{B}_r^+ = \mathcal{B}(\bfz_k , r) \cap \{\bfh \mid \theta_k^\top \bfh >0 \},\\
        \mathcal{B}_r^- = \mathcal{B}(\bfz_k , r) \cap \{\bfh \mid \theta_k^\top \bfh <0 \}.
    \end{gathered}
    \end{equation}
    Because $\bfz_k$ is a boundary point of $\{\bfh|\theta_k^\top\bfh=0\}$, both $\mathcal{B}_r^+$ and
    $\mathcal{B}_r^-$ are non-empty.

    Note that $\psi_j(\bfh)$ is continuous at any point except for the ones in $\mathcal{X}_j$.
    Then, for any $j\neq k$, we know that $\sigma_j(\theta_k^\top\bfh)$ is continuous at point $\bfz_k$ since $\bfz_k\not\in \mathcal{X}_j$. Hence, it is easy to verify that 
    \begin{equation}\label{eqn: jneqk}
        \lim_{r\rightarrow 0_+} \frac{1}{\lambda(\mathcal{B}_r^+)}{\int_{\mathcal{B}_r^+} \psi_k(\bfh)d\bfh}
        =\lim_{r\rightarrow 0_-} \frac{1}{\lambda(\mathcal{B}_r^-)}{\int_{\mathcal{B}_r^+} \psi_k(\bfh)d\bfh} = \psi_k(\bfz_k).
    \end{equation}
    While for $\psi_k$, we know that  $\psi_k(\bfh) \equiv 0$ for $\bfh\in\mathcal{B}_r^-$, (ii) $\psi_k(\bfh) = \bfh$ for $\bfh\in\mathcal{B}_r^+$.
    Hence, it is easy to verify that 
    \begin{equation}\label{eqn: jeqk}
        \begin{gathered}
            \lim_{r\rightarrow 0_+} \frac{1}{\lambda(\mathcal{B}_r^+)}{\int_{\mathcal{B}_r^+} \psi_k(\bfh)d\bfh} = \bfz_k
        \\
        \lim_{r\rightarrow 0_-} \frac{1}{\lambda(\mathcal{B}_r^-)}{\int_{\mathcal{B}_r^+} \psi_k(\bfh)d\bfh} = 0.
        \end{gathered}
    \end{equation}
    Now let us proof that $\{\psi_j\}_{j=1}^K$ are linear independent by contradiction.  Suppose $\{\psi_j\}_{j=1}^K$ are linear dependent, we have 
    \begin{equation}\label{eqn: 5_2}
        \sum_{j=1}^K \alpha_j\psi_j(\bfh) \equiv 0, \quad \forall \bfh.  
    \end{equation}
    Then, we have 
    \begin{equation}
        \begin{gathered}
            \lim_{r\rightarrow 0_+} \frac{1}{\lambda(\mathcal{B}_r^+)}{\int_{\mathcal{B}_r^+} \sum_{j=1}^K\alpha_j\psi_j(\bfh)d\bfh}=0\\
            \lim_{r\rightarrow 0_+} \frac{1}{\lambda(\mathcal{B}_r^-)}{\int_{\mathcal{B}_r^+} \sum_{j=1}^K\alpha_j\psi_j(\bfh)d\bfh}=0
        \end{gathered}
    \end{equation}
    Then, we have 
    \begin{equation}\label{eqn: 5_1}
    \begin{split}
       0 =& 
        \lim_{r\rightarrow 0_+} \frac{1}{\lambda(\mathcal{B}_r^+)}{\int_{\mathcal{B}_r^+} \sum_{j=1}^K\alpha_j\psi_j(\bfh)d\bfh} -
            \lim_{r\rightarrow 0_+} \frac{1}{\lambda(\mathcal{B}_r^-)}{\int_{\mathcal{B}_r^+} \sum_{j=1}^K\alpha_j\psi_j(\bfh)d\bfh}\\
        = & \alpha_k \bfz_k
    \end{split}
    \end{equation}
    where the last equality comes from \eqref{eqn: jneqk} and \eqref{eqn: jeqk}. 
    
    Note that $\bfz_k$ cannot be $\bfzero$ because $\bfz_k \not\in \mathcal{X}_j$. Therefore, we have $\alpha_k =0$. Similarly to \eqref{eqn: 5_1}, we can obtain that $\alpha_j =0 $ by define $\bfz_j$ following the definition of $\bfz_k$ for any $j\in[K]$. Then, we know that \eqref{eqn: 5_2} holds if and only if $\bfal =\bfzero$, which contradicts the assumption that $\{\psi_j\}_{j=1}^K$ are linear dependent. 

    In conclusion, we know that $\{\psi_j\}_{j=1}^K$ are linear independent, and $\int_{\mathcal{R}} \Big(\sum_{j=1}^K \bfal^\top\bfh \sigma^{\prime}(\theta_{\ell,j}^{\top}\bfh) \Big)^2 p_{H}(\bfh)\cdot  d \bfh$ is strictly greater than zero. 
 \end{proof}

\subsection{Proof of Lemma \ref{Lemma: h_bound}}
\begin{proof}[Proof of Lemma \ref{Lemma: h_bound}]
    From the definition of \eqref{eqn: defi_h}, we have 
    \begin{equation}\label{eqn: 10_1}
        \begin{split}
            &\|\bfh^{(\ell)}(\theta) -\bfh^{(\ell)}(\theta^\star)\|_2\\
            =& \|\sigma\big( \theta_{\ell-1}^\top \bfh^{({\ell-1})}(\theta) \big) - \sigma\big( \theta_{\ell-1}^{\star\top} \bfh^{({\ell-1})}(\theta^\star) \big)\|_2\\
            = & \|\sigma\big( \theta_{\ell-1}^\top \bfh^{({\ell-1})}(\theta) \big) - \sigma\big( \theta_{\ell-1}^{\star\top} \bfh^{({\ell-1})}(\theta) \big)
            + \sigma\big( \theta_{\ell-1}^{\star\top} \bfh^{({\ell-1})}(\theta) \big) - \sigma\big( \theta_{\ell-1}^{\star\top} \bfh^{({\ell-1})}(\theta^\star) \big)\|_2\\
            \le& \|\sigma\big( \theta_{\ell-1}^\top \bfh^{({\ell-1})}(\theta) \big) - \sigma\big( \theta_{\ell-1}^{\star\top} \bfh^{({\ell-1})}(\theta) \big)\|_2
            + \|\sigma\big( \theta_{\ell-1}^{\star\top} \bfh^{({\ell-1})}(\theta) \big) - \sigma\big( \theta_{\ell-1}^{\star\top} \bfh^{({\ell-1})}(\theta^\star) \big)\|_2\\
            \le &\|\theta_{\ell-1} - \theta_{\ell-1}^\star\|_2\cdot \|\bfh^{({\ell-1})}(\theta)\|_2 
            +  \|\bfh^{(\ell-1)}(\theta)- \bfh^{(\ell-1)}(\theta^\star)\|_2.
        \end{split}
    \end{equation}
    With the assumption in the Lemma \ref{Lemma: h_bound} such that $\theta$ is close enough to $\theta^\star$, we have 
    \begin{equation}
        \|\theta_{i}\|_2 \le \|\theta_{i}^\star\|_2 + \|\theta_{i}-\theta_i^\star\|_2 \lesssim 1.
    \end{equation}
    Therefore, we have 
    \begin{equation}
        \|\bfh^{(i)}(\theta)\|_2 \le \|\theta_i\|_2\cdots \|\theta_1\|_2 \cdot \|\bfx\|_2 \lesssim \|\bfx\|_2.
    \end{equation}
    
    Then, we have 
    \begin{equation}
        \begin{split}
            &\|\bfh^{(\ell)}(\theta) -\bfh^{(\ell)}(\theta^\star)\|_2\\
        \le &\|\theta_{\ell-1} - \theta_{\ell-1}^\star\|_2\cdot \|\bfx\|_2 
            +  \|\bfh^{(\ell-1)}(\theta)- \bfh^{(\ell-1)}(\theta^\star)\|_2\\
        \le & \sum_{i=1}^{\ell-1}\|\theta_{i} - \theta_{i}^\star\|_2\cdot \|\bfx\|_2  
            +  \|\bfh^{(1)}(\theta)- \bfh^{(1)}(\theta^\star)\|_2\\
        = &    \sum_{i=1}^{\ell-1}\|\theta_{i} - \theta_{i}^\star\|_2\cdot \|\bfx\|_2  
            +  \|\bfx - \bfx\|_2\\
            = &    \sum_{i=1}^{\ell-1}\|\theta_{i} - \theta_{i}^\star\|_2\cdot \|\bfh^{({i-1})}(\theta)\|_2\\
            \le & \|\theta -\theta^\star \|_2 \cdot \|\bfx\|_2,
        \end{split}
    \end{equation}
    which completes the proof.
\end{proof}


\begin{thebibliography}{61}
\providecommand{\natexlab}[1]{#1}
\providecommand{\url}[1]{\texttt{#1}}
\expandafter\ifx\csname urlstyle\endcsname\relax
  \providecommand{\doi}[1]{doi: #1}\else
  \providecommand{\doi}{doi: \begingroup \urlstyle{rm}\Url}\fi

\bibitem[Allen-Zhu \& Li(2022)Allen-Zhu and Li]{AL22}
Allen-Zhu, Z. and Li, Y.
\newblock Feature purification: How adversarial training performs robust deep
  learning.
\newblock In \emph{2021 IEEE 62nd Annual Symposium on Foundations of Computer
  Science (FOCS)}, pp.\  977--988. IEEE, 2022.

\bibitem[Bacon et~al.(2017)Bacon, Harb, and Precup]{bacon2017option}
Bacon, P.-L., Harb, J., and Precup, D.
\newblock The option-critic architecture.
\newblock In \emph{Proceedings of the AAAI conference on artificial
  intelligence}, volume~31, 2017.

\bibitem[Bakshi et~al.(2019)Bakshi, Jayaram, and Woodruff]{BJW19}
Bakshi, A., Jayaram, R., and Woodruff, D.~P.
\newblock Learning two layer rectified neural networks in polynomial time.
\newblock In \emph{Conference on Learning Theory}, pp.\  195--268. PMLR, 2019.

\bibitem[Barreto et~al.(2017)Barreto, Dabney, Munos, Hunt, Schaul, van Hasselt,
  and Silver]{barreto2017successor}
Barreto, A., Dabney, W., Munos, R., Hunt, J.~J., Schaul, T., van Hasselt,
  H.~P., and Silver, D.
\newblock Successor features for transfer in reinforcement learning.
\newblock \emph{Advances in neural information processing systems}, 30, 2017.

\bibitem[Barreto et~al.(2018)Barreto, Borsa, Quan, Schaul, Silver, Hessel,
  Mankowitz, Zidek, and Munos]{barreto2018transfer}
Barreto, A., Borsa, D., Quan, J., Schaul, T., Silver, D., Hessel, M.,
  Mankowitz, D., Zidek, A., and Munos, R.
\newblock Transfer in deep reinforcement learning using successor features and
  generalised policy improvement.
\newblock In \emph{International Conference on Machine Learning}, pp.\
  501--510. PMLR, 2018.

\bibitem[Bertsekas \& Tsitsiklis(1996)Bertsekas and
  Tsitsiklis]{bertsekas1996neuro}
Bertsekas, D. and Tsitsiklis, J.~N.
\newblock \emph{Neuro-dynamic programming}.
\newblock Athena Scientific, 1996.

\bibitem[Bhandari et~al.(2018)Bhandari, Russo, and Singal]{BRS18}
Bhandari, J., Russo, D., and Singal, R.
\newblock A finite time analysis of temporal difference learning with linear
  function approximation.
\newblock In \emph{Conference on learning theory}, pp.\  1691--1692. PMLR,
  2018.

\bibitem[Bhatia(2013)]{B97}
Bhatia, R.
\newblock \emph{Matrix analysis}, volume 169.
\newblock Springer Science \& Business Media, 2013.

\bibitem[Brutzkus \& Globerson(2021)Brutzkus and Globerson]{BG21}
Brutzkus, A. and Globerson, A.
\newblock An optimization and generalization analysis for max-pooling networks.
\newblock In \emph{Uncertainty in Artificial Intelligence}, pp.\  1650--1660.
  PMLR, 2021.

\bibitem[Cai et~al.(2019)Cai, Yang, Lee, and Wang]{CYLW19}
Cai, Q., Yang, Z., Lee, J.~D., and Wang, Z.
\newblock Neural temporal-difference learning converges to global optima.
\newblock \emph{Advances in Neural Information Processing Systems}, 32, 2019.

\bibitem[Chowdhury et~al.(2023)Chowdhury, Zhang, Wang, Liu, and
  Chen]{chowdhury2023patch}
Chowdhury, M. N.~R., Zhang, S., Wang, M., Liu, S., and Chen, P.-Y.
\newblock Patch-level routing in mixture-of-experts is provably
  sample-efficient for convolutional neural networks.
\newblock In \emph{International Conference on Machine Learning}, pp.\
  6074--6114. PMLR, 2023.

\bibitem[Coronato et~al.(2020)Coronato, Naeem, De~Pietro, and
  Paragliola]{CNDP20}
Coronato, A., Naeem, M., De~Pietro, G., and Paragliola, G.
\newblock Reinforcement learning for intelligent healthcare applications: A
  survey.
\newblock \emph{Artificial Intelligence in Medicine}, 109:\penalty0 101964,
  2020.

\bibitem[Dayan(1993)]{dayan1993improving}
Dayan, P.
\newblock Improving generalization for temporal difference learning: The
  successor representation.
\newblock \emph{Neural computation}, 5\penalty0 (4):\penalty0 613--624, 1993.

\bibitem[Dong et~al.(2021)Dong, Yang, and Ma]{DYM21}
Dong, K., Yang, J., and Ma, T.
\newblock Provable model-based nonlinear bandit and reinforcement learning:
  Shelve optimism, embrace virtual curvature.
\newblock \emph{Advances in Neural Information Processing Systems},
  34:\penalty0 26168--26182, 2021.

\bibitem[Du et~al.(2018)Du, Zhai, Poczos, and Singh]{DZPS18}
Du, S.~S., Zhai, X., Poczos, B., and Singh, A.
\newblock Gradient descent provably optimizes over-parameterized neural
  networks.
\newblock In \emph{International Conference on Learning Representations}, 2018.

\bibitem[Du et~al.(2019)Du, Zhai, Poczos, and Singh]{DZPS19}
Du, S.~S., Zhai, X., Poczos, B., and Singh, A.
\newblock Gradient descent provably optimizes over-parameterized neural
  networks.
\newblock In \emph{International Conference on Learning Representations}, 2019.
\newblock URL \url{https://openreview.net/forum?id=S1eK3i09YQ}.

\bibitem[Du et~al.(2020)Du, Lee, Mahajan, and Wang]{DLMW20}
Du, S.~S., Lee, J.~D., Mahajan, G., and Wang, R.
\newblock Agnostic $ q $-learning with function approximation in deterministic
  systems: Near-optimal bounds on approximation error and sample complexity.
\newblock \emph{Advances in Neural Information Processing Systems},
  33:\penalty0 22327--22337, 2020.

\bibitem[Fan et~al.(2020)Fan, Wang, Xie, and Yang]{FWXY20}
Fan, J., Wang, Z., Xie, Y., and Yang, Z.
\newblock A theoretical analysis of deep q-learning.
\newblock In \emph{Learning for Dynamics and Control}, pp.\  486--489. PMLR,
  2020.

\bibitem[Ge et~al.(2018)Ge, Lee, and Ma]{GLM17}
Ge, R., Lee, J.~D., and Ma, T.
\newblock Learning one-hidden-layer neural networks with landscape design.
\newblock In \emph{International Conference on Learning Representations}, 2018.
\newblock URL \url{https://openreview.net/forum?id=BkwHObbRZ}.

\bibitem[Jacot et~al.(2018)Jacot, Gabriel, and Hongler]{JGH18}
Jacot, A., Gabriel, F., and Hongler, C.
\newblock Neural tangent kernel: Convergence and generalization in neural
  networks.
\newblock In \emph{Proceedings of the 32nd International Conference on Neural
  Information Processing Systems}, 2018.

\bibitem[Ji et~al.(2022)Ji, Chen, Wang, and Zhao]{JCWZ22}
Ji, X., Chen, M., Wang, M., and Zhao, T.
\newblock Sample complexity of nonparametric off-policy evaluation on
  low-dimensional manifolds using deep networks.
\newblock \emph{arXiv preprint arXiv:2206.02887}, 2022.

\bibitem[Jiang et~al.(2017)Jiang, Krishnamurthy, Agarwal, Langford, and
  Schapire]{JKALS17}
Jiang, N., Krishnamurthy, A., Agarwal, A., Langford, J., and Schapire, R.~E.
\newblock Contextual decision processes with low bellman rank are
  pac-learnable.
\newblock In \emph{International Conference on Machine Learning}, pp.\
  1704--1713. PMLR, 2017.

\bibitem[Kalashnikov et~al.(2018)Kalashnikov, Irpan, Pastor, Ibarz, Herzog,
  Jang, Quillen, Holly, Kalakrishnan, Vanhoucke, et~al.]{KIPI18}
Kalashnikov, D., Irpan, A., Pastor, P., Ibarz, J., Herzog, A., Jang, E.,
  Quillen, D., Holly, E., Kalakrishnan, M., Vanhoucke, V., et~al.
\newblock Scalable deep reinforcement learning for vision-based robotic
  manipulation.
\newblock In \emph{Conference on Robot Learning}, pp.\  651--673. PMLR, 2018.

\bibitem[Karp et~al.(2021)Karp, Winston, Li, and Singh]{KWLS21}
Karp, S., Winston, E., Li, Y., and Singh, A.
\newblock Local signal adaptivity: Provable feature learning in neural networks
  beyond kernels.
\newblock \emph{Advances in Neural Information Processing Systems},
  34:\penalty0 24883--24897, 2021.

\bibitem[Kulkarni et~al.(2016{\natexlab{a}})Kulkarni, Narasimhan, Saeedi, and
  Tenenbaum]{kulkarni2016hierarchical}
Kulkarni, T.~D., Narasimhan, K., Saeedi, A., and Tenenbaum, J.
\newblock Hierarchical deep reinforcement learning: Integrating temporal
  abstraction and intrinsic motivation.
\newblock \emph{Advances in neural information processing systems}, 29,
  2016{\natexlab{a}}.

\bibitem[Kulkarni et~al.(2016{\natexlab{b}})Kulkarni, Saeedi, Gautam, and
  Gershman]{kulkarni2016deep}
Kulkarni, T.~D., Saeedi, A., Gautam, S., and Gershman, S.~J.
\newblock Deep successor reinforcement learning.
\newblock \emph{arXiv preprint arXiv:1606.02396}, 2016{\natexlab{b}}.

\bibitem[Lazaric(2012)]{lazaric2012transfer}
Lazaric, A.
\newblock Transfer in reinforcement learning: a framework and a survey.
\newblock In \emph{Reinforcement Learning: State-of-the-Art}, pp.\  143--173.
  Springer, 2012.

\bibitem[Lee et~al.(2018)Lee, Bahri, Novak, Schoenholz, Pennington, and
  Sohl-Dickstein]{LBNSSPS18}
Lee, J., Bahri, Y., Novak, R., Schoenholz, S.~S., Pennington, J., and
  Sohl-Dickstein, J.
\newblock Deep neural networks as gaussian processes.
\newblock In \emph{International Conference on Learning Representations}, 2018.

\bibitem[Li et~al.(2023)Li, Wang, Liu, and Chen]{LWLC23}
Li, H., Wang, M., Liu, S., and Chen, P.-Y.
\newblock A theoretical understanding of vision transformers: Learning,
  generalization, and sample complexity.
\newblock In \emph{International Conference on Learning Representations}, 2023.
\newblock URL \url{https://openreview.net/forum?id=jClGv3Qjhb}.

\bibitem[Li \& Liang(2018)Li and Liang]{LL18}
Li, Y. and Liang, Y.
\newblock Learning overparameterized neural networks via stochastic gradient
  descent on structured data.
\newblock In \emph{Advances in Neural Information Processing Systems}, pp.\
  8157--8166, 2018.

\bibitem[Liu et~al.(2022)Liu, Viano, and Cevher]{LVC22}
Liu, F., Viano, L., and Cevher, V.
\newblock Understanding deep neural function approximation in reinforcement
  learning via $\epsilon$-greedy exploration.
\newblock \emph{arXiv preprint arXiv:2209.07376}, 2022.

\bibitem[Mitrophanov(2005)]{Yu05}
Mitrophanov, A.~Y.
\newblock Sensitivity and convergence of uniformly ergodic markov chains.
\newblock \emph{Journal of Applied Probability}, 42\penalty0 (4):\penalty0
  1003--1014, 2005.

\bibitem[Mnih et~al.(2013)Mnih, Kavukcuoglu, Silver, Graves, Antonoglou,
  Wierstra, and Riedmiller]{mnih2013playing}
Mnih, V., Kavukcuoglu, K., Silver, D., Graves, A., Antonoglou, I., Wierstra,
  D., and Riedmiller, M.
\newblock Playing atari with deep reinforcement learning.
\newblock \emph{arXiv preprint arXiv:1312.5602}, 2013.

\bibitem[Nguyen-Tang et~al.(2022)Nguyen-Tang, Gupta, Tran-The, and
  Venkatesh]{NGV22}
Nguyen-Tang, T., Gupta, S., Tran-The, H., and Venkatesh, S.
\newblock On sample complexity of offline reinforcement learning with deep
  re{LU} networks in besov spaces.
\newblock \emph{Transactions on Machine Learning Research}, 2022.
\newblock URL \url{https://openreview.net/forum?id=LdEm0umNcv}.

\bibitem[Russo \& Van~Roy(2013)Russo and Van~Roy]{RV13}
Russo, D. and Van~Roy, B.
\newblock Eluder dimension and the sample complexity of optimistic exploration.
\newblock \emph{Advances in Neural Information Processing Systems}, 26, 2013.

\bibitem[Safran \& Shamir(2018)Safran and Shamir]{SS18}
Safran, I. and Shamir, O.
\newblock Spurious local minima are common in two-layer relu neural networks.
\newblock In \emph{International Conference on Machine Learning}, pp.\
  4430--4438, 2018.

\bibitem[Schwarting et~al.(2018)Schwarting, Alonso-Mora, and Rus]{SAR18}
Schwarting, W., Alonso-Mora, J., and Rus, D.
\newblock Planning and decision-making for autonomous vehicles.
\newblock \emph{Annual Review of Control, Robotics, and Autonomous Systems},
  1:\penalty0 187--210, 2018.

\bibitem[Shalev-Shwartz et~al.(2016)Shalev-Shwartz, Shammah, and
  Shashua]{SSS16}
Shalev-Shwartz, S., Shammah, S., and Shashua, A.
\newblock Safe, multi-agent, reinforcement learning for autonomous driving.
\newblock \emph{arXiv preprint arXiv:1610.03295}, 2016.

\bibitem[Shi et~al.(2022)Shi, Wei, and Liang]{SWL22}
Shi, Z., Wei, J., and Liang, Y.
\newblock A theoretical analysis on feature learning in neural networks:
  Emergence from inputs and advantage over fixed features.
\newblock In \emph{International Conference on Learning Representations}, 2022.

\bibitem[Silver et~al.(2017)Silver, Hubert, Schrittwieser, Antonoglou, Lai,
  Guez, Lanctot, Sifre, Kumaran, Graepel, et~al.]{silver2017mastering}
Silver, D., Hubert, T., Schrittwieser, J., Antonoglou, I., Lai, M., Guez, A.,
  Lanctot, M., Sifre, L., Kumaran, D., Graepel, T., et~al.
\newblock Mastering chess and shogi by self-play with a general reinforcement
  learning algorithm.
\newblock \emph{arXiv preprint arXiv:1712.01815}, 2017.

\bibitem[Soltanolkotabi et~al.(2018)Soltanolkotabi, Javanmard, and Lee]{SJL18}
Soltanolkotabi, M., Javanmard, A., and Lee, J.~D.
\newblock Theoretical insights into the optimization landscape of
  over-parameterized shallow neural networks.
\newblock \emph{IEEE Transactions on Information Theory}, 65\penalty0
  (2):\penalty0 742--769, 2018.

\bibitem[Sutton \& Barto(2018)Sutton and Barto]{su18}
Sutton, R.~S. and Barto, A.~G.
\newblock \emph{Reinforcement learning: An introduction}.
\newblock MIT press, 2018.

\bibitem[Sutton et~al.(1999)Sutton, Precup, and Singh]{sutton1999between}
Sutton, R.~S., Precup, D., and Singh, S.
\newblock Between mdps and semi-mdps: A framework for temporal abstraction in
  reinforcement learning.
\newblock \emph{Artificial intelligence}, 112\penalty0 (1-2):\penalty0
  181--211, 1999.

\bibitem[Suzuki(2019)]{Su19}
Suzuki, T.
\newblock Adaptivity of deep re{LU} network for learning in besov and mixed
  smooth besov spaces: optimal rate and curse of dimensionality.
\newblock In \emph{International Conference on Learning Representations}, 2019.
\newblock URL \url{https://openreview.net/forum?id=H1ebTsActm}.

\bibitem[Taylor \& Stone(2009)Taylor and Stone]{taylor2009transfer}
Taylor, M.~E. and Stone, P.
\newblock Transfer learning for reinforcement learning domains: A survey.
\newblock \emph{Journal of Machine Learning Research}, 10\penalty0 (7), 2009.

\bibitem[Tenney et~al.(2018)Tenney, Xia, Chen, Wang, Poliak, McCoy, Kim,
  Van~Durme, Bowman, Das, et~al.]{TXCW18}
Tenney, I., Xia, P., Chen, B., Wang, A., Poliak, A., McCoy, R.~T., Kim, N.,
  Van~Durme, B., Bowman, S.~R., Das, D., et~al.
\newblock What do you learn from context? probing for sentence structure in
  contextualized word representations.
\newblock In \emph{International Conference on Learning Representations}, 2018.

\bibitem[Tropp(2012)]{T12}
Tropp, J.~A.
\newblock User-friendly tail bounds for sums of random matrices.
\newblock \emph{Foundations of computational mathematics}, 12\penalty0
  (4):\penalty0 389--434, 2012.

\bibitem[Vershynin(2010)]{V2010}
Vershynin, R.
\newblock Introduction to the non-asymptotic analysis of random matrices.
\newblock \emph{arXiv preprint arXiv:1011.3027}, 2010.

\bibitem[Watkins \& Dayan(1992)Watkins and Dayan]{Qlearning}
Watkins, C.~J. and Dayan, P.
\newblock Q-learning.
\newblock \emph{Machine learning}, 8\penalty0 (3):\penalty0 279--292, 1992.

\bibitem[Wen \& Li(2021)Wen and Li]{WL21}
Wen, Z. and Li, Y.
\newblock Toward understanding the feature learning process of self-supervised
  contrastive learning.
\newblock In \emph{International Conference on Machine Learning}, pp.\
  11112--11122. PMLR, 2021.

\bibitem[Xu \& Gu(2020)Xu and Gu]{XG20}
Xu, P. and Gu, Q.
\newblock A finite-time analysis of q-learning with neural network function
  approximation.
\newblock In \emph{International Conference on Machine Learning}, pp.\
  10555--10565. PMLR, 2020.

\bibitem[Yang et~al.(2020)Yang, Jin, Wang, Wang, and Jordan]{YJWWM20}
Yang, Z., Jin, C., Wang, Z., Wang, M., and Jordan, M.~I.
\newblock On function approximation in reinforcement learning: optimism in the
  face of large state spaces.
\newblock In \emph{Proceedings of the 34th International Conference on Neural
  Information Processing Systems}, pp.\  13903--13916, 2020.

\bibitem[Yu et~al.(2020)Yu, Quillen, He, Julian, Hausman, Finn, and
  Levine]{yu2020meta}
Yu, T., Quillen, D., He, Z., Julian, R., Hausman, K., Finn, C., and Levine, S.
\newblock Meta-world: A benchmark and evaluation for multi-task and meta
  reinforcement learning.
\newblock In \emph{Conference on robot learning}, pp.\  1094--1100. PMLR, 2020.

\bibitem[Zhang et~al.(2017)Zhang, Springenberg, Boedecker, and
  Burgard]{zhang2017deep}
Zhang, J., Springenberg, J.~T., Boedecker, J., and Burgard, W.
\newblock Deep reinforcement learning with successor features for navigation
  across similar environments.
\newblock In \emph{2017 IEEE/RSJ International Conference on Intelligent Robots
  and Systems (IROS)}, pp.\  2371--2378. IEEE, 2017.

\bibitem[Zhang et~al.(2020)Zhang, Wang, Xiong, Liu, and Chen]{ZWXL20}
Zhang, S., Wang, M., Xiong, J., Liu, S., and Chen, P.-Y.
\newblock Improved linear convergence of training cnns with generalizability
  guarantees: A one-hidden-layer case.
\newblock \emph{IEEE Transactions on Neural Networks and Learning Systems},
  2020.

\bibitem[Zhang et~al.(2022)Zhang, Wang, Liu, Chen, and Xiong]{ZWLCX22}
Zhang, S., Wang, M., Liu, S., Chen, P.-Y., and Xiong, J.
\newblock How unlabeled data improve generalization in self-training? a
  one-hidden-layer theoretical analysis.
\newblock In \emph{International Conference on Learning Representations}, 2022.

\bibitem[Zhang et~al.(2023{\natexlab{a}})Zhang, Li, Wang, Liu, Chen, Lu, Liu,
  Murugesan, and Chaudhury]{zhang2024convergence}
Zhang, S., Li, H., Wang, M., Liu, M., Chen, P.-Y., Lu, S., Liu, S., Murugesan,
  K., and Chaudhury, S.
\newblock On the convergence and sample complexity analysis of deep q-networks
  with epsilon-greedy exploration.
\newblock \emph{Advances in Neural Information Processing Systems}, 36,
  2023{\natexlab{a}}.

\bibitem[Zhang et~al.(2023{\natexlab{b}})Zhang, Wang, Chen, Liu, Lu, and
  Liu]{ZWCLL23}
Zhang, S., Wang, M., Chen, P.-Y., Liu, S., Lu, S., and Liu, M.
\newblock Joint edge-model sparse learning is provably efficient for graph
  neural networks.
\newblock \emph{The Eleventh International Conference on Learning
  Representations}, 2023{\natexlab{b}}.

\bibitem[Zhong et~al.(2017)Zhong, Song, Jain, Bartlett, and Dhillon]{ZSJB17}
Zhong, K., Song, Z., Jain, P., Bartlett, P.~L., and Dhillon, I.~S.
\newblock Recovery guarantees for one-hidden-layer neural networks.
\newblock In \emph{Proceedings of the 34th International Conference on Machine
  Learning-Volume 70}, pp.\  4140--4149. JMLR. org,
  https://arxiv.org/abs/1706.03175, 2017.

\bibitem[Zhu et~al.(2023)Zhu, Lin, Jain, and Zhou]{zhu2023transfer}
Zhu, Z., Lin, K., Jain, A.~K., and Zhou, J.
\newblock Transfer learning in deep reinforcement learning: A survey.
\newblock \emph{IEEE Transactions on Pattern Analysis and Machine
  Intelligence}, 2023.

\bibitem[Zou et~al.(2019)Zou, Xu, and Liang]{ZWL19}
Zou, S., Xu, T., and Liang, Y.
\newblock Finite-sample analysis for sarsa with linear function approximation.
\newblock \emph{Advances in Neural Information Processing Systems}, 32, 2019.

\end{thebibliography}
\end{document}